\title{Efficient Spectral Control of \\ Partially Observed Linear Dynamical Systems}
\author{%
  Anand Brahmbhatt$^{1}$\thanks{Authors ordered alphabetically. Emails: \texttt{\{ab7728,gon.buzaglo,sd0937,ehazan\}@princeton.edu}} \quad
  Gon Buzaglo$^{1}$\footnotemark[1] \quad
  Sofiia Druchyna$^{1}$\footnotemark[1] \quad
  Elad Hazan$^{1,2}$\footnotemark[1] \\
  \\
  $^1$Computer Science Department, Princeton University \\
  $^2$Google DeepMind Princeton \\
}
\def\K{{\mathcal K}}
\def\reals{{\mathbb R}}
\newcommand{\norm}[1]{\left\lVert#1\right\rVert}
\newcommand{\ignore}[1]{}
\def\reals{{\mathbb R}}
\def\bold0{\mathbf{0}}
\def\bu{\mathbf{u}}
\def\bx{\mathbf{x}}
\def\by{\mathbf{y}}
\def\bz{\mathbf{z}}
\def\bu{\mathbf{u}}
\def\bw{\mathbf{w}}
\newcommand{\eps}{\varepsilon}
\newcommand{\x}{\ensuremath{\mathbf x}}
\newcommand{\y}{\ensuremath{\mathbf y}}
\newcommand{\z}{\ensuremath{\mathbf z}}
\def\bu{\mathbf{u}}
\def\bx{\mathbf{x}}
\def\by{\mathbf{y}}
\def\bz{\mathbf{z}}
\def\bs{\mathbf{s}}
\def\nat{{\sf nat}}
\def\eps{\varepsilon}
\def\epsilon{\varepsilon}
\newtheorem{theorem}{Theorem}[section]
\newtheorem{lemma}[theorem]{Lemma}
\newtheorem{corollary}[theorem]{Corollary}
\newtheorem{definition}[theorem]{Definition}
\newtheorem{assumption}[theorem]{Assumption}
\newcommand{\newreptheorem}[2]{%
\newenvironment{rep#1}[1]{%
 \def\rep@title{#2 \ref{##1}}%
 \begin{rep@theorem}}%
 {\end{rep@theorem}}}
\newcommand{\namedref}[2]{\mbox{\hyperref[#2]{#1~\ref*{#2}}}}
\newcommand{\figurerefb}[2]{\mbox{\hyperref[#1]{Figure~\ref*{#1}#2}}}
\newcommand{\equationref}[1]{\mbox{\hyperref[#1]{(\ref*{#1})}}}
\newcommand{\ang}[1]{\left<#1\right>}
\newcommand{\Return}{\textbf{return} }
\begin{document}

\maketitle


\begin{abstract}
We propose a new method for the problem of controlling linear dynamical systems under partial observation and adversarial disturbances. Our new algorithm, Double Spectral Control (DSC),  matches the best known regret guarantees while exponentially improving runtime complexity over previous approaches in its dependence on the system's stability margin. Our key innovation is a two-level spectral approximation strategy, leveraging double convolution with a universal basis of spectral filters, enabling efficient and accurate learning of the best linear dynamical controllers. 
\end{abstract}

\section{Introduction}

Control theory is a decades-old branch of applied mathematics concerned with designing systems that maintain desirable behavior over time, with applications ranging from robotics and aerospace to economics and biology. Recently, it has been adopted by the machine learning community through the lens of online learning, enabling new approaches to sequential decision making in systems with latent state and feedback.

A central model in control theory, and increasingly in reinforcement learning and sequence prediction, is the linear dynamical system (LDS), where the hidden state \(\bx_t \in \mathbb{R}^d\) evolves linearly in response to control inputs \(\bu_t \in \mathbb{R}^n\) and adversarial disturbances \(\bw_t\), and only partial observations \(\by_t \in \mathbb{R}^p\) are available:
\begin{align}\label{}
    &\bx_{t+1}= A\bx_t + B\bu_t + \bw_t\,,\nonumber \\
    &\by_t = C\bx_t\,. \label{eqn:lds_intro}
\end{align}

where \(\bu_t \in \mathbb{R}^n\) is the control input, \(\bw_t \in \mathbb{R}^d\) is an adversarial disturbance, and \(\by_t \in \mathbb{R}^p\) is a partial observation of the latent state.

At each round \(t\), the learner observes \(\by_t\), selects a control \(\bu_t\), and incurs a convex loss \(c_t(\by_t, \bu_t)\), where the loss functions \(c_t\) are chosen by an adaptive adversary. This setting extends the classical Linear Quadratic Regulator  (LQR) and Linear Quadratic Gaussian (LQG) ~\cite{kalman1960new}, which assumes known dynamics, full observation, and stochastic Gaussian noise. In contrast, our setting accounts for arbitrary time-varying convex losses, adversarial disturbances, and partial observations.
This adversarial setup with partial information falls under the framework of \emph{online nonstochastic control}, see e.g. \cite{hazan2025introductiononlinecontrol}. We focus on the setting where the system matrices \((A, B, C)\) are known and time-invariant.

\paragraph{Regret Minimization with Partial Observation.}
Since minimizing cumulative cost directly is infeasible in adversarial environments, we adopt the standard benchmark of \emph{regret}. The goal is to compete with the best fixed policy \(\pi \in \Pi\) in hindsight from a comparator class \(\Pi\). Formally, the regret of an algorithm \(\mathcal{A}\) is defined as
\begin{align*}
    \text{Regret}_T(\mathcal{A}, \Pi) = \sum_{t=1}^T c_t(\by_t^\mathcal{A}, \bu_t^\mathcal{A}) - \min_{\pi \in \Pi} \sum_{t=1}^T c_t(\by_t^\pi, \bu_t^\pi),
\end{align*}
where \((\by_t^\mathcal{A}, \bu_t^\mathcal{A})\) are the observation and control sequences induced by the algorithm, and \((\by_t^\pi, \bu_t^\pi)\) are those induced by policy \(\pi\).

We consider comparator classes \(\Pi\) consisting of \emph{linear dynamical controllers} (LDCs), the standard benchmark for optimal control under partial observation \cite{hazan2025introductiononlinecontrol}. These controllers maintain an internal linear state updated based on incoming observations, and generate control actions via a linear readout of this internal state. As such, LDCs are substantially more expressive than a linear map of the current observation. When the system dynamics and cost functions are known in advance and the disturbances are Gaussian, the optimal controller is computed by the classical \emph{Linear Quadratic Gaussian} (LQG) algorithm~\citep{kalman1960new, introtostochasticcontroltheory, boyd2009lecture10}.  However, in the more general setting that we consider in this work, directly optimizing over LDCs is nonconvex and computationally intractable. To address this, we adopt \emph{improper learning}, using a convex relaxation that enables efficient competition with the best stable LDC in hindsight.

\paragraph{Marginal Stability and Spectral Filters.}  
Linear Dynamical Controllers (LDCs) form the most expressive class of linear policies in the online control literature, capturing systems with internal memory and feedback over partial observations~\citep{hazan2025introductiononlinecontrol}. Unlike linear state-feedback or linear action controllers, LDCs can implement rich temporal dependencies and adapt to long-horizon structure—making them the natural comparator class in the partially observed setting.

However, learning or competing with general LDCs is computationally hard without further assumptions. We focus on the regime of \emph{marginal stability}, where the spectral radius of the closed-loop dynamics is at most \(1 - \gamma\) for small \(\gamma > 0\). This regime is both practically relevant—many systems are designed to retain memory over time—and analytically tractable, as it ensures geometric decay of impulse responses. To make this structure exploitable, we restrict to \((\kappa, \gamma)\)-diagonalizably stable LDCs (Definition~\ref{def:diagonalizable_LDC}), which admit well-conditioned diagonalizations and bounded responses.

Our algorithm leverages this structure by expressing the disturbance-response map of LDCs in a spectral basis. We construct a universal set of filters from the top eigenvectors of a Hankel matrix, enabling convex approximation of any such stable controller. This leads to a computationally efficient reduction to online convex optimization, with near-optimal regret and exponential improvement in runtime dependence on the stability margin \(\gamma\).

\paragraph{}

\subsection{Our Contributions}

\paragraph{New algorithm: Double Spectral Control (DSC).}  
We propose \emph{Double Spectral Control}, a novel algorithm for controlling partially observed linear dynamical systems (LDSs) with adversarial disturbances and convex losses. DSC constructs a two-level spectral approximation of the best stable linear dynamical controller (LDC): first approximating it by a long-memory open-loop controller, and then expressing that controller as a convolutional operator over the observable signal. This results in a convex parameterization over double-filtered outputs: spectral convolutions of spectral convolutions of past observations.

\paragraph{Exponential runtime improvement in terms of condition number.}  
We prove that DSC achieves the regret bound
\[
\text{Regret}_T(\text{DSC}) = \mathcal{O}\left(\frac{\sqrt{T}}{\gamma^{11}}\right),
\]
where \(\gamma\) is the closed-loop stability margin of the best diagonalizably stable LDC. Crucially, the per-step runtime is only \(\mathrm{polylog}\left(T/\gamma\right)\), exponentially improving over the polynomial dependence of prior work such as GRC~\citep{simchowitz2020improperlearningnonstochasticcontrol}.


\paragraph{Empirical Performance.}  
Preliminary empirical evaluations, presented in Appendix~\ref{app:experiments}, provide support for our theoretical findings.

\vspace{0.5em}
\begin{table}[H]
    \centering
    \renewcommand{\arraystretch}{1.2}
    \setlength{\tabcolsep}{6pt}
    \small
    \begin{adjustbox}{max width=\linewidth}
    \begin{tabular}{lccccc}
        \toprule
        \textbf{Method} & \textbf{Regret} & \textbf{Time} & \textbf{Disturbances} & \textbf{Costs} \\
        \midrule
        LQG & \( 0 \) & \(O(d^3)\) & i.i.d & Known Quadratic \\
        GRC~\citep{simchowitz2020improperlearningnonstochasticcontrol} & \(\tilde{O}\left(\text{poly}\left(\gamma^{-1}\right)\sqrt{T}\right)\) & \(\tilde{O}\left(\text{poly}\left(\gamma^{-1}\right)\log T\right)\) & Adversarial & Online Convex \\
        AdaptOn~\citep{lale2020logarithmicregretboundpartially} & \(\tilde{O}(\mathrm{polylog}(T))\) & \(O(d^3 \log T)\) & Stochastic & Strongly Convex \\
        \rowcolor{cyan!10} \textbf{DSC (this work)} & \(\tilde{O}\left(\text{poly}\left(\gamma^{-1}\right)\sqrt{T}\right)\) & \( \mathrm{polylog}\left(T/\gamma\right)\) & Adversarial & Online Convex \\
        \bottomrule
    \end{tabular}
    \end{adjustbox}
    \caption{Comparison of algorithms for controlling linear dynamical systems under partial observation. Among methods that handle adversarial disturbances and general convex costs, DSC achieves the fastest known runtime with optimal regret guarantees. For GRC and DSC, runtime depends only polylogarithmically on the hidden state dimension \(d\).}

    \label{tab:control_comparison}
\end{table}

\subsection{Related Work}

\paragraph{Control of Linear Dynamical Systems.}  
Classical control theory provides foundational tools for regulating dynamical systems under uncertainty, with early contributions such as state-space modeling~\cite{kalman1960new} and Lyapunov stability analysis~\citep{lyapunov1992general}. While optimal control methods like LQR assume full state observation and stochastic disturbances, modern applications often require adapting to adversarial inputs and partial observability. These challenges have motivated a new line of work at the intersection of control and machine learning.

\paragraph{Online Control and Adversarial Disturbances.}  
Online control extends the classical setting to environments with unknown and potentially adversarial cost functions and disturbances. The Gradient Perturbation Controller (GPC)~\citep{agarwal2019onlinecontroladversarialdisturbances} achieves sublinear regret against the best linear policy under full state observation, but its runtime scales polynomially with the inverse stability margin. Extensions to strongly convex costs~\citep{agarwal2019logarithmicregretonlinecontrol} and known quadratic losses~\citep{pmlr-v119-foster20b} have further improved regret bounds in specialized settings.

\paragraph{Partial Observation.}  
In the setting of fixed quadratic costs and gaussian noise with partial observation of the states, known as the linear–quadratic–Gaussian (LQG) control problem, the optimal solution can be derived using the estimation-control separation principle; see, for example,~\citep{boyd2009lecture10}. \citep{lale2020logarithmicregretboundpartially} study online LQ control with partial observation of the state under stochastic assumptions, which is a more limited setting. In the adversarial case, \citep{simchowitz2020improperlearningnonstochasticcontrol} introduced the Gradient Response Controller (GRC), which parameterizes policies over a latent “nature observation” signal to decouple disturbances from system dynamics. While GRC avoids explicit state estimation and achieves sublinear regret, it retains a polynomial runtime dependence on the stability margin. We propose a new method for this setting, based on a double convolution over universal spectral filters, which achieves the same regret guarantees with exponentially faster runtime.

\paragraph{Spectral Filtering.}  
Spectral filtering has been widely used for learning linear dynamical systems from sequences of observations~\citep{hazan2017learninglineardynamicalsystems}. \citep{hazan2018spectralfilteringgenerallinear} extended this approach to systems with non-symmetric dynamics, and \citep{marsden2025dimensionfreeregretlearningasymmetric} recently eliminated dimension dependence using Chebyshev approximations. These techniques have been applied primarily in the context of sequence prediction. The only prior use of spectral filtering in control is by~\citep{arora2018towards}, who applied it in the offline LQR setting with unknown dynamics.

\paragraph{Convex Relaxations in Control.}  
Improper learning has emerged as a powerful tool for bypassing the nonconvexity of optimal control, particularly when competing with rich policy classes. Early work introduced convex surrogates for strongly stable policies~\citep{cohen2018online, agarwal2019onlinecontroladversarialdisturbances}, and recent advances used spectral filtering to approximate disturbance-response maps in the fully observed setting~\citep{brahmbhatt2025newapproachcontrollinglinear}. Our work develops a new convex relaxation for partial observation, based on a two-level spectral approximation of linear dynamical controllers. Unlike in prediction~\citep{hazan2017learninglineardynamicalsystems}, where convolution with inputs can be directly evaluated using observed outputs, control lacks access to the comparator’s actions. In the full observation case, disturbances can be reconstructed and filtered; under partial observation, this is infeasible, so we instead convolve the natural observation sequence~\eqref{y_nat}, introducing additional structure and analytical challenges.

\paragraph{Broader Landscape.}  
Online control has seen applications in meta-optimization~\citep{10.5555/3666122.3667722}, mechanical ventilation~\citep{suo2022machinelearningmechanicalventilation}, and population regulation~\citep{golowich2024onlinecontrolpopulationdynamics}. Other recent directions include adaptive control for time-varying systems~\citep{minasyan2022onlinecontrolunknowntimevarying} and online control under bandit feedback~\citep{suggala2024secondordermethodsbandit}.

\paragraph{Online Convex Optimization.}  
Our approach reduces online control to online convex optimization over spectral policy parameters. For background, we refer to standard references on regret minimization~\citep{Cesa-Bianchi_Lugosi_2006,hazan2023introductiononlineconvexoptimization}.

\subsection{Paper Structure}
We now outline the structure of the rest of the paper.
\begin{itemize}[label=--]
    \item Section~\ref{sec:method} introduces the mathematical background and presents our algorithm.
    \item Section~\ref{sec:prelim} formalizes the problem setting through precise definitions and assumptions.
    \item Section~\ref{sec:analysis} presents our main theoretical result, including a regret bound and its proof.
    \item Appendix~\ref{appendix:approx} contains approximation results and technical lemmas.
    \item Appendix~\ref{app:learning} provides the online learning analysis.
    \item Appendix~\ref{app:experiments} reports preliminary empirical evaluations supporting our theoretical findings.
\end{itemize}

\section{Our Method}\label{sec:method}

We propose a convex relaxation of the online control problem with partial observation, grounded in a two-level spectral approximation of linear dynamical controllers (LDCs). The central idea is to represent the input-output behavior of a marginally stable LDC as a composition of spectral operations, enabling efficient improper learning through a compact, universal basis.

Formally, the filters used in our method are obtained from the top \(h\) eigenvectors of a fixed Hankel matrix \(H \in \mathbb{R}^{m \times m}\), defined by
\begin{align}\label{eq:H_def}
H_{ij} = \frac{(1 - \gamma)^{i + j - 1}}{i + j - 1},
\end{align}
where \(\gamma\) is a known lower bound on the system’s stability margin. These eigenvectors form a universal basis, independent of the system dynamics, cost functions, or noise realizations, and are precomputed before learning. Figure~\ref{fig:filters} shows examples of the resulting filter shapes.

\begin{figure}[H]
    \centering
    \includegraphics[width=1\textwidth]{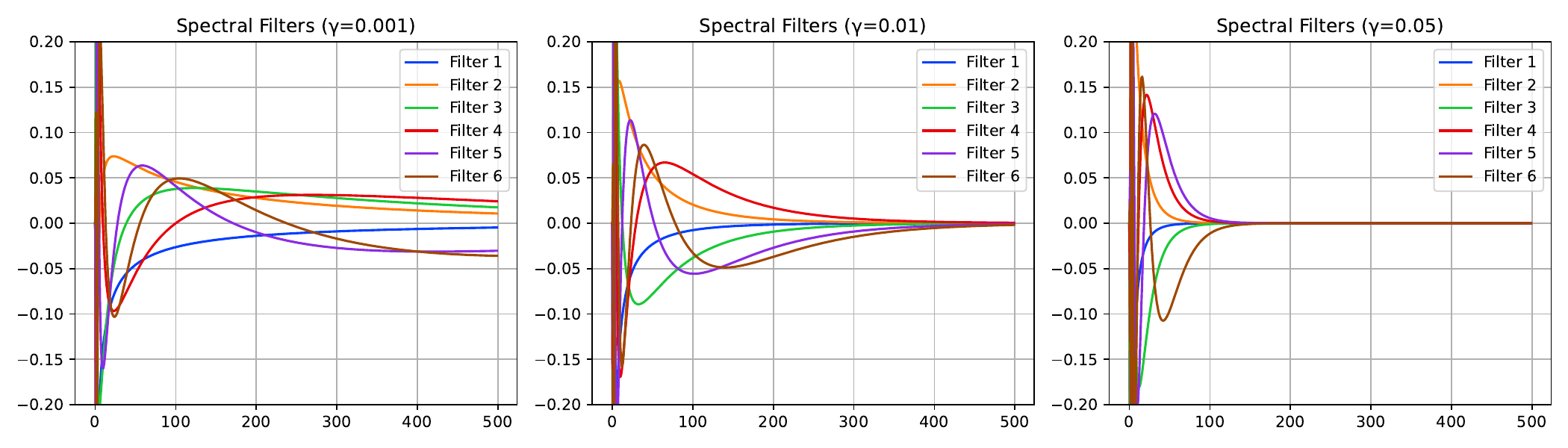}
    \caption{Entries of the first six eigenvectors of \(H_{500}\), plotted coordinate-wise.}
    \label{fig:filters} 
\end{figure}

The algorithm operates on a signal called the \emph{natural observation sequence}, denoted \(\by^\nat_t\), which corresponds to the output the system would have produced had the learner applied zero controls from the start. This sequence is computed online by maintaining a fictitious internal state \(\bz_t\) that tracks the contribution of the learner’s own actions:
\begin{align}\label{y_nat}
\bz_{t+1} = A\bz_t + B\bu_t, \quad \by^\nat_t = \by_t - C\bz_t.
\end{align}
Unlike the raw observations \(\by_t\), the natural sequence \(\by^\nat_t\) is independent of the learner's parameters, making it amenable to convex optimization.

\begin{figure}[H]
    \centering
    \includegraphics[width=1\linewidth]{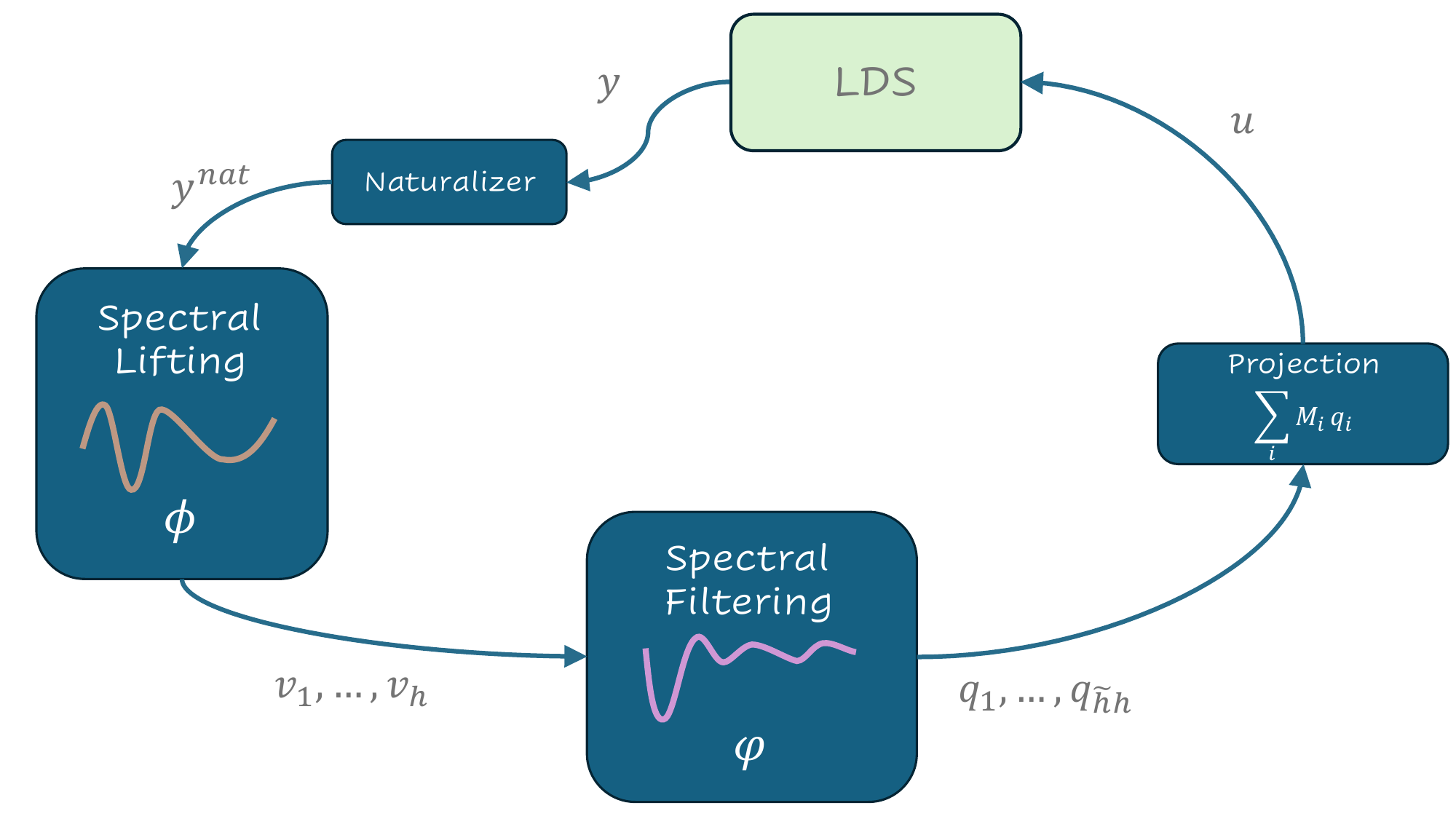}
    \caption{Illustration of the Double Spectral Control (DSC) method. The learner receives the observed signal \( \by_t \), from which it computes the natural observation \( \by^{\text{nat}} \) in an online manner by maintaining a fictitious internal state. This signal is processed through two levels of spectral operations: a \emph{spectral lifting} stage using filters \(\boldsymbol{\phi}\), followed by a \emph{spectral filtering} stage using filters \(\boldsymbol{\varphi}\). The resulting features are linearly combined to generate the control action \( \bu \), which is applied to the linear dynamical system (LDS).}
    \label{fig:enter-label}
\end{figure}

\subsection{Algorithm Overview}

Algorithm~\ref{alg:mainA} constructs the control \(\bu_t\) through a two-stage spectral transformation of the natural observation history. First, the past sequence \(\by^\nat_{t-m:t}\) is convolved with a fixed set of spectral filters \(\{\boldsymbol{\phi}_j\}_{j=1}^h\), derived from the top eigenvectors of a Hankel matrix \(H\). This \emph{spectral lifting} step produces a sequence of intermediate signals that captures long-term dependencies in a compact representation. Crucially, this step enables dimensionality reduction: directly lifting the full history would require a number of parameters that scales polynomially with the inverse stability margin \(\gamma^{-1}\), as detailed in previous work \cite{hazan2025introductiononlinecontrol}. In contrast, our intermediate spectral approximation ensures only polylogarithmic dependence on \(\gamma^{-1}\).

Next, the lifted signal undergoes a second spectral transformation, analogous to the spectral filtering technique of~\cite{hazan2017learninglineardynamicalsystems}, as adapted to the control setting by~\cite{brahmbhatt2025newapproachcontrollinglinear}, but now applied to the already filtered and lifted natural observation sequence. Specifically, we convolve the lifted signal with a second set of filters \(\{\boldsymbol{\varphi}_i\}_{i=1}^{\tilde{h}}\), also obtained as eigenvectors of a Hankel matrix. The resulting features are linearly combined using learnable matrices to produce the control \( \bu_t \). Since this map is linear in the parameters, we can apply Projected Online Gradient Descent over a convex set \(\K \subseteq \mathbb{R}^{(\tilde{h}+1) \times (h+2) \times n \times p}\).

The loss functions \(\ell_t\) are convex and memoryless (Definition~\ref{def:memoryless-loss}). Moreover, by leveraging fast online convolution methods~\citep{agarwal2024futurefillfastgenerationconvolutional}, each step can be implemented in time polylogarithmic in the horizon \(T\) and the stability margin \(\gamma^{-1}\).

\begin{algorithm}[htb]
\caption{Double Spectral Control Algorithm}
\label{alg:mainA}
\begin{algorithmic}[1]
\STATE \textbf{Input:} Horizon \(T\), number lifting filters \(h\), number of learning filters \(\Tilde{h}\), memories \(m, \Tilde{m}\), step size $\eta$, convex constrains set \(\K\subseteq\reals^{({\tilde{h}+1})\times n\times \left(h+2\right)p}\).
\STATE Compute $\{(\sigma_j, \boldsymbol{\phi}_j)\}_{j=1}^h$ and $\{(\lambda_j, \boldsymbol{\varphi}_j)\}_{j=1}^{\Tilde{h}}$, the top  eigenpairs of a matrices whose $i, j$ th entry is $\frac{(1-\gamma)^{i+j-1}}{i+j-1}$ of dimensions \(m\text{ and }\Tilde{m}\) respectively.
\STATE Initialize $M_i^0 \in \reals^{n\times \left(h+2\right)p}$ for all $i\in\{0,\dots,\tilde{h}\}$, and $\z_0 = 0 \in \reals^d$.
\FOR{$t = 0, \ldots, T-1$}
\STATE \label{line:spectral_lifting} Perform spectral lifting by computing 
\begin{align*}
    \tilde{\y}_t^\nat = \begin{bmatrix}
        \y^\nat_t &
        \sigma_0^{1/4} Y^\nat_{t:t-m} \boldsymbol{\phi}_0 &
        \dots 
        &
        \sigma_h^{1/4}Y^\nat_{t:t-m} \boldsymbol{\phi}_h &
    \end{bmatrix}^\top \in \reals^{(h+2)p} \,. 
\end{align*}
\STATE \label{line:compute_control} Define 
\(
    \tilde{Y}^\nat_{t:t-\tilde{m}}=\begin{bmatrix}
        \tilde{\by}^\nat_t&\dots&\tilde{\by}^\nat_{t-\tilde{m}}
    \end{bmatrix}
\) and  compute control 
\begin{align*}
    \bu_t = M^t_0 \tilde{\y}_t^\nat + 
    \sum_{i=1}^{\tilde{h}}\lambda_i^{1/4}M^t_i\tilde{Y}^\nat_{t:t-\tilde{m}}\boldsymbol{\varphi}_i
\end{align*}
\STATE Update $\z_{t+1} = A\z_t + B\bu_t$
\STATE \label{line:compute_disturb} Observe $\y_{t+1}$ and record \(\y^\nat_{t+1}= \y_{t+1}-C\z_{t+1}\).
\STATE Set $M^{t+1} = \Pi_{\K}\left[M^t - \eta \nabla_{M} \ell_{t}\left(M^t\right)\right]$ 
\ENDFOR
\STATE \Return $M^T$
\end{algorithmic}
\end{algorithm}

\section{Preliminaries}\label{sec:prelim}

\subsection{Notation}

We use $\x$ to denote states, $\bu$ for control inputs, $\y$ for observations, and $\bw$ for disturbances. The dimensions of the state, control and observation spaces are denoted by $d = \dim(\x)$, $n = \dim(\bu)$ and $p = \dim(\y)$ respectively. Matrices related to the system dynamics and control policy are denoted by capital letters $A, B, C, K, M$. For convenience, we write $\by^\nat_t = 0$ for all $t \leq 0$.


Given a policy $\pi$, we denote the state and control at time $t$ by $(\bx_t^{\pi}, \bu_t^{\pi})$ when following $\pi$. If $\pi$ is parameterized by a set of parameters $\Theta$, and the context makes the inputs clear, we use $(\bx_t^\Theta, \bu_t^\Theta)$ or $(\bx_t(\Theta), \bu_t(\Theta))$ to refer to the same quantities. For simplicity, we use $(\bx_t, \bu_t)$ without any superscript or argument to refer to the state and control at time $t$ under Algorithm~\ref{alg:mainA}.

\subsection{Setting}

We begin by making the following assumptions about our system. Non-stochasticity allows us to assume without loss of generality that $\bx_0 = 0$.

\begin{definition}\label{assm:controllable}
    An LDS as in \eqref{eqn:lds_intro} is controllable if the noiseless LDS given by \(\x_{t+1}=A\x_t+B\bu_t\) can be steered to any target state from any initial state.
\end{definition}

\begin{assumption}\label{assm:bounded-system}
The system matrices $B$ and $C$ are bounded, i.e., $\|B\|\leq \kappa_B, \norm{C} \leq \kappa_C$. The powers of the system matrix $A$ are bounded as $\norm{A^i} \leq \kappa(1-\gamma)^i$. The disturbance at each time step is also bounded, i.e., $\|\bw_t\|\leq W$.
\end{assumption}

\begin{assumption}\label{assm:lipschitz}
The cost functions $c_t(\by, \bu)$ are convex. Moreover, as long as $\|\by\|, \|\bu\| \leq D$, the gradients are bounded:
\[
\|\nabla_\by c_t(\by,\bu)\|, \|\nabla_\bu c_t(\by,\bu)\| \leq GD\,.
\]
\end{assumption}

As mentioned in \cite{hazan2025introductiononlinecontrol}, the most well-known class of controllers for partially observable linear dynamical systems is that of linear dynamical controllers due to its connection to Kalman filtering. We define a $(\kappa, \gamma)-$diagonalizably stable LDCs as follows:

\begin{definition}\label{def:diagonalizable_LDC}
    A linear dynamical controller $\pi$ has parameters $(A_\pi, B_\pi, C_\pi)$ and chooses the the control at time $t$ as:
    \begin{align*}
        \bs_{t+1} & = A_\pi\bs_t + B_\pi\by_t \\
        \bu_t^\pi & = C_\pi \bs_t
    \end{align*}
    We say that this linear dynamical controller is $(\kappa, \gamma)$-diagonalizably stable if:
    \begin{enumerate}
        \item $A_\pi = H_\pi L_\pi H_\pi^{-1}$ where $L_\pi$ is a real positive diagonal matrix such that $\norm{L_\pi} \leq 1-\gamma$ and $\norm{H_\pi}, \norm{H_\pi^{-1}} \leq \kappa$.
        \item $\norm{B_\pi}, \norm{C_\pi} \leq \kappa$.
        \item Define $\mathcal{A}:= \begin{bmatrix}
            A & BC_\pi \\
            B_\pi C & A_\pi
        \end{bmatrix}$. Then, $\norm{\mathcal{A}^i} \le\kappa^2 (1-\gamma)^i$.
        \item Let \(\mathcal{A}_{CL}\) be the closed loop matrix for the policy defined in Lemma \ref{lem:approx-ldc-lin}, then $\mathcal{A}_{CL}= H L H^{-1}$ where $L$ is a real positive diagonal matrix such that $\norm{L} \leq 1-\gamma$ and $\norm{H}, \norm{H^{-1}} \leq \kappa$. \footnote{This requirement is for tractability of analysis alone, and we show empricially that our method works regardless of it.}
    \end{enumerate}
We denote by
$\mathcal{S} = \{(A_\pi, B_\pi, C_\pi) : (A_\pi, B_\pi, C_\pi) \text{ is } (\kappa,\gamma)\text{-diagonalizably stable} \}$ the set of such policies, and, with slight abuse of notation, also use \(\mathcal{S}\) to refer to the class of LDC policies \(\bu_t^\pi\) where \((A_\pi, B_\pi, C_\pi) \in \mathcal{S}\).
\end{definition}

\begin{assumption}\label{asm:zero-stabilizability}
    The zero policy \((A_\pi, B_\pi, C_\pi) = (0, 0, 0)\) lies in \(\mathcal{S}\).
\end{assumption}

For simplicity, we assume that $\kappa, \kappa_B, \kappa_C, W \geq 1$ and $\gamma \leq 2/3$, without loss of generality. 

Algorithm \ref{alg:mainA} learns a convex relaxation of the LDC policy class \(\mathcal{S}\) from Definition \ref{def:diagonalizable_LDC}, which we define as follows:

\begin{definition}\label{def:double-spectral-class}[Double Spectral Controller]
    The class of Double Spectral Controllers with \(h\) lifting filters and \(\tilde{h}\) learning filters, memories \((m, \tilde{m})\) and stability $\gamma$ is defined as:
    \begin{align*}
        \left\{\bu_t(M) = M_0 \tilde{\y}^\nat_t + 
    \sum_{i=1}^{\tilde{h}}\lambda_i^{1/4}M_i\tilde{Y}^\nat_{t:t-\tilde{m}}\boldsymbol{\varphi}_i \, \big{|} \, M \in \reals^{({\tilde{h}+1})\times n\times \left(h+2\right)p }\right\} \,,
    \end{align*}
    where: 
    \begin{enumerate}
        \item \(
    \tilde{\y}_t^\nat = \begin{bmatrix}
        \y^\nat_t &
        \sigma_0^{1/4} Y^\nat_{t:t-m} \boldsymbol{\phi}_0 &
        \dots 
        &
        \sigma_h^{1/4}Y^\nat_{t:t-m} \boldsymbol{\phi}_h &
    \end{bmatrix}^\top \in \reals^{(h+2)p} \,,
\)
\item \(\tilde{Y}^\nat_{t:t-\tilde{m}}=\begin{bmatrix}
        \tilde{\by}^\nat_t&\dots&\tilde{\by}^\nat_{t-\tilde{m}}
\end{bmatrix}\,,\)
\item $(\sigma_i, \boldsymbol{\phi}_i) \in \reals \times \reals^m$ and $(\lambda_i, \boldsymbol{\varphi}_i)\in\reals\times\reals^{\tilde{m}}$ are the $(i+1)$-th and $i$-th top eigenpairs of $H \in \reals^{m \times m}$ and $\tilde{H} \in \reals^{\tilde{m} \times \tilde{m}}$ with the \((i,j)\)-th entry being $\frac{\left(1-\gamma\right)^{i+j-1}}{i+j-1}$ respectively.
    \end{enumerate}
\end{definition}

To enable learning via online gradient descent, we require a bounded set of parameters: 
\begin{definition}\label{def:Kset} We define the domain over which we optimize as
    $$\K=\left\{M \in \reals^{({\tilde{h}+1})\times n\times \left(h+2\right)p }\mid\norm{\by_t^M},\norm{\bu_t^M}\le \mathcal{R}
,\norm{M}\le{\mathcal{R}_M}\right\}\,.$$
where
\begin{align*}
    \mathcal{R}=\frac{4096\kappa^{24}\kappa_B\kappa_C^2Wh^4}{\gamma^4}\log^{1/2}\left(\frac{2}{\gamma}\right)\,,\quad
    \mathcal{R}_M=\frac{128\kappa^{16}\kappa_B\kappa_C\sqrt{h^5\Tilde{h}}}{\gamma^{5/2}}\log^{1/4}\left(\frac{2}{\gamma}\right)\,.
\end{align*}
\end{definition}

We further note that in Algorithm \ref{alg:mainA}, online gradient descent is not performed on the actual cost function, but on a modified cost function, referred to as the memory-less loss function:

\begin{definition}\label{def:memoryless-loss}
    We define the memory-less loss function at time \(t\) as 
    \begin{align*}
        \ell_t(M) = c_t({\by_t(M)}, {\bu_t(M)})\,,
    \end{align*}
    where \(\bu_t(M)\) is as defined in \ref{def:double-spectral-class}, and \(\by_t(M)\) is observed by playing \(\bu_0(M),\dots,\bu_{t-1}(M)\).
\end{definition}

\section{Main Result and Analysis Overview}\label{sec:analysis}


In this section, we present our main result and its proof:

\begin{theorem}[Main Theorem]\label{thm:main}
Let \(c_t\) be any sequence of convex Lipschitz cost functions satisfying Assumption \ref{assm:lipschitz}, and let the LDS be controllable (Definition \ref{assm:controllable}) and satisfy Assumption \ref{assm:bounded-system}. Then, Algorithm \ref{alg:mainA} achieves the following regret bound:
\[
\text{Regret}_T\left(\mathrm{OSC}, \mathcal{S}\right) = \Tilde{\mathcal{O}}\left(\frac{\sqrt{T}}{\gamma^{11}}\right)\,,
\]
where \(\tilde{O}\) hides poly-logarithmic factors in \(\frac{T}{\gamma}\) and constants, and \(\mathcal{S}\) is the class of LDCs defined in Definition \ref{def:diagonalizable_LDC}. This result holds under the following choice of inputs:
\begin{enumerate}
    \item \(m = \left\lceil \frac{1}{\gamma}\log\left(\frac{C_1 T^{3/2}}{\gamma^3}\right) \right\rceil\),
    \item \(h = \left\lceil 2 \log T \log \left(\frac{C_2 \sqrt{m}}{\gamma^2}T^{3/2}\log T\log^{1/4}\left(\frac{2}{\gamma}\right)\right) \right\rceil\),
    \item \(\Tilde{m}=\left\lceil\frac{1}{\gamma}\log\left(\frac{C_3h^{19/2}}{\gamma^{12}}\sqrt{T}\log^{5/4}\left(\frac{2}{\gamma}\right)\right)\right\rceil\)\,,
    \item \(\Tilde{h}=\left\lceil2\log T\log\left(\frac{C_4h^{21/2}\sqrt{\Tilde{m}}}{\gamma^{23/2}}\sqrt{T}\log T\log^{3/2}\left(\frac{2}{\gamma}\right)\right)\right\rceil\)
    \item \(\eta = \frac{1}{C_5}\sqrt{\frac{\gamma^7}{h^5\Tilde{h}m\Tilde{m}}}\),
    \item \(K\) is the set from Definition \ref{def:Kset},
\end{enumerate}
where the constants are defined as follows:
\begin{gather*}
C_1 = C_0G\kappa^{13}\kappa_B\kappa_C^4W^2\,,\;C_2 = C_0G\kappa^{13}\kappa_B^2\kappa_C^5W^2d\,, \;C_3=C_0G\kappa^{56}\kappa_B^3\kappa_C^5W^2\,,\\C_4=C_3d\,,\;
C_5=1024G\kappa^{12}\kappa_B\kappa_C^3W^2
\end{gather*}
and \(C_0\) is some absolute constant.
\end{theorem}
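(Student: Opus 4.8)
The plan is to bound the regret by decomposing it into an \emph{approximation} error and an \emph{online learning} error. First I would establish that the comparator class $\mathcal{S}$ of $(\kappa,\gamma)$-diagonalizably stable LDCs is well-approximated by the Double Spectral Controller class of Definition~\ref{def:double-spectral-class}. This proceeds in two stages, mirroring the two spectral levels of the algorithm. Stage one: show that any diagonalizably stable LDC has a disturbance-response (or rather, observation-response) map that decays geometrically at rate $(1-\gamma)$, so it can be truncated to an open-loop convolution over the natural observation sequence $\by^\nat$ of memory $m$; this relies on Assumption~\ref{assm:bounded-system} and items 1--3 of Definition~\ref{def:diagonalizable_LDC}, and is presumably the content of the referenced Lemma~\ref{lem:approx-ldc-lin}. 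Stage two: the coefficients of that truncated convolution, viewed as a sequence, are themselves (approximately) a linear combination of Hankel eigenvectors — this is the spectral filtering argument of Hankel-matrix approximation (as in \cite{hazan2017learninglineardynamicalsystems,brahmbhatt2025newapproachcontrollinglinear}), applied \emph{twice}: once for the inner $\{\boldsymbol{\phi}_j\}$ lifting of $\by^\nat$ and once for the outer $\{\boldsymbol{\varphi}_i\}$ filtering. The eigenvalue decay of $H$ (entries $\frac{(1-\gamma)^{i+j-1}}{i+j-1}$) is exponential in $j/\log(1/\gamma)$, which is exactly what gives $h,\tilde h = \tilde O(\log T \log(1/\gamma))$ filters suffice, hence the polylog runtime. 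I would quantify the total approximation error as a function of $m,\tilde m, h, \tilde h$ and check that the choices in the theorem statement drive it below $O(\sqrt T/\gamma^{11})$; crucially the approximating $M$ must lie in the bounded set $\K$ of Definition~\ref{def:Kset}, so I'd verify the norm bounds $\mathcal{R}, \mathcal{R}_M$ are consistent with the magnitudes produced by a stable LDC.

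Second, I would handle the online learning error. Since $\bu_t(M)$ is \emph{linear} in $M$ and $c_t$ is convex, the memoryless loss $\ell_t(M) = c_t(\by_t(M), \bu_t(M))$ of Definition~\ref{def:memoryless-loss} is convex in $M$. Running Projected OGD over $\K$ with step size $\eta$ gives the standard regret bound $O(\mathcal{D}_\K G_\ell \sqrt T)$ where $\mathcal{D}_\K$ is the diameter of $\K$ and $G_\ell$ bounds the gradients of $\ell_t$. I'd compute $G_\ell$ via the chain rule: $\|\nabla_M \ell_t\| \le (\|\nabla_\by c_t\| + \|\nabla_\bu c_t\|) \cdot \|\text{Jacobian}\|$, where the Jacobian magnitude is controlled by the magnitude of the double-filtered natural observation features (which involves $\sigma_j^{1/4}$, $\lambda_i^{1/4}$, the sizes of $\by^\nat$, and the filter norms), and $\|\nabla c_t\| \le GD$ with $D = \mathcal{R}$ by Assumption~\ref{assm:lipschitz}. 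Plugging $\mathcal{D}_\K \le 2\mathcal{R}_M$, $G$, and the feature bounds, and optimizing over $\eta$ (which is where the stated $\eta \propto \sqrt{\gamma^7/(h^5 \tilde h m \tilde m)}$ comes from), yields the OGD regret term.

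Third — and this is the step I expect to require the most care — I must control the gap between the true cost $\sum_t c_t(\by_t, \bu_t)$ incurred by Algorithm~\ref{alg:mainA} and the surrogate $\sum_t \ell_t(M^t)$ that OGD actually minimizes. The subtlety is that the algorithm's \emph{actual} observation $\by_t$ depends on the \emph{entire history} of played controls $\bu_0(M^0),\dots,\bu_{t-1}(M^{t-1})$ with time-varying parameters, whereas $\by_t(M^t)$ in the memoryless loss is the counterfactual observation from playing the \emph{fixed} current parameter $M^t$ throughout. Bounding this requires a "slowly-moving iterates" argument: $\|M^{t} - M^{t-1}\| \le \eta G_\ell$, combined with the geometric stability of the closed loop (item 4 of Definition~\ref{def:diagonalizable_LDC} and the $\norm{\mathcal A^i}\le \kappa^2(1-\gamma)^i$ bound) to show that the effect of stale parameters decays, so the total cost discrepancy is $O(\eta G_\ell^2 T / \gamma) = O(\sqrt T \cdot \mathrm{poly}(\gamma^{-1}))$ for the chosen $\eta$. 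A parallel and equally delicate point is that the comparator LDC $\pi \in \mathcal{S}$ generates its own state trajectory $\bx_t^\pi$, so the approximation bound from the first part must be lifted from "the policies produce close controls" to "the policies incur close \emph{costs}" — this needs the Lipschitz bound on $c_t$ together with a bound on how observation-response perturbations propagate, again using closed-loop geometric decay.

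Finally I would collect the three pieces — approximation error $\tilde O(\sqrt T/\gamma^{11})$, OGD regret $\tilde O(\sqrt T/\gamma^{c})$, and cost-discrepancy error $\tilde O(\sqrt T/\gamma^{c'})$ — verify that each is dominated by $\tilde O(\sqrt T/\gamma^{11})$ under the stated parameter choices, and track the $\kappa,\kappa_B,\kappa_C,W,G$ dependence to confirm it matches the constants $C_1,\dots,C_5$. The main obstacle, as noted, is the memory/counterfactual reduction in the third step, since it is where the interaction between the learning dynamics and the closed-loop system dynamics is most intricate; everything else is a (lengthy but routine) bookkeeping exercise in tracking how the two spectral approximations compound.
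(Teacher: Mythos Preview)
Your proposal is correct and matches the paper's proof almost exactly: the same three-term decomposition into (i) approximation of $\mathcal{S}$ by the double spectral class (Lemma~\ref{lem:approx}), (ii) standard OGD regret on the convex memoryless surrogate $\ell_t$, and (iii) the slowly-moving-iterates bound relating $c_t(\by_t,\bu_t)$ to $\ell_t(M^t)$ (Lemma~\ref{lem:memoryless-enough}). The only nuance worth flagging is that in the paper the first spectral stage (Lemma~\ref{lem:approx-ldc-lin}) approximates the LDC by a spectral-projection controller acting on its \emph{own} closed-loop observations via a hybrid argument, then reinterprets this as a linear controller on a lifted system before the second spectral stage passes to $\by^\nat$---rather than going directly to an open-loop convolution on $\by^\nat$ as your Stage one suggests---but this does not change the overall architecture you outlined.
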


Before proceeding to the proof, we outline the key technical definitions and lemmas required to understand the main proof, while deferring their proof to the Appendix.

In Appendix \ref{appendix:approx}, we prove that the spectral policy class can approximate $\mathcal{S}$ up to arbitrary accuracy. Formally, we state this as:

\begin{lemma}\label{lem:approx}
    For any LDC policy $(A_\pi, B_\pi, C_\pi) \in \mathcal{S}$, there exists a Double spectral controller with $M \in \K$ such that:
    \begin{align}
        \sum_{t=1}^T \left|c_t(\by_t(M), \bu(M)) - c_t(\by_t^\pi, \bu_t^\pi)\right| =\mathcal{O}(\sqrt{T})\nonumber \,,
    \end{align}
    for $m, h, \tilde{m}, \tilde{h}$ as defined in Theorem \ref{thm:main}.
\end{lemma}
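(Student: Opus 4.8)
\textbf{Proof strategy for Lemma~\ref{lem:approx}.} The plan is to build the approximating Double Spectral Controller in two nested stages, mirroring the two-level spectral construction in Algorithm~\ref{alg:mainA}, and to control the error at each stage by the decay of Hankel eigenvalues. First I would pass from the target LDC $(A_\pi, B_\pi, C_\pi) \in \mathcal{S}$ to its disturbance-response (Markov-parameter) representation: unrolling the LDC recursion $\bs_{t+1} = A_\pi \bs_t + B_\pi \by_t$, $\bu_t^\pi = C_\pi \bs_t$ and substituting the closed-loop state evolution from property~3 of Definition~\ref{def:diagonalizable_LDC}, one writes $\bu_t^\pi$ as an infinite linear convolution of the natural observation sequence $\by^\nat_{1:t}$ (this is the content of Lemma~\ref{lem:approx-ldc-lin}, referenced in the definition, which I would invoke). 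Because $\|\mathcal{A}^i\| \le \kappa^2(1-\gamma)^i$ and $\|\mathcal{A}_{CL}^i\|$ decays at the same rate, truncating this convolution to the last $m$ terms incurs error that is geometrically small in $m$; the choice $m = \lceil \gamma^{-1}\log(C_1 T^{3/2}/\gamma^3)\rceil$ makes the per-step truncation error $O(T^{-1})$ after scaling by the Lipschitz/norm bounds, hence $O(\sqrt T)$ summed — actually $O(1)$, but $O(\sqrt T)$ suffices.

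Second, the key step is to show that the length-$m$ convolution kernel — which has the form $C_\pi H_\pi L_\pi^{j} H_\pi^{-1}(\cdots)$, i.e.\ entrywise a combination of geometric sequences $(1-\gamma)^j$ with rates bounded by $1-\gamma$ — lies (approximately) in the span of the top $h$ eigenvectors $\boldsymbol{\phi}_0,\dots,\boldsymbol{\phi}_h$ of the Hankel matrix $H$ from \eqref{eq:H_def}. This is exactly the spectral-filtering approximation result of the Hankel-eigenvector literature~\citep{hazan2017learninglineardynamicalsystems, hazan2018spectralfilteringgenerallinear}: the Hankel matrix $H_{ij} = (1-\gamma)^{i+j-1}/(i+j-1)$ is the moment matrix of geometric sequences, its eigenvalues decay exponentially ($\sigma_j \lesssim e^{-cj/\log m}$), and any geometric-type sequence is captured to accuracy $\varepsilon$ by $O(\log m \cdot \log(1/\varepsilon))$ filters. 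This step is applied \emph{twice}: once in the ``spectral lifting'' stage to represent the inner map producing $\tilde\by^\nat_t$ (giving the choice of $h$), and once in the outer ``spectral filtering'' stage where the lifted signal is convolved against $\{\boldsymbol{\varphi}_i\}$ with its own memory $\tilde m$ and filter count $\tilde h$. I would set up the error bookkeeping so that the inner approximation error propagates through the outer convolution with only polynomial blow-up in $\kappa, h$, which is why $\tilde m$ and $\tilde h$ carry the extra $h$-dependent factors shown in Theorem~\ref{thm:main}.

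Third, I would collect the reconstructed coefficients into a matrix $M$ and verify $M \in \K$, i.e.\ that $\|M\| \le \mathcal{R}_M$ and that the induced $\|\by_t^M\|, \|\bu_t^M\| \le \mathcal{R}$. The norm bound on $M$ follows from the explicit form of the reconstruction (the filter normalizations $\sigma_j^{1/4}, \lambda_i^{1/4}$ are chosen precisely so the learned coefficients stay bounded) together with $\|C_\pi\|, \|H_\pi^{\pm 1}\| \le \kappa$; the signal bounds follow by feeding $M$ back through the true dynamics and using $\|A^i\| \le \kappa(1-\gamma)^i$, $\|B\| \le \kappa_B$, $\|C\| \le \kappa_C$, $\|\bw_t\| \le W$ — this is where the large constants $\kappa^{24}$ etc.\ in $\mathcal{R}$ come from. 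Finally, given that $\sup_t \|\bu_t(M) - \bu_t^\pi\|$ and $\sup_t\|\by_t(M) - \by_t^\pi\|$ are both $O(T^{-1/2})$ (after both truncations and both spectral approximations), the $GD$-Lipschitz property of $c_t$ on the ball of radius $\mathcal{R}$ (Assumption~\ref{assm:lipschitz}) converts this into $\sum_{t=1}^T |c_t(\by_t(M),\bu_t(M)) - c_t(\by_t^\pi,\bu_t^\pi)| = O(\sqrt T)$.

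\textbf{Main obstacle.} I expect the crux to be the two-level error propagation: a clean single-level spectral-filtering bound is standard, but here the outer filters act on a signal that is itself only an approximate spectral reconstruction, and moreover the comparator's internal LDC state couples the disturbance history across time. Controlling how the inner approximation error $\varepsilon_{\mathrm{in}}$ is amplified by the outer convolution — and showing this amplification is only $\mathrm{poly}(\kappa, h, 1/\gamma)$ rather than exponential in $1/\gamma$ — is the technically delicate part, and it is what dictates the precise (and somewhat baroque) settings of $m, h, \tilde m, \tilde h$ in Theorem~\ref{thm:main}. A secondary subtlety is that property~4 of Definition~\ref{def:diagonalizable_LDC} (diagonalizability of the closed-loop matrix $\mathcal{A}_{CL}$) is needed so that the length-$m$ kernel genuinely decomposes into real geometric sequences of rate $\le 1-\gamma$; without it the kernel could involve Jordan blocks or complex eigenvalues, and one would need the asymmetric/Chebyshev spectral-filtering machinery of~\citep{hazan2018spectralfilteringgenerallinear, marsden2025dimensionfreeregretlearningasymmetric} instead.
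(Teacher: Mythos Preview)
Your high-level picture is right, but the first step misreads Lemma~\ref{lem:approx-ldc-lin} in a way that propagates through the rest of the plan. That lemma does \emph{not} express $\bu_t^\pi$ as a convolution of $\by^\nat$; it approximates the closed-loop LDC $\pi$ by another \emph{closed-loop} controller --- the spectral-projection linear controller of Definition~\ref{def:spectral_class}, acting on its own observations $Y^K_{t:t-m}$ --- via a hybrid argument over policies $\pi_\tau$ that switch from spectral to LDC behavior at time $\tau$. The kernel $C_\pi H_\pi L_\pi^j H_\pi^{-1}(\cdots)$ you write down is the kernel of $\bu_t^\pi$ with respect to the LDC's \emph{own} observations $\by^\pi$, not $\by^\nat$; the kernel of $\bu_t^\pi$ in terms of $\by^\nat$ involves powers of the joint closed-loop matrix $\mathcal{A}$, whose diagonalizability is not assumed (property~3 of Definition~\ref{def:diagonalizable_LDC} gives only a power-norm bound), so directly spectrally filtering that kernel as you propose would not go through under the stated assumptions.

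The structural ingredient your plan is missing is the \emph{lifted system}: the spectral-projection linear controller produced by Lemma~\ref{lem:approx-ldc-lin} is exactly a static linear controller $K$ on the lifted system $(\tilde A, \tilde B, \tilde C)$ of equations~\eqref{eq:lifted-A-B}--\eqref{eq:lifted-control}, whose natural observation is precisely the spectrally lifted $\tilde\by^\nat_t$. The second approximation (Lemma~\ref{lem:approx-lin-spec}) then applies the standard linear-to-open-loop-spectral reduction on that lifted system, using property~4 to diagonalize $\mathcal{A}_{CL} = \tilde A + \tilde B K\tilde C$; the resulting spectral controller on the lifted system \emph{is} the Double Spectral Controller on the original one. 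This also dissolves your ``main obstacle'': the two approximations are sequential, not nested --- Lemma~\ref{lem:approx-ldc-lin} bounds $|\text{cost}(\pi)-\text{cost}(K)|$, Lemma~\ref{lem:approx-lin-spec} bounds $|\text{cost}(K)-\text{cost}(M)|$ on the lifted system, and the two simply add by the triangle inequality. The $h$-dependent factors in $\tilde m,\tilde h$ arise from the norm bounds on $\tilde C$ and $K$ in the lifted system (listed after equation~\eqref{eq:lifted-y-nat}), not from inner-error amplification through an outer filter.
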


Classical results in online gradient descent provide a regret bound with respect to loss functions $\ell_t(M^t)$. However, our regret is defined in terms of the actual costs $c_t(\bx_t, \bu_t)$. To overcome this technicality, in Appendix \ref{appendix:memory} we prove that $c_t(\bx_t, \bu_t)$ is well approximated by $\ell_t(M^t)$. We formally state this result here:
\begin{lemma}\label{lem:memoryless-enough}
    Algorithm \ref{alg:mainA} is executed with $\eta$ as defined in Theorem \ref{thm:main}. Then for every $t \in [T]$,
    \begin{align*}
        \left|c_t(\by_t, \bu_t) - \ell_t(M^t)\right| \leq \frac{16G\mathcal{R}\mathcal{R}_Mh\Tilde{h}\sqrt{m\Tilde{m}}\kappa^4\kappa_B\kappa_C^2W}{\gamma^3\sqrt{T}}\log^{1/2}\left(\frac{2}{\gamma}\right) \,.
    \end{align*}
\end{lemma}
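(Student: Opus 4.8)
The plan is to bound the discrepancy between the true cost $c_t(\by_t, \bu_t)$ and the memory-less loss $\ell_t(M^t) = c_t(\by_t(M^t), \bu_t(M^t))$ by controlling the difference in their arguments and then invoking Lipschitzness of $c_t$ (Assumption~\ref{assm:lipschitz}). The controls agree: under Algorithm~\ref{alg:mainA} the control at time $t$ is \emph{exactly} $\bu_t = \bu_t(M^t)$ by construction (line~\ref{line:compute_control}), so $\bu_t - \bu_t(M^t) = 0$. Hence only the observations differ: $\by_t$ is the observation produced by the true trajectory (playing $\bu_0(M^0), \dots, \bu_{t-1}(M^{t-1})$ with different parameters at each step), whereas $\by_t(M^t)$ is the counterfactual observation that would have arisen had the \emph{fixed} parameter $M^t$ been used from the start (Definition~\ref{def:memoryless-loss}). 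So the task reduces to bounding $\norm{\by_t - \by_t(M^t)}$, after which $|c_t(\by_t,\bu_t) - \ell_t(M^t)| \le GD \cdot \norm{\by_t - \by_t(M^t)}$ with $D = \mathcal{R}$ (both arguments lie in the norm ball of radius $\mathcal{R}$ by Definition~\ref{def:Kset} and the choice $M^t \in \K$).

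Next I would expand $\by_t - \by_t(M^t)$ in terms of the natural observation sequence. Using $\by_t = \by_t^\nat + C\bz_t$ where $\bz_t$ is the fictitious state accumulating the learner's own actions (equation~\eqref{y_nat}), and noting that $\by_t^\nat$ is the \emph{same} signal regardless of which parameters were played (it depends only on the disturbances and initial state), the difference collapses to $\by_t - \by_t(M^t) = C(\bz_t - \bz_t(M^t))$, where $\bz_t = \sum_{s<t} A^{t-1-s} B \bu_s$ and $\bz_t(M^t) = \sum_{s<t} A^{t-1-s} B \bu_s(M^t)$. Therefore $\norm{\by_t - \by_t(M^t)} \le \kappa_C \sum_{s=0}^{t-1} \kappa (1-\gamma)^{t-1-s} \norm{\bu_s - \bu_s(M^t)}$ by Assumption~\ref{assm:bounded-system}. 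The geometric weights $(1-\gamma)^{t-1-s}$ contribute a factor $O(1/\gamma)$ once summed, so the remaining job is to bound $\norm{\bu_s - \bu_s(M^t)}$ for each $s < t$, i.e.\ how much the control at time $s$ changes when we swap the time-varying iterates for the single iterate $M^t$.

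For that step I would use the linearity of the control map in $M$ together with a bound on the ``feature'' vectors it acts on. Writing $\bu_s(M) = \langle M, \Psi_s \rangle$ for an appropriate feature tensor $\Psi_s$ built from the lifted history $\tilde Y^\nat_{s:s-\tilde m}$ (whose blocks are $\sigma_j^{1/4} Y^\nat_{s:s-m}\boldsymbol{\phi}_j$ and $\lambda_i^{1/4}$-scaled convolutions thereof), we get $\bu_s - \bu_s(M^t) = \langle M^s - M^t, \Psi_s \rangle$, hence $\norm{\bu_s - \bu_s(M^t)} \le \norm{M^s - M^t} \cdot \norm{\Psi_s}$. The iterate drift $\norm{M^s - M^t} \le \sum_{r=s}^{t-1} \norm{M^{r+1} - M^r} \le \eta \sum_{r} \norm{\nabla \ell_r(M^r)}$ is controlled by the step size $\eta$ and the gradient bound; the gradient of $\ell_r$ is bounded via the chain rule by $G\mathcal{R}$ times the norm of the same feature tensor, and $\norm{\Psi_s}$ itself is bounded using $\norm{\by^\nat_s}$ estimates, the filter normalizations (the $\sigma^{1/4}, \lambda^{1/4}$ scalings chosen precisely to keep these bounded), and $h, \tilde h, m, \tilde m$. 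Assembling: $\norm{\bu_s - \bu_s(M^t)} = O(\eta T \cdot (\text{feature bound})^2 \cdot G\mathcal{R})$, and plugging $\eta \propto \sqrt{\gamma^7/(h^5\tilde h m \tilde m)}$ from Theorem~\ref{thm:main} together with the geometric sum over $s$ and the $\kappa_C, \kappa$ factors should land exactly on the stated bound $\frac{16G\mathcal{R}\mathcal{R}_M h \tilde h \sqrt{m\tilde m}\kappa^4\kappa_B\kappa_C^2 W}{\gamma^3\sqrt{T}}\log^{1/2}(2/\gamma)$.

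The main obstacle I anticipate is getting the \emph{feature-norm bookkeeping} tight enough to produce exactly the claimed constants and powers: one must carefully track how the two-level convolution amplifies the magnitude of $\by^\nat$ — each lifting filter contributes a $\sigma_j^{1/4}$ times a Hankel-eigenvalue-weighted $\ell_1$/$\ell_2$ norm of the windowed history, and the same again at the second level — and combine this with the bound on $\norm{\by^\nat_s}$ in terms of $\kappa, \kappa_B, \kappa_C, W, \gamma$ (which itself requires the stability assumption and a geometric series). Subtleties include whether one needs the sharper $\ell_1$-type filter bound (as in the spectral-filtering literature, where $\sum_j \sigma_j^{1/4}$ or $\sum_j \sqrt{\sigma_j}$ is what converges) versus a crude per-filter bound, and ensuring the factor $\mathcal{R}_M$ enters through the drift $\norm{M^s - M^t} \le 2\mathcal{R}_M$ (using that iterates stay in $\K$) rather than through the cruder $\eta \sum \norm{\nabla \ell}$ estimate — most likely the cleanest route is to bound $\norm{M^s - M^t}$ by the \emph{minimum} of $2\mathcal{R}_M$ and $\eta \sum_{r=s}^{t-1}\norm{\nabla\ell_r(M^r)}$ and then use the $\eta$ bound to convert the $1/\sqrt{T}$. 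Everything else is a routine, if lengthy, chain of triangle inequalities and geometric sums.
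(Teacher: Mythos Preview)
Your high-level structure matches the paper exactly: $\bu_t=\bu_t(M^t)$, so only observations differ; expand $\by_t-\by_t(M^t)=\sum_{q\ge 1} CA^{q-1}B(\bu_{t-q}(M^{t-q})-\bu_{t-q}(M^t))$; bound each control difference by linearity in $M$; finish with Lipschitzness of $c_t$. Two points, however, deserve correction.

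First, the ``assembling'' step as you wrote it does not close. You bound $\norm{M^s-M^t}$ by either $\eta T\cdot L$ or $2\mathcal R_M$ and then separately sum the geometric weights as $\sum_s(1-\gamma)^{t-1-s}=O(1/\gamma)$. Neither route yields $O(1/\sqrt T)$: with $\eta T$ you get a bound growing in $T$, and with $2\mathcal R_M$ you get no $T$-decay at all. The point is that the drift bound must \emph{retain its $s$-dependence}, $\norm{M^{t-q}-M^t}\le q\eta L$, and then be combined with the weights in the single sum $\sum_{q\ge 1} q(1-\gamma)^{q-1}\le 1/\gamma^2$ (not $1/\gamma$). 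This coupling is what converts $\eta L$ into the final $1/\sqrt T$ rate.

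Second, the ``feature-norm bookkeeping'' you worry about is unnecessary. The paper observes that the step size in Theorem~\ref{thm:main} is chosen precisely so that $\eta = \mathcal R_M/(L\sqrt T)$, where $L$ is the Lipschitz constant of $\ell_t$ already computed in Lemma~\ref{lem:lipschitz-memory}. Hence $\norm{M^{t-q}-M^t}\le q\eta L = q\mathcal R_M/\sqrt T$ directly, and the per-control Lipschitz estimate $\norm{\bu_s(M)-\bu_s(M')}\le \frac{16h\tilde h\sqrt{m\tilde m}\kappa^2\kappa_C W}{\gamma}\log^{1/2}(2/\gamma)\norm{M-M'}$ from that same lemma supplies the feature bound. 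Plugging these in and summing $\sum_q q(1-\gamma)^{q-1}\le 1/\gamma^2$ gives the stated constant with no additional spectral-filter accounting.
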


\begin{proof}[Proof of Theorem \ref{thm:main}]
    Observe that for our choice of $m, h, \tilde{m}, \tilde{h}$, using Lemma \ref{lem:approx} and the Definition \ref{def:memoryless-loss} we get:
    \begin{align}\label{eq:approx-application}
         \min_{M^\star\in\K}\sum_{t=1}^T\ell_t(M^\star)  -\min_{\pi\in\mathcal{S}} \sum_{t=1}^Tc_t(\by_t^\pi, \bu_t^\pi) =\mathcal{O}(\sqrt{T})\,,
    \end{align}
    since the cost evaluated on the stationary approximating policy \(M^\star\) is identical to the memory-less loss on that policy. To derive a regret bound, we apply the standard guarantee for Online Gradient Descent (Theorem 3.1 in~\cite{hazan2023introductiononlineconvexoptimization}). Lemma~\ref{lem:convex-set} ensures that the set \(\K\) is convex, and Lemma~\ref{lem:convex-functions} confirms the convexity of the memory-less loss functions. Additionally, Lemma~\ref{lem:lipschitz-memory} provides a Lipschitz bound for these losses, and together with the diameter bound of \(M\), this yields a regret guarantee under our step size choice \(\eta\). Instantiating this bound with our selected values of \(m, h, \tilde{m}, \tilde{h}\), we obtain:

    \begin{align}\label{eq:ogd-application}
        \sum_{t=1}^T\ell_t\left(M^t\right)-\min_{M^\star\in\K}\sum_{t=1}^T\ell_t\left(M^\star\right) 
        & =\tilde{\mathcal{O}}\left(\frac{\sqrt{T}}{\gamma^{11}}\right)\,.
    \end{align}
    Next, Lemma \ref{lem:memoryless-enough} gives us the bound on the difference between the memory-less loss  $\ell_t(M^t)$ and the cost incurred by the algorithm $c_t(\bx_t, \bu_t)$ for every $t \in [T]$. Summing over all $t \in [T]$ and for our selected values of $m, h, \tilde{m}, \tilde{h}$:
    \begin{align}\label{eq:memoryless-close-application}
        \sum_{t=1}^Tc_t\left({\by_t,\bu_t}\right)-\sum_{t=1}^T\ell_t\left(M_{1:h}^t\right)
        & =\tilde{\mathcal{O}}\left(\frac{\sqrt{T}}{\gamma^{11}}\right) \,.
    \end{align}
    Finally, we add equations \eqref{eq:approx-application},\eqref{eq:ogd-application},\eqref{eq:memoryless-close-application} together to obtain the result:
    \begin{align*}
        \sum_{t=1}^Tc_t\left(\by_t,\bu_t\right)-\min_{\pi\in\mathcal{S}}\sum_{t=1}^Tc_t\left(\y_t^\pi,\bu_t^\pi\right)=\tilde{\mathcal{O}}\left(\frac{\sqrt{T}}{\gamma^{11}}\right)
    \end{align*}
\end{proof}

Finally, Algorithm~\ref{alg:mainA} maintains only \( \mathrm{polylog}\left(T/\gamma\right) \) parameters at each step \(t\). The spectral lifting step (line~\ref{line:spectral_lifting}) at time \(t \in [T]\) can be efficiently implemented by zero-padding each filter \(\boldsymbol{\phi}_i\) to length \(T\) and applying online convolution to the natural observation stream \(\{\by^\nat_t\}_{t \in [T]}\). Likewise, the controller computation (line~\ref{line:compute_control}) involves convolving the lifted, filtered stream \(\tilde{\by}^\nat_t\) with zero-padded filters \(\boldsymbol{\varphi}_i\). By leveraging the fast online convolution method of~\cite{agarwal2024futurefillfastgenerationconvolutional}, we obtain the following corollary:

\begin{corollary}\label{cor:running_time}
Each round of Algorithm~\ref{alg:mainA} can be implemented with amortized runtime \( \mathrm{polylog}\left(T/\gamma\right)\).
\end{corollary}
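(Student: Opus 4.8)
\textbf{Proof proposal for Corollary~\ref{cor:running_time}.}

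The plan is to account separately for the three distinct computational tasks performed in each round of Algorithm~\ref{alg:mainA}: (i) the spectral lifting of line~\ref{line:spectral_lifting}, (ii) the control computation of line~\ref{line:compute_control}, and (iii) the online gradient step together with the bookkeeping updates of $\bz_t$ and $\by^\nat_t$. For (iii), note that $M^t$ has $(\tilde h+1)\times n\times(h+2)p$ entries, which by the parameter choices in Theorem~\ref{thm:main} is $\mathrm{polylog}(T/\gamma)$ (treating $n,p,d$ and the system constants as fixed); the projection onto $\K$, the gradient evaluation of the memoryless loss $\ell_t$, and the two-line state update $\bz_{t+1}=A\bz_t+B\bu_t$, $\by^\nat_{t+1}=\by_{t+1}-C\bz_{t+1}$ are all polynomial in these dimensions, hence $\mathrm{polylog}(T/\gamma)$. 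So the crux is steps (i) and (ii), which are genuine convolutions against filters whose effective length grows with $m,\tilde m=\tilde O(1/\gamma)$, and naively cost $\Omega(m)$ or $\Omega(\tilde m)$ per step.

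First I would reformulate each of these two steps as a streaming (causal) convolution. For each lifting filter $\boldsymbol{\phi}_j\in\reals^m$, zero-pad it to length $T$; then the $j$-th block of $\tilde\y^\nat_t$, namely $\sigma_j^{1/4}Y^\nat_{t:t-m}\boldsymbol{\phi}_j$, is exactly the $t$-th output of the causal convolution of the stream $\{\by^\nat_s\}_{s\le t}$ with $\boldsymbol{\phi}_j$ (coordinatewise in the $p$ coordinates of $\by^\nat$). Likewise, once $\tilde\y^\nat_t$ is available, the term $\lambda_i^{1/4}M_i^t\,\tilde Y^\nat_{t:t-\tilde m}\boldsymbol{\varphi}_i$ is a causal convolution of the (already-lifted) stream $\{\tilde\y^\nat_s\}_{s\le t}$ with $\boldsymbol{\varphi}_i$, followed by the time-varying linear readout $M_i^t$. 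I would then invoke the FutureFill online-convolution primitive of~\cite{agarwal2024futurefillfastgenerationconvolutional}: given a fixed filter of length $L$ and a stream of length $T$, it produces the running convolution outputs one at a time with amortized per-step cost $\mathrm{polylog}(L,T)$ (via a segment/FFT-doubling scheme that precomputes future contributions in blocks). Applying one such convolution instance per filter and per output coordinate, the total amortized per-step cost of step (i) is $O\big((h+1)\cdot p\cdot \mathrm{polylog}(m,T)\big)$ and of step (ii) is $O\big(\tilde h\cdot(h+2)p\cdot \mathrm{polylog}(\tilde m,T)\big)$ plus the $O\big(\tilde h\cdot n(h+2)p\big)$ cost of the readout matrices; since $h,\tilde h,m,\tilde m$ are all $\mathrm{polylog}(T/\gamma)$ and $n,p$ are constants, every term is $\mathrm{polylog}(T/\gamma)$.

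Combining the three contributions gives the claimed amortized per-round bound $\mathrm{polylog}(T/\gamma)$. The one subtlety to check carefully — and the step I expect to be the main obstacle — is that the FutureFill guarantee is \emph{amortized} rather than worst-case per step: its doubling structure occasionally performs an $O(L\log L)$-size FFT that must be charged against many cheap steps, so the statement of the corollary must (and does) say ``amortized runtime''. I would therefore be careful to (a) state the FutureFill interface precisely, with the correct amortized bound for a length-$T$ stream against a length-$L$ filter, (b) confirm that chaining two such streaming convolutions (feeding the output of the $\boldsymbol{\phi}$-stage as the input stream of the $\boldsymbol{\varphi}$-stage) is legitimate — it is, since the $\boldsymbol{\varphi}$-convolution only needs $\tilde\y^\nat_t$ by the time it emits its $t$-th output, which is exactly when the $\boldsymbol{\phi}$-stage produces it — and (c) verify that the filters themselves are precomputed once (line~2), so eigen-decomposition cost does not enter the per-round accounting. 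With these points in place the corollary follows immediately.
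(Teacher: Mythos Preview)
Your proposal is correct and takes essentially the same approach as the paper: the paper's argument (given in the paragraph immediately preceding the corollary) is precisely to zero-pad the filters $\boldsymbol{\phi}_i$ and $\boldsymbol{\varphi}_i$ to length $T$, cast lines~\ref{line:spectral_lifting} and~\ref{line:compute_control} as online convolutions over the streams $\{\by^\nat_t\}$ and $\{\tilde\by^\nat_t\}$ respectively, and invoke the FutureFill primitive of~\cite{agarwal2024futurefillfastgenerationconvolutional}. Your write-up is in fact more careful than the paper's sketch---you explicitly verify that the two convolution stages can be chained, that the amortization is what absorbs the occasional large FFT, and that the gradient/projection/bookkeeping steps are polynomial in the $\mathrm{polylog}(T/\gamma)$-sized parameter tensor---but the route is identical.
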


\section{Conclusions}

We introduced \emph{Double Spectral Control} (DSC), a new algorithm for controlling partially observed linear dynamical systems under adversarial disturbances and convex loss functions. Our method constructs a two-level spectral approximation of linear dynamical controllers, enabling efficient learning via a convex relaxation. DSC achieves optimal regret while exponentially improving the runtime dependence on the stability margin compared to prior work. To our knowledge, this is the first algorithm to combine partial observation, adversarial noise, and general convex losses with polylogarithmic runtime guarantees.

\subsection{Limitations}\label{subsec:limitations}

Our method assumes access to the system dynamics \( (A, B, C) \), which may not be available in many practical scenarios. Extending DSC to settings with unknown or partially known dynamics is an important direction for future work. We also assume full-information feedback on the loss functions; extending the method to the bandit setting is a natural next step.

This work is primarily theoretical and serves as a proof of concept. Several assumptions in our analysis could potentially be relaxed—for example, competing with a broader class of linear dynamical controllers beyond strongly diagonalizable systems, such as those with complex eigenstructure. Finally, evaluating DSC on real-world control tasks remains an important step toward validating its practical effectiveness.

\bibliography{main.bib}

\begin{thebibliography}{10}

\bibitem{agarwal2019onlinecontroladversarialdisturbances}
Naman Agarwal, Brian Bullins, Elad Hazan, Sham~M. Kakade, and Karan Singh.
\newblock Online control with adversarial disturbances, 2019.

\bibitem{agarwal2024futurefillfastgenerationconvolutional}
Naman Agarwal, Xinyi Chen, Evan Dogariu, Vlad Feinberg, Daniel Suo, Peter Bartlett, and Elad Hazan.
\newblock Futurefill: Fast generation from convolutional sequence models, 2024.

\bibitem{agarwal2019logarithmicregretonlinecontrol}
Naman Agarwal, Elad Hazan, and Karan Singh.
\newblock Logarithmic regret for online control, 2019.

\bibitem{arora2018towards}
Sanjeev Arora, Elad Hazan, Holden Lee, Karan Singh, Cyril Zhang, and Yi~Zhang.
\newblock Towards provable control for unknown linear dynamical systems, 2018.

\bibitem{introtostochasticcontroltheory}
{Karl Johan} {\AA}str{\"o}m.
\newblock {\em Introduction to stochastic control theory}, volume~70 of {\em Mathematics in science and engineering}.
\newblock Academic Press, United States, 1970.

\bibitem{boyd2009lecture10}
Stephen Boyd.
\newblock Lecture 10: Linear quadratic stochastic control with partial state observation.
\newblock \url{https://web.stanford.edu/class/ee363/lectures/}, 2009.
\newblock Lecture notes for EE363: Convex Optimization I, Stanford University, Winter 2008--09.

\bibitem{brahmbhatt2025newapproachcontrollinglinear}
Anand Brahmbhatt, Gon Buzaglo, Sofiia Druchyna, and Elad Hazan.
\newblock A new approach to controlling linear dynamical systems, 2025.

\bibitem{Cesa-Bianchi_Lugosi_2006}
Nicolo Cesa-Bianchi and Gabor Lugosi.
\newblock {\em Prediction, Learning, and Games}.
\newblock Cambridge University Press, 2006.

\bibitem{10.5555/3666122.3667722}
Xinyi Chen and Elad Hazan.
\newblock Online control for meta-optimization.
\newblock In {\em Proceedings of the 37th International Conference on Neural Information Processing Systems}, NIPS '23, Red Hook, NY, USA, 2023. Curran Associates Inc.

\bibitem{cohen2018online}
Alon Cohen, Avinatan Hassidim, Tomer Koren, Nevena Lazic, Yishay Mansour, and Kunal Talwar.
\newblock Online linear quadratic control, 2018.

\bibitem{pmlr-v119-foster20b}
Dylan Foster and Max Simchowitz.
\newblock Logarithmic regret for adversarial online control.
\newblock In Hal~Daumé III and Aarti Singh, editors, {\em Proceedings of the 37th International Conference on Machine Learning}, volume 119 of {\em Proceedings of Machine Learning Research}, pages 3211--3221. PMLR, 13--18 Jul 2020.

\bibitem{golowich2024onlinecontrolpopulationdynamics}
Noah Golowich, Elad Hazan, Zhou Lu, Dhruv Rohatgi, and Y.~Jennifer Sun.
\newblock Online control in population dynamics, 2024.

\bibitem{hazan2023introductiononlineconvexoptimization}
Elad Hazan.
\newblock Introduction to online convex optimization, 2016.

\bibitem{hazan2018spectralfilteringgenerallinear}
Elad Hazan, Holden Lee, Karan Singh, Cyril Zhang, and Yi~Zhang.
\newblock Spectral filtering for general linear dynamical systems, 2018.

\bibitem{hazan2025introductiononlinecontrol}
Elad Hazan and Karan Singh.
\newblock Introduction to online control, 2025.

\bibitem{hazan2017learninglineardynamicalsystems}
Elad Hazan, Karan Singh, and Cyril Zhang.
\newblock Learning linear dynamical systems via spectral filtering, 2017.

\bibitem{kalman1960new}
Rudolph~Emil Kalman.
\newblock A new approach to linear filtering and prediction problems.
\newblock {\em Journal of Basic Engineering}, 82.1:35--45, 1960.

\bibitem{lale2020logarithmicregretboundpartially}
Sahin Lale, Kamyar Azizzadenesheli, Babak Hassibi, and Anima Anandkumar.
\newblock Logarithmic regret bound in partially observable linear dynamical systems, 2020.

\bibitem{lyapunov1992general}
Aleksandr~Mikhailovich Lyapunov.
\newblock The general problem of the stability of motion.
\newblock {\em International journal of control}, 55(3):531--534, 1992.

\bibitem{marsden2025dimensionfreeregretlearningasymmetric}
Annie Marsden and Elad Hazan.
\newblock Dimension-free regret for learning asymmetric linear dynamical systems, 2025.

\bibitem{minasyan2022onlinecontrolunknowntimevarying}
Edgar Minasyan, Paula Gradu, Max Simchowitz, and Elad Hazan.
\newblock Online control of unknown time-varying dynamical systems, 2022.

\bibitem{simchowitz2020improperlearningnonstochasticcontrol}
Max Simchowitz, Karan Singh, and Elad Hazan.
\newblock Improper learning for non-stochastic control, 2020.

\bibitem{suggala2024secondordermethodsbandit}
Arun Suggala, Y.~Jennifer Sun, Praneeth Netrapalli, and Elad Hazan.
\newblock Second order methods for bandit optimization and control, 2024.

\bibitem{suo2022machinelearningmechanicalventilation}
Daniel Suo, Naman Agarwal, Wenhan Xia, Xinyi Chen, Udaya Ghai, Alexander Yu, Paula Gradu, Karan Singh, Cyril Zhang, Edgar Minasyan, Julienne LaChance, Tom Zajdel, Manuel Schottdorf, Daniel Cohen, and Elad Hazan.
\newblock Machine learning for mechanical ventilation control, 2022.

\end{thebibliography}
\bibliographystyle{plain}

\appendix

\newpage 
\section{Approximation Results}\label{appendix:approx}


In this section, we present the proof of Lemma~\ref{lem:approx}. We begin by showing that the class \( \mathcal{S} \) can be approximated by a family of controllers that are linear in spectral projections of past observations, computed over a fixed memory window. The following definition formalizes this class.

\begin{definition}[Spectral-Projection Linear Controller]\label{def:spectral_class}
Let \( h \in \mathbb{N} \) denote the number of spectral components, \( m \in \mathbb{N} \) the memory length, and \( \gamma \in (0,1) \) a stability parameter. We define the class of \emph{Spectral-Projection Linear Controllers} as:
\[
\left\{ \bu_t^K = \sum_{i=1}^{h} \sigma_i^{1/4} K_{i} Y^K_{t:t-m} \boldsymbol{\phi}_i \;\middle|\; K_i \in \mathbb{R}^{n \times p} \right\},
\]
where \( Y^K_{t:t-m} = \begin{bmatrix} y_t^K & y_{t-1}^K & \cdots & y_{t-m}^K \end{bmatrix} \in \mathbb{R}^{p \times (m+1)} \) denotes the matrix of past observations under controller \( K \), and \( (\sigma_i, \boldsymbol{\phi}_i) \in \mathbb{R} \times \mathbb{R}^{m+1} \) are the top \( h + 1\) eigenpairs (indexed from $0$) of the Hankel matrix \( H \in \mathbb{R}^{(m+1) \times (m+1)} \), with entries
\[
H_{ij} = \frac{(1 - \gamma)^{i + j - 1}}{i + j - 1}.
\]
\end{definition}

In Section~\ref{sec:approx_ldc_ssc}, we establish the following lemma, which shows that the class of linear dynamic controllers in \( \mathcal{S} \) can be approximated by the class of spectral-projection linear controllers, for appropriately chosen values of \( m \) and \( h \).

\begin{lemma}\label{lem:approx-ldc-lin}
    Let a LDC $(A_\pi, B_\pi, C_\pi) \in \mathcal{S}$. Then for 
    \[m\ge\frac{1}{\gamma}\log\left(\frac{56G\kappa^{13}\kappa_B\kappa_C^4W^2T}{\eps \gamma^3}\right)\,, \quad \quad h\ge2\log T\log\left(\frac{224G\kappa^{13}\kappa_B^2\kappa_C^5 W^2 \sqrt{m}d }{\eps\gamma^2}T\log T \log^{1/4}\left(\frac{2}{\gamma}\right)\right)\,,\]
    and $\eps \in (0, 1)$, there exists a spectral-projection linear controller with
    \[
    K_i = \sigma_i^{-1/4}\left(\sum_{j=1}^dC_\pi H_\pi e_je_j^\top H_\pi^{-1} B_\pi \boldsymbol{\phi}_i^{\top}\boldsymbol{\mu}_{\alpha_j}\right) \quad \text{where} \quad \mu_\alpha = [1, \alpha, \dots, \alpha^{m}] \in \reals^{m+1} \,,
    \]
    such that 
    \begin{align*}
        \sum_{t=1}^T \left|c_t(\y_t^K, \bu_t^K) - c_t(\y_t^\pi, \bu_t^\pi)\right| \leq \eps T \,.
    \end{align*}
\end{lemma}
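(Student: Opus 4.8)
\textbf{Proof proposal for Lemma~\ref{lem:approx-ldc-lin}.}

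The plan is to pass through two intermediate representations. First, I would unroll the LDC dynamics: since $\bs_{t+1} = A_\pi \bs_t + B_\pi \by_t^\pi$ with $\bs_0 = 0$, we get $\bu_t^\pi = C_\pi \bs_t = \sum_{k=1}^{t-1} C_\pi A_\pi^{k-1} B_\pi \by_{t-k}^\pi$. Using the diagonalization $A_\pi = H_\pi L_\pi H_\pi^{-1}$ with $L_\pi = \diag(\alpha_1,\dots,\alpha_d)$, $0 \le \alpha_j \le 1-\gamma$, this becomes $\bu_t^\pi = \sum_{j=1}^d \sum_{k=1}^{t-1} \alpha_j^{k-1} (C_\pi H_\pi e_j e_j^\top H_\pi^{-1} B_\pi)\, \by_{t-k}^\pi$. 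Truncating the inner sum at memory $m$ costs a tail of size $O(\kappa^{\text{poly}} (1-\gamma)^m / \gamma)$ per step in the control, which after multiplying by the observation bound and the Lipschitz constant $GD$ and summing over $T$ gives the stated requirement on $m$; I would track this geometric tail carefully using Assumption~\ref{assm:bounded-system} to bound $\|\by_t^\pi\|$ in terms of $\kappa,\kappa_B,\kappa_C,W,\gamma$. This reduces the LDC to a finite-memory operator $\bu_t \approx \sum_j M_{\alpha_j} Y^\pi_{t:t-m}\, \boldsymbol{\mu}_{\alpha_j}$ where $\boldsymbol{\mu}_\alpha = [1,\alpha,\dots,\alpha^m]^\top$ and $M_{\alpha_j} = C_\pi H_\pi e_j e_j^\top H_\pi^{-1} B_\pi$.

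The second, more delicate step is to express each geometric memory vector $\boldsymbol{\mu}_{\alpha}$ in the span of the top $h+1$ Hankel eigenvectors $\boldsymbol{\phi}_0,\dots,\boldsymbol{\phi}_h$. This is where the spectral filtering machinery enters: the Hankel matrix $H$ with entries $(1-\gamma)^{i+j-1}/(i+j-1)$ satisfies $H = \int_0^{1-\gamma} \boldsymbol{\mu}_\alpha \boldsymbol{\mu}_\alpha^\top\, d\alpha$ (up to reindexing), so it is a moment matrix of the geometric curve $\alpha \mapsto \boldsymbol{\mu}_\alpha$. Its eigenvalues decay exponentially — this is the classical fact from \cite{hazan2017learninglineardynamicalsystems, hazan2018spectralfilteringgenerallinear} that a Hankel/Hilbert-type matrix has eigenvalues bounded by $e^{-\Omega(i/\log(1/\gamma))}$ or similar — so projecting $\boldsymbol{\mu}_\alpha$ onto the orthogonal complement of the top $h+1$ eigenvectors leaves a residual whose norm (in an appropriately $H$-weighted sense) is exponentially small in $h$. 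Concretely I would write $\boldsymbol{\mu}_\alpha = \sum_{i=0}^h \langle \boldsymbol{\mu}_\alpha, \boldsymbol{\phi}_i\rangle \boldsymbol{\phi}_i + \boldsymbol{r}_\alpha$ and bound $\|\boldsymbol{r}_\alpha\|$ using $\sum_{i > h} \sigma_i \lesssim$ (exponential tail), then set $K_i = \sigma_i^{-1/4} \sum_j M_{\alpha_j}\, \boldsymbol{\phi}_i^\top \boldsymbol{\mu}_{\alpha_j}$, which is exactly the claimed form and makes the spectral-projection controller's action match the truncated LDC action up to the residual terms. The $\sigma_i^{1/4}$ weighting in Definition~\ref{def:spectral_class} is chosen so these coefficients stay bounded and the later online-learning analysis goes through; I need $\sum_i \sigma_i^{1/4} \|K_i\|$ type quantities controlled, which follows from Cauchy--Schwarz against $\sum_i \sigma_i^{-1/4}|\boldsymbol{\phi}_i^\top\boldsymbol{\mu}_\alpha|$ and the eigenvalue decay, contributing the $\sqrt{m}$, $\log^{1/4}(2/\gamma)$, and dimension factors in the bound on $h$.

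Finally I would combine: the total per-step error in $\bu_t$ is (truncation tail) $+$ (spectral residual summed over the $d$ modes, each weighted by $\|M_{\alpha_j}\| \le \kappa^4 \kappa_B\kappa_C$), and similarly the resulting error in $\by_t^K$ versus $\by_t^\pi$ must be propagated through the closed-loop dynamics — here I would invoke stability of $\mathcal{A}$ (item 3 of Definition~\ref{def:diagonalizable_LDC}, giving $\|\mathcal{A}^i\| \le \kappa^2(1-\gamma)^i$) so that a control perturbation of size $\delta$ at each step induces an observation perturbation of size $O(\kappa^{\text{poly}}\delta/\gamma)$, and then apply the Lipschitz bound on $c_t$ (Assumption~\ref{assm:lipschitz}) to convert both perturbations into a cost difference. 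Choosing $m$ and $h$ to make each contribution at most $\eps T / (\text{const})$ yields the stated thresholds. The main obstacle I anticipate is the second step: getting a clean, quantitatively explicit bound on the Hankel eigenvalue tail and hence on $\|\boldsymbol{r}_\alpha\|$ uniformly over $\alpha \in [0, 1-\gamma]$, and then correctly bookkeeping how the $\sigma_i^{\pm 1/4}$ factors interact with this tail so that both the approximation error is small \emph{and} the resulting $K$ lands in the constraint set $\K$ — this is the part where the $\log T$ factor in the exponent of $h$ (rather than a single $\log$) comes from, and it requires care with whether the decay rate is $1/\log(1/\gamma)$ or $1/\log T$ in the exponent.
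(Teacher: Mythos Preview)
Your two approximation steps---truncate the LDC's impulse response to memory $m$, then expand each geometric vector $\boldsymbol{\mu}_{\alpha_j}$ in the top $h{+}1$ Hankel eigenvectors---are exactly what the paper does, and your $K_i$ is the paper's. The gap is in how you combine them. You compute the per-step control error as ``truncation tail $+$ spectral residual'' as though both policies were acting on the \emph{same} observation window, and then say you will push a fixed-size perturbation $\delta$ through the closed loop $\mathcal{A}$. But the LDC acts on $(\by_s^\pi)_{s\le t}$ while the spectral controller acts on $(\by_s^K)_{s\le t}$, and these differ by an amount that itself depends on the control differences you are trying to bound. So $\|\bu_t^K-\bu_t^\pi\|$ is not simply $\delta$: it is $\delta$ plus a term of order $(\kappa^4/\gamma)\max_{s\le t}\|\by_s^K-\by_s^\pi\|$ coming from the Lipschitz constant of the LDC operator in its inputs, and feeding that back through the open-loop state gives a recursion whose contraction factor exceeds $1$. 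Invoking $\|\mathcal{A}^i\|\le\kappa^2(1-\gamma)^i$ does not help directly, because once you replace the LDC by $K$ the joint state no longer evolves under $\mathcal{A}$.

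The paper breaks this feedback with a hybrid argument. For each $\tau$ it defines $\pi_\tau$ to play the spectral controller for $t\le\tau-1$ and the (truncated, then full) LDC for $t\ge\tau$, so $\pi_0$ is the pure LDC and $\pi_{T+1}$ the pure spectral policy. Consecutive hybrids $\pi_\tau,\pi_{\tau-1}$ share the same observation history through time $\tau-1$, so at the switch the control discrepancy really is the spectral tail $\sum_{i>h}\sigma_i^{1/4}K_i\,Y_{\tau-1:\tau-m-1}\boldsymbol{\phi}_i$ on a \emph{common} window. Bounding $\|Y_{\tau-1:\tau-m-1}\|$ requires controlling the spectral controller's observations $\|\by_t^K\|$, which is Lemma~\ref{lem:splc_obs_control_bound} and uses item~4 of Definition~\ref{def:diagonalizable_LDC} (stability of the lifted closed loop $\mathcal{A}_{CL}$), not merely Assumption~\ref{assm:bounded-system} as you suggest. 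For $t>\tau$ both hybrids run the LDC, so now it is legitimate to use $\mathcal{A}$ to damp the single perturbation. Telescoping over $\tau$ costs a factor $T$, which is why $T$ appears inside the logarithms defining $m$ and $h$. The Hankel eigenvalue tail you flag as the main obstacle is not reproved here: the paper simply invokes Lemma~7.4 of~\cite{brahmbhatt2025newapproachcontrollinglinear} for $|\boldsymbol{\phi}_i^\top\boldsymbol{\mu}_\alpha|\lesssim\exp\bigl(-\pi^2 i/(16\log T)\bigr)$, and this supplies the $\log T$ in the exponent of $h$.
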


We now observe that any spectral-projection linear controller can be viewed as a static linear controller acting on a lifted dynamical system. Define the lifted system as follows:
\begin{align}
    \tilde{\x}_t = \tilde{A} \tilde{\x}_{t-1} + \tilde{B} \bu_t + \tilde{\bw}_t\,, \quad \quad \tilde{\y}_t = \tilde{C} \tilde{\x}_t\,, \label{eq:lifted-dynamics}
\end{align}
where
\begin{gather}
    \tilde{A} = 
    \begin{bmatrix}
        A & 0 & \cdots & 0 \\
        I & 0 & \cdots & 0 \\
        0 & I & \cdots & 0 \\
        \vdots & \vdots & \ddots & \vdots \\
        0 & 0 & \cdots & 0
    \end{bmatrix} \in \mathbb{R}^{(m+1)d \times (m+1)d}, 
    \quad
    \tilde{B} = 
    \begin{bmatrix}
        B \\ 0 \\ \vdots \\ 0
    \end{bmatrix} \in \mathbb{R}^{(m+1)d \times n}, 
    \quad
    \tilde{\bw}_t = 
    \begin{bmatrix}
        \bw_t \\ 0 \\ \vdots \\ 0
    \end{bmatrix} \in \mathbb{R}^{(m+1)d}. \label{eq:lifted-A-B}
\end{gather}

The output matrix \( \tilde{C} \) is given by:
\begin{equation}
    \tilde{C} = 
    \begin{bmatrix}
        C & \cdots & 0 \\
        \sigma_0^{1/4} \phi_0(1) C & \cdots & \sigma_0^{1/4} \phi_0(m+1) C \\
        \vdots & \ddots & \vdots \\
        \sigma_h^{1/4} \phi_h(1) C & \cdots & \sigma_h^{1/4} \phi_h(m+1) C
    \end{bmatrix} \in \mathbb{R}^{(h+2)p \times (m+1)d}, \label{eq:lifted-C}
\end{equation}
where \( \boldsymbol{\phi}_i = (\phi_i(1), \ldots, \phi_i(m+1))^\top \) is the $i+1$-th eigenvector of the Hankel matrix in Definition \ref{def:spectral_class}.

The control law defined by the spectral-projection linear controller corresponds to a static linear controller on the lifted system:
\begin{equation}
    K = 
    \begin{bmatrix}
        0 \\ K_0 \\ K_1 \\ \vdots \\ K_h
    \end{bmatrix} \in \mathbb{R}^{(h+2)p \times n},
    \quad \text{so that} \quad
    \bu_t = K \tilde{\y}_t. \label{eq:lifted-control}
\end{equation}

The lifted state and lifted output can be written compactly as:
\begin{align}
    \tilde{\x}_t &= \begin{bmatrix} \x_t^\top & \x_{t-1}^\top & \cdots & \x_{t-m}^\top \end{bmatrix}^\top \in \mathbb{R}^{(m+1)d}, \label{eq:lifted-x} \\
    \tilde{\y}^K_t &= \begin{bmatrix}
        (\y^K_t)^\top &
        \sigma_0^{1/4} (Y^K_{t:t-m} \boldsymbol{\phi}_0)^\top &
        \cdots &
        \sigma_h^{1/4} (Y^K_{t:t-m} \boldsymbol{\phi}_h)^\top
    \end{bmatrix}^\top \in \mathbb{R}^{(h+2)p}, \label{eq:lifted-y}
\end{align}

To evaluate performance, we define the lifted cost \( \tilde{c}_t \) by applying the original cost function \( c_t \) to the first \( p \) coordinates of \( \tilde{\y}_t \) (i.e., the original observation \( \y_t \)) and to the control \( \bu_t \).

Finally, in the zero-input (natural) system, the lifted observation becomes:
\begin{equation}
    \tilde{\y}_t^\nat = \begin{bmatrix}
        (\y_t^\nat)^\top &
        \sigma_0^{1/4} (Y_{t:t-m}^\nat \boldsymbol{\phi}_0)^\top &
        \cdots &
        \sigma_h^{1/4} (Y_{t:t-m}^\nat \boldsymbol{\phi}_h)^\top
    \end{bmatrix}^\top \in \mathbb{R}^{(h+2)p}. \label{eq:lifted-y-nat}
\end{equation}
where $Y^\nat_{t:t-m}$ is as defined in Algorithm \ref{alg:mainA}.

We now summarize key norm bounds for the lifted system components, which will be used in subsequent analysis.

\begin{itemize}
    \item The output dimension of the lifted system is $(h+2)p$.
    \item The lifted input matrix satisfies \( \|\tilde{B}\| \leq \kappa_B \), and the lifted noise satisfies \( \|\tilde{\bw}_t\| \leq W \) for all \( t \), by assumption.

    \item By Assumption~\ref{asm:zero-stabilizability}, the lifted dynamics matrix satisfies the exponential decay bound
    \[
        \|\tilde{A}^i\| \leq \kappa^2 (1 - \gamma)^i \quad \text{for all } i \geq 0.
    \]

    \item The norm of the lifted output matrix \( \tilde{C} \) can be bounded as follows. Each spectral projection row block after the first block in \( \tilde{C} \) is of the form
    \[
        \left[ \sigma_i^{1/4} \phi_i(1) C \;\; \cdots \;\; \sigma_i^{1/4} \phi_i(m+1) C \right],
    \]
    and since the eigenvectors \( \boldsymbol{\phi}_i \in \mathbb{R}^{m+1} \) are orthonormal and the eigenvalues \( \sigma_i \) are bounded by \( \sigma_i \leq \log\left(\frac{2}{\gamma}\right) \), defining $\Phi$ as a matrix with $\boldsymbol{\phi}_i^T$ as its rows, we have
    \[
        \|\tilde{C}\| \leq \norm{C} + \sigma_{\max}^{1/4} \norm{\Phi \otimes C} \leq \kappa_C\left(1 + \sqrt{h+1}\log^{1/4}\left(\frac{2}{\gamma}\right)\right) \leq 4 \kappa_C \sqrt{h} \cdot \log^{1/4}\left(\frac{2}{\gamma}\right)\,.
    \]

    \item The lifted cost function \( \tilde{c}_t \), which is defined by applying the original cost function \( c_t \) to the first \( p \) coordinates of \( \tilde{\y}_t \) and to \( \bu_t \), inherits the same Lipschitz constant \( G \) as the original cost. That is, for all \( t \),
    \[
        \text{Lip}(\tilde{c}_t) \leq G.
    \]

    \item Finally, for the control matrix \( K \in \mathbb{R}^{(h+2)p \times n} \) defined in the proof of Lemma~\ref{lem:approx-ldc-lin}, the norm can be bounded as
    \[
        \|K\| \leq \kappa^4 \sqrt{\frac{2(h+1)}{\gamma}} \leq 2 \kappa^4 \sqrt{\frac{h}{\gamma}}.
    \]
\end{itemize}

In section \ref{sec:linear}, we prove the following lemma that states that linear controllers can be approximated by spectral controller defined as follows. 

\begin{lemma}\label{lem:approx-lin-spec}
Consider a system with parameters $(A, B, C)$ such that $\|B\| \leq \tilde{\kappa}_B$, $\|C\| \leq \tilde{\kappa}_C$, and $\|A^i\| \leq \tilde{\kappa}^2(1 - \tilde{\gamma})^i$ for all $i \in \mathbb{N}$. Let $\{c_t\}_{t=1}^T$ be a sequence of cost functions that are $\tilde{G}$-Lipschitz over the domain $\{\|\by\|, \|\bu\| \leq D\}$. Suppose $K \in \mathbb{R}^{n \times p}$ is a linear controller such that $A + BKC = HLH^{-1}$, where $L$ is a diagonal matrix with nonnegative entries satisfying $\|L\| \leq 1 - \tilde{\gamma}$, and $\|H\|, \|H^{-1}\| \leq \tilde{\kappa}$.

Then, for any $\epsilon \in (0, 1)$, there exists a spectral controller of the form
\[
\bu_t^M = M_0 \by_t^\nat + \sum_{i=1}^{\tilde{h}} \lambda_i^{1/4} M_i Y_{t-1:t-\tilde{m}}^\nat \boldsymbol{\varphi}_i,
\]
where $(\lambda_i, \boldsymbol{\varphi}_i)$ are the top $\tilde{h}$ eigenpairs of the Hankel matrix $H \in \mathbb{R}^{\tilde{m} \times \tilde{m}}$ with $H_{ij} = \frac{(1 - \tilde{\gamma})^{i + j - 1}}{i + j - 1}$, such that
\[
\sum_{t=1}^T \left|c_t(\by_t^M, \bu_t^M) - c_t(\by_t^K, \bu_t^K)\right| \leq 2\epsilon T,
\]
provided
\[
\tilde{m} \geq \frac{1}{\tilde{\gamma}} \log\left(\frac{6\tilde{G} \tilde{\kappa}^{14} \tilde{\kappa}_B^3 \tilde{\kappa}_C^5 \tilde{W}^2}{\epsilon \tilde{\gamma}^5}\right), \quad
\tilde{h} \geq 2\log T \cdot \log\left(\frac{400\tilde{G} \tilde{\kappa}^{14} \tilde{\kappa}_B^3 \tilde{\kappa}_C^5 \tilde{W}^2 \sqrt{\tilde{m}} d}{\epsilon \tilde{\gamma}^{9/2}} \log T \log^{1/4}\left(\frac{2}{\tilde{\gamma}}\right)\right).
\]
Moreover, the parameters $M$ lie in the bounded set
\[
\mathcal{K} = \left\{M \in \mathbb{R}^{\tilde{h} \times n \times p} \;\middle|\; \|\by_t^M\|, \|\bu_t^M\| \leq \frac{4\tilde{\kappa}^6 \tilde{\kappa}_B \tilde{\kappa}_C^2 \tilde{W}}{\tilde{\gamma}^2}, \ \|M_0\| \leq \tilde{\kappa}, \ \|M_i\| \leq \tilde{\kappa}^4 \tilde{\kappa}_B \tilde{\kappa}_C \sqrt{\frac{2}{\tilde{\gamma}}} \right\}.
\]
\end{lemma}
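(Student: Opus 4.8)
The plan is to mirror the spectral-filtering-for-prediction argument, but carried out on the \emph{closed-loop} system driven by the natural observation sequence. First I would set up the relevant closed-loop object: since $K$ is a static linear controller with $A_{CL} := A + BKC = HLH^{-1}$ marginally stable, I would write the observation $\by_t^K$ as a convolution of the natural observation sequence $\by^\nat$ with the closed-loop impulse response. Concretely, iterating the closed-loop dynamics one sees that $\by_t^K$ differs from $\by_t^\nat$ by a linear functional of $\{\by_s^\nat\}_{s<t}$ whose kernel is built from powers $A_{CL}^i = H L^i H^{-1}$; because $L$ is diagonal with entries in $[0,1-\tilde\gamma]$, each scalar channel of this kernel is a nonnegative-power geometric sequence $\alpha^i$ with $\alpha\in[0,1-\tilde\gamma]$. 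This is exactly the regime covered by the Hankel-matrix spectral-filtering machinery: the vectors $\boldsymbol\mu_\alpha = (1,\alpha,\dots,\alpha^{\tilde m-1})$ all lie (up to exponentially small tail error, controlled by the choice of $\tilde m$) in the span of the top $\tilde h$ eigenvectors $\boldsymbol\varphi_i$ of the Hankel matrix $\tilde H$, with coefficients of controlled magnitude.

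Next I would use this to construct the matrices $M_i$ explicitly, in direct analogy with the formula in Lemma~\ref{lem:approx-ldc-lin}: $M_0$ absorbs the ``zeroth-order'' term $\by_t^\nat$ (hence the bound $\|M_0\|\le\tilde\kappa$), and each $M_i$ for $i\ge 1$ is obtained by diagonalizing the closed-loop kernel through $H,H^{-1}$, expanding each diagonal channel $\alpha^i$ in the $\boldsymbol\varphi$-basis, and collecting the $i$-th coefficient; the extra $\lambda_i^{1/4}$ normalization (and the matching $\lambda_i^{1/4}$ inside the spectral features) is what keeps $\|M_i\|$ bounded by $\tilde\kappa^4\tilde\kappa_B\tilde\kappa_C\sqrt{2/\tilde\gamma}$, using $\|H\|,\|H^{-1}\|\le\tilde\kappa$, $\|B\|\le\tilde\kappa_B$, $\|C\|\le\tilde\kappa_C$, and the eigenvalue/eigenvector properties of $\tilde H$. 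The norm bounds $\|\by_t^M\|,\|\bu_t^M\|\le 4\tilde\kappa^6\tilde\kappa_B\tilde\kappa_C^2\tilde W/\tilde\gamma^2$ then follow by bounding $\|\by_t^\nat\|\le \tilde\kappa^2\tilde\kappa_C\tilde W/\tilde\gamma$ (geometric sum of $\|A^i\|\,\|C\|\,W$) and propagating through the controller, which simultaneously verifies that $M$ lies in the claimed set $\mathcal K$.

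Finally I would convert the per-step control/observation approximation error into the aggregate cost bound. There are two sources of error to track: (i) the truncation of the convolution to a finite memory window $\tilde m$, which contributes an error decaying like $\tilde\kappa^{O(1)}(1-\tilde\gamma)^{\tilde m}$ per step — this is where the stated lower bound on $\tilde m$ comes from, chosen so the total over $T$ steps is $\le \epsilon T$ after also accounting for the $\tilde G D$ Lipschitz factor and the signal-magnitude bound $D$; and (ii) the spectral tail error from approximating each $\boldsymbol\mu_\alpha$ by its projection onto the top $\tilde h$ eigenvectors, which decays like $e^{-\Omega(\tilde h/\log T)}$ times a polynomial in $\tilde m,d,\tilde\kappa,\ldots$ — this is where the lower bound on $\tilde h$ comes from. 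Summing the two, using $\tilde G$-Lipschitzness of $c_t$ to pass from $(\by,\bu)$-errors to cost errors, and using the diameter bound $D$ from $\mathcal K$ to control the Lipschitz prefactor, yields $\sum_t |c_t(\by_t^M,\bu_t^M) - c_t(\by_t^K,\bu_t^K)| \le 2\epsilon T$. The main obstacle I anticipate is the bookkeeping in step (i)--(ii): unlike in prediction, the comparator's observations $\by_t^K$ are themselves generated by feeding the controller's \emph{own} past outputs back into the plant, so the convolution identity relating $\by^K$ to $\by^\nat$ must be established carefully (it is linear, but mixing controller and plant dynamics), and one must make sure the error does not compound across the feedback loop — this is controlled by the closed-loop stability bound $\|A_{CL}^i\|\le\tilde\kappa^2(1-\tilde\gamma)^i$, but getting the constants to line up with the stated $\mathcal K$ is the delicate part.
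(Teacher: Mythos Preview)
Your proposal is correct and follows essentially the same route as the paper: the paper factors the argument through an explicit intermediate ``open-loop optimal controller'' $\bu_t^{K,m} = K\by_t^\nat + \sum_{i=1}^m KC(A+BKC)^{i-1}BK\,\by_{t-i}^\nat$ (Lemma~\ref{lem:approx-oloc} handles your error source (i), the memory truncation, and Lemma~\ref{lem:approx-spectral} handles your error source (ii), the spectral tail), but the closed-loop convolution identity, the diagonalization $L^{i-1}=\sum_j \alpha_j^{i-1}e_je_j^\top$, the explicit choice $M_i=\lambda_i^{-1/4}KCH\bigl(\sum_j \boldsymbol{\varphi}_i^\top\boldsymbol{\mu}_{\alpha_j}e_je_j^\top\bigr)H^{-1}BK$, and the final Lipschitz-to-cost conversion are exactly as you describe. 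Your concern about error compounding through the feedback loop is resolved precisely because the spectral controller is open-loop in $\by^\nat$ (no feedback), so the $\by^M$-to-$\by^{K,m}$ and $\by^{K,m}$-to-$\by^K$ errors are each controlled by a single geometric sum in the plant dynamics $A$, not the closed loop.
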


Thus, we complete the proof of Lemma \ref{lem:approx} as follows:

\begin{proof}[Proof of Lemma \ref{lem:approx}]
    Substituting the values of the bounds in Lemma \ref{lem:approx-lin-spec} and substituting $\eps = 1/\sqrt{T}$ completes the proof.
\end{proof}


\subsection{Approximating LDC with spectral Surrogate Controller}\label{sec:approx_ldc_ssc}
In this section, we prove Lemma~\ref{lem:approx-ldc-lin}. We begin by deriving bounds on the observations and controls generated by any linear dynamic controller in \( \mathcal{S} \), as well as by the corresponding spectral-projection linear controller constructed in Lemma~\ref{lem:approx-ldc-lin}. These bounds are established in the following lemmas.

\begin{lemma}\label{lem:ldc_obs_control_bound}
Let $(A_\pi, B_\pi, C_\pi) \in \mathcal{S}$. Then for all $t \ge 0$, the observation and control satisfy:
\[
\|\by_t^\pi\|, \|\bu_t^\pi\| \le \frac{\kappa_C \kappa^3 W}{\gamma}.
\]
\end{lemma}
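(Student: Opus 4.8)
The plan is to bound the controller's internal state $\bs_t$ first, then propagate that bound through the system dynamics to control $\bx_t^\pi$, and finally read off $\by_t^\pi = C\bx_t^\pi$ and $\bu_t^\pi = C_\pi \bs_t$. The cleanest route uses the joint closed-loop matrix $\mathcal{A} = \begin{bmatrix} A & BC_\pi \\ B_\pi C & A_\pi \end{bmatrix}$ from Definition~\ref{def:diagonalizable_LDC}, which by assumption satisfies $\|\mathcal{A}^i\| \le \kappa^2(1-\gamma)^i$. The joint state $\begin{bmatrix} \bx_t^\pi \\ \bs_t \end{bmatrix}$ evolves as $\begin{bmatrix} \bx_{t+1}^\pi \\ \bs_{t+1} \end{bmatrix} = \mathcal{A}\begin{bmatrix} \bx_t^\pi \\ \bs_t \end{bmatrix} + \begin{bmatrix} \bw_t \\ 0 \end{bmatrix}$, so unrolling from $\bx_0 = 0$, $\bs_0 = 0$ gives $\begin{bmatrix} \bx_t^\pi \\ \bs_t \end{bmatrix} = \sum_{i=0}^{t-1} \mathcal{A}^i \begin{bmatrix} \bw_{t-1-i} \\ 0 \end{bmatrix}$.

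From here the bound is a geometric series: $\left\|\begin{bmatrix} \bx_t^\pi \\ \bs_t \end{bmatrix}\right\| \le \sum_{i=0}^{t-1} \kappa^2 (1-\gamma)^i W \le \frac{\kappa^2 W}{\gamma}$, using $\sum_{i \ge 0}(1-\gamma)^i = 1/\gamma$ and $\|\bw_t\| \le W$ from Assumption~\ref{assm:bounded-system}. This simultaneously bounds $\|\bx_t^\pi\| \le \kappa^2 W/\gamma$ and $\|\bs_t\| \le \kappa^2 W/\gamma$. Then $\|\by_t^\pi\| = \|C\bx_t^\pi\| \le \kappa_C \cdot \kappa^2 W/\gamma$ and $\|\bu_t^\pi\| = \|C_\pi \bs_t\| \le \kappa \cdot \kappa^2 W / \gamma = \kappa^3 W/\gamma$. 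Since $\kappa_C \ge 1$ by the normalization convention stated after Assumption~\ref{asm:zero-stabilizability}, both quantities are bounded by $\kappa_C \kappa^3 W/\gamma$, matching the claim. (One could alternatively bound $\bs_t$ directly via its own recursion $\bs_{t+1} = A_\pi \bs_t + B_\pi \by_t$, but that requires already controlling $\by_t$, creating a circularity that is cleanest to avoid by using the joint matrix.)

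**I do not expect a serious obstacle here** — the lemma is a routine stability estimate. The only mild subtlety is making sure the factor bookkeeping lands exactly on $\kappa_C\kappa^3 W/\gamma$ rather than something slightly larger: one must use $\|\mathcal{A}^i\| \le \kappa^2(1-\gamma)^i$ (property 3 of Definition~\ref{def:diagonalizable_LDC}) rather than naively composing the individual block bounds, which would produce extra $\kappa$ factors, and one must invoke $\kappa_C \ge 1$ to absorb the control bound under the same expression. A secondary point worth stating explicitly is that the sum $\sum_{i=0}^{t-1}(1-\gamma)^i \le \sum_{i=0}^\infty (1-\gamma)^i = 1/\gamma$ holds uniformly in $t$, so the bound is time-independent as claimed.
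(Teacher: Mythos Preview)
Your proof is correct and essentially identical to the paper's: both form the joint state $\begin{bmatrix}\bx_t^\pi \\ \bs_t\end{bmatrix}$, unroll the closed-loop recursion governed by $\mathcal{A}$, and apply the geometric bound $\|\mathcal{A}^i\|\le\kappa^2(1-\gamma)^i$ together with $\|\bw_t\|\le W$ to obtain $\|\by_t^\pi\|\le\kappa^2\kappa_C W/\gamma$ and $\|\bu_t^\pi\|\le\kappa^3 W/\gamma$. The only cosmetic difference is that you explicitly invoke $\kappa_C\ge 1$ to merge the two bounds into the single stated expression, whereas the paper leaves the two separate (tighter) bounds and lets the reader absorb them.
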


\begin{proof}
Define the joint state \( \bz_t^\pi = \begin{bmatrix} \bx_t^\pi \\ \bs_t^\pi \end{bmatrix} \), with closed-loop dynamics
\[
\bz_{t+1}^\pi = \mathcal{A} \bz_t^\pi + \mathcal{B} \bw_t,
\quad \text{where} \quad
\mathcal{A} = 
\begin{bmatrix}
A & B C_\pi \\
B_\pi C & A_\pi
\end{bmatrix}, \quad
\mathcal{B} = 
\begin{bmatrix}
I \\ 0
\end{bmatrix}.
\]
Since \( \bz_0^\pi = 0 \), we have
\[
\bz_t^\pi = \sum_{i=0}^{t-1} \mathcal{A}^{t-1-i} \mathcal{B} \bw_i.
\]
The outputs are \( \by_t^\pi = [C \;\; 0] \bz_t^\pi \), \( \bu_t^\pi = [0 \;\; C_\pi] \bz_t^\pi \). Using definition \ref{def:diagonalizable_LDC},  \( \|\mathcal{A}^i\| \le \kappa^2 (1 - \gamma)^i \), \( \|\mathcal{B}\| = 1 \), \( \|C\| \le \kappa_C \), \( \|C_\pi\| \le \kappa \), and \( \|\bw_i\| \le W \), we get:
\[
\|\by_t^\pi\| \le \kappa_C \sum_{i=0}^{t-1} \|\mathcal{A}^i\| W \le \frac{\kappa^2 \kappa_C W}{\gamma}, \quad
\|\bu_t^\pi\| \le \kappa \sum_{i=0}^{t-1} \|\mathcal{A}^i\| W \le \frac{\kappa^3 W}{\gamma}.
\]
\end{proof}

\begin{lemma}\label{lem:splc_obs_control_bound}
    Let $(A_\pi, B_\pi, C_\pi) \in \mathcal{S}$ and let $K$ be as mentioned in Lemma \ref{lem:approx-ldc-lin} for $(A_\pi, B_\pi, C_\pi)$. Then, for any $t \geq 0$,
    \[
    \norm{\by_t^K} \leq \frac{\kappa^2\kappa_C W}{\gamma}\,.
    \]
\end{lemma}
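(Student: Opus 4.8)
The plan is to view the spectral-projection linear controller $K$ from Lemma~\ref{lem:approx-ldc-lin} as a \emph{static} linear policy acting on the lifted dynamics $(\tilde{A},\tilde{B},\tilde{C})$ of \eqref{eq:lifted-dynamics}, and then to repeat the geometric-series argument of Lemma~\ref{lem:ldc_obs_control_bound} at the level of the lifted closed loop. Since $\bu_t^K=\sum_{i=1}^h\sigma_i^{1/4}K_i\,Y^K_{t:t-m}\boldsymbol{\phi}_i$ with the $K_i$ fixed and $Y^K_{t:t-m}=[\,C\x_t^K\ \cdots\ C\x_{t-m}^K\,]$, the control is a fixed linear function of the lifted state $\tilde{\x}_t^K=[(\x_t^K)^\top\,\cdots\,(\x_{t-m}^K)^\top]^\top$ from \eqref{eq:lifted-x}. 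Plugging this into $\x_{t+1}^K=A\x_t^K+B\bu_t^K+\bw_t$ and stacking the shifted copies of the state shows that $\tilde{\x}_t^K$ evolves autonomously, $\tilde{\x}_{t+1}^K=\mathcal{A}_{CL}\,\tilde{\x}_t^K+\tilde{\bw}_t$, where $\tilde{\bw}_t=[\bw_t^\top\ 0\ \cdots\ 0]^\top$ as in \eqref{eq:lifted-A-B} and $\mathcal{A}_{CL}$ is precisely the companion-block closed-loop matrix referenced in item~4 of Definition~\ref{def:diagonalizable_LDC}: its top block row is $[\,A+G_0\ \ G_1\ \cdots\ G_m\,]$ with $G_k=\sum_{i=1}^h\sigma_i^{1/4}\phi_i(k+1)\,BK_iC$, and its sub-diagonal blocks are copies of the identity.

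With the closed loop in this form, I would invoke item~4 of Definition~\ref{def:diagonalizable_LDC}, which posits $\mathcal{A}_{CL}=HLH^{-1}$ with $L$ real diagonal, $\|L\|\le 1-\gamma$, and $\|H\|,\|H^{-1}\|\le\kappa$; hence $\|\mathcal{A}_{CL}^{\,i}\|=\|HL^iH^{-1}\|\le\kappa^2(1-\gamma)^i$ for every $i\ge0$, the analogue of the bound on $\|\mathcal{A}^i\|$ that drove Lemma~\ref{lem:ldc_obs_control_bound}. Since $\x_0=0$ (with past states and observations zero by convention), the lifted state starts at $\tilde{\x}_0^K=0$, so unrolling gives $\tilde{\x}_t^K=\sum_{i=0}^{t-1}\mathcal{A}_{CL}^{\,t-1-i}\tilde{\bw}_i$. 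Then $\|\tilde{\bw}_i\|=\|\bw_i\|\le W$ and $\sum_{i\ge0}(1-\gamma)^i=1/\gamma$ give $\|\tilde{\x}_t^K\|\le\kappa^2W/\gamma$, and since $\by_t^K=C\x_t^K$ with $\x_t^K$ the top $d$-block of $\tilde{\x}_t^K$, we conclude $\|\by_t^K\|\le\|C\|\,\|\tilde{\x}_t^K\|\le\kappa_C\kappa^2W/\gamma$, exactly as claimed.

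The main obstacle is the first step — expressing the lifted closed loop so that its transition operator is literally the matrix $\mathcal{A}_{CL}$ of Definition~\ref{def:diagonalizable_LDC}. This requires a little care because $\bu_t^K$ reads the current observation $\by_t^K=C\x_t^K$, which produces the $G_0$ correction to the top-left block; one checks that this introduces no algebraic loop, since $\x_t^K$ depends only on $\bu_{0:t-1}^K$ and the disturbances, and that $\tilde{\bw}_t$ carries exactly the norm of $\bw_t$ so no spurious constants appear. Everything after this reduction is the same bookkeeping as in the proof of Lemma~\ref{lem:ldc_obs_control_bound}.
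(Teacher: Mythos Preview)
Your proposal is correct and takes essentially the same approach as the paper: both lift the spectral-projection linear controller to a static policy on the system $(\tilde{A},\tilde{B},\tilde{C})$, identify the closed loop as the matrix $\mathcal{A}_{CL}$ from item~4 of Definition~\ref{def:diagonalizable_LDC}, use that item to get $\|\mathcal{A}_{CL}^{\,i}\|\le\kappa^2(1-\gamma)^i$, unroll from $\tilde{\x}_0^K=0$, sum the geometric series against $\|\tilde{\bw}_i\|\le W$, and read off $\by_t^K$ from the top block via $\|C\|\le\kappa_C$. Your write-up is simply more explicit about the companion-block structure of $\mathcal{A}_{CL}$, whereas the paper packages it as $\tilde{A}+\tilde{B}K\tilde{C}$; the argument is otherwise identical.
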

\begin{proof}
Observe that our spectral-projection linear controller can be represented by the lifted system defined in Equations \eqref{eq:lifted-dynamics}, \eqref{eq:lifted-A-B}, \eqref{eq:lifted-C} and \eqref{eq:lifted-control} for the choice of $K_i$ in Lemma \ref{lem:approx-ldc-lin}. By definition \ref{def:diagonalizable_LDC}, $\tilde{A}+\tilde{B}\tilde{K}\tilde{C} = HLH^{-1}$, and thus $\norm{(\tilde{A}+\tilde{B}\tilde{K}\tilde{C})^i} \leq \kappa^2 (1-\gamma)^i$. Moreover, observe that $\norm{\tilde{\bw}_t} \leq W$. Thus,
\[
\tilde{\x}_t^K = \sum_{i=1}^t (\tilde{A} + \tilde{B}\tilde{K}\tilde{C})^{i-1} \tilde{\bw}_{t-i} \implies \norm{\tilde{\x}_t^K} \leq \frac{\kappa^2 W}{\gamma}\,.
\]
Using Eqn \eqref{eq:lifted-x}, it is clear that $\by_t^K = \begin{bmatrix} C & 0 & \dots & 0 \end{bmatrix} \tilde{\x}_t^K$ and thus, $\norm{\y_t^K} \leq \frac{\kappa^2\kappa_CW}{\gamma}$
\end{proof}

We now complete the proof of Lemma \ref{lem:approx-ldc-lin} as follows:

\begin{proof}[Proof of Lemma \ref{lem:approx-ldc-lin}]
    Observe that
    \[
    \bu_t^\pi = \sum_{i=0}^t C_\pi A_\pi^{i} B_\pi \by_{t-i}^\pi\,.
    \]
    Consider the following policy $\pi_\tau$ defined for some $\tau \in \mathbb{N} \cup \{0\}$:
    \[
    \bu_t^{\pi_\tau} = \begin{cases}
        \sum_{i=0}^h \sigma_i^{1/4} K_i Y^{\pi_\tau}_{t:t-m} \boldsymbol{\phi}_i & \text{if } t \leq \tau - 1 \\
        \sum_{i=0}^m C_\pi A_\pi^i B_\pi \by_{t-i}^{\pi_\tau} & \text{if } t = \tau \\
        \sum_{i=0}^t C_\pi A_\pi^i B_\pi \by_{t-i}^{\pi_\tau} & \text{if } t \geq \tau + 1
    \end{cases}
    \]
    Observe that $\pi_0$ is pure LDC policy and that $\pi_{T+1}$ is purely spectral policy. In order to bound the difference of costs between these, we shall bound the difference of costs between $\pi_\tau$ and $\pi_{\tau - 1}$ for any $\tau \in [T]$.

    Observe that for all $t \in [0, \tau-2]$, $\bu_t^{\pi_\tau} = \bu_t^{\pi_{\tau - 1}}$ and for all $t \in [0, \tau - 1]$, $\by_t^{\pi_\tau} = \by_t^{\pi_{\tau - 1}}$ by the definition of $\pi_\tau$. Thus, $Y^{\pi_\tau}_{\tau - 1: \tau - m -1} = Y^{\pi_{\tau - 1}}_{\tau - 1: \tau - m -1}$ and this gives us
    \begin{align*}
        \bu_{\tau - 1}^{\pi_{\tau-1}} = \sum_{i=1}^m C_\pi A_\pi^i B_\pi \y^{\pi_{\tau - 1}}_{\tau - i - 1} = \sum_{i=0}^m C_\pi H_\pi L_\pi^{i} H_\pi^{-1}B_\pi \by_{\tau-i-1}^{\pi_{\tau-1}}\,.
    \end{align*}
    Writing $L_\pi^i = \sum_{j=1}^d \alpha_j^i e_j e_j^\top$, and defining $\mu_\alpha = \begin{bmatrix} 1 & \alpha & \dots & \alpha^{m}\end{bmatrix}^\top \in \reals^{m+1}$, we get
    \begin{align*}
        \bu_{\tau - 1}^{\pi_{\tau-1}} = \sum_{i=0}^m C_\pi H_\pi \left(\sum_{j=1}^d \alpha_j^i e_j e_j^\top\right) H_\pi^{-1}B_\pi \by_{\tau-i-1}^{\pi_{\tau-1}} = \sum_{j=1}^d C_\pi H_\pi e_j e_j^\top H_\pi^{-1}B_\pi Y^{\pi_{\tau-1}}_{\tau - 1: \tau - m -1} \mu_{\alpha_j}\,.
    \end{align*}
    Since $\{\boldsymbol{\phi}_i \, | \, i \in [m] \cup \{0\}\}$ form an orthonormal basis,
    \begin{align*}
        \bu_{\tau - 1}^{\pi_{\tau-1}} = \sum_{j=1}^d C_\pi H_\pi e_j e_j^\top H_\pi^{-1}B_\pi Y^{\pi_{\tau-1}}_{\tau - 1: \tau - m -1}\left(\sum_{i=0}^{m} \boldsymbol{\phi}_i \boldsymbol{\phi}_i^\top\right) \mu_{\alpha_j} = \sum_{i=0}^{m} \sigma_i^{1/4} K_i Y^{\pi_{\tau-1}}_{\tau - 1:\tau -m - 1} \boldsymbol{\phi}_i\,.
    \end{align*}
    Since, $Y^{\pi_{\tau - 1}}_{\tau-1: \tau-m-1} = Y^{\pi_{\tau}}_{\tau-1: \tau-m-1}$, and all of $\{\y_t^{\pi_{\tau}} \, | \, t \in [\tau - 1]\}$ are outputs of the spectral-projection linear policy. Using Lemma \ref{lem:splc_obs_control_bound}, we know that $\norm{Y^{\pi_{\tau}}_{\tau-1: \tau-m-1}} \leq \frac{\sqrt{m+1} \kappa^2 \kappa_C W}{\gamma} \leq \frac{2\kappa^2\kappa_C W\sqrt{m}}{\gamma}$. Moreover, 
    \[
    \bu_t^{\pi_{\tau - 1}} = \sum_{i=0}^{m} \sigma_i^{1/4} K_i Y^{\pi_{\tau}}_{\tau - 1:\tau -m - 1} \boldsymbol{\phi}_i\,.
    \]
    Taking the difference, we get that:
    \begin{align*}
        \norm{\bu_{\tau - 1}^{\pi_{\tau}} - \bu_{\tau - 1}^{\pi_{\tau-1}}}  \leq \sum_{i=h+1}^m \norm{\sigma_i^{1/4} K_i} \norm{Y^{\pi_\tau}_{\tau - 1:\tau -m - 1}} \leq \frac{2\kappa^6\kappa_C W\sqrt{m}}{\gamma} \sum_{i=h+1}^{m} \sum_{j=1}^d |\boldsymbol{\phi}_i^\top \boldsymbol{\mu}_{\alpha_j}|
    \end{align*}
    Using Lemma 7.4 of~\cite{brahmbhatt2025newapproachcontrollinglinear}, we get that:
    \begin{align}
        \norm{\bu_{\tau - 1}^{\pi_\tau} - \bu_{\tau - 1}^{\pi_{\tau - 1}}} & \leq \frac{2\kappa^6\kappa_C Wd\sqrt{m}}{\gamma} \log^{1/4}\left(\frac{2}{\gamma}\right) \int_{h}^{\infty} \exp\left(-\frac{\pi^2 j}{16 \log T}\right) dj \nonumber \\
        & \leq \frac{4\kappa^6\kappa_C Wd\sqrt{m}}{\gamma} \log T \log^{1/4}\left(\frac{2}{\gamma}\right)\exp\left(-\frac{\pi^2 h}{16 \log T}\right) \nonumber \\
        & \leq \delta_1 \quad \quad \quad \quad [h \geq 2\log T \log\left(\frac{4\kappa^6\kappa_C Wd \sqrt{m}}{\delta_1 \gamma}\log T \log^{1/4}\left(\frac{2}{\gamma}\right)\right)] \label{eq:u_tau-1_bound}
    \end{align}
Now, observe that since the controls of $\pi_\tau$ and $\pi_{\tau - 1}$ are the same until time $\tau - 2$, $\bx_{\tau - 1}^{\pi_\tau} = \bx_{\tau - 1}^{\pi_{\tau - 1}}$. Thus, using eqn. \eqref{eq:u_tau-1_bound}, we get that:
\begin{align}
    \norm{\bx_{\tau}^{\pi_\tau} - \bx_{\tau}^{\pi_{\tau - 1}}} = \norm{B(\bu_{\tau - 1}^{\pi_\tau} - \bu_{\tau - 1}^{\pi_{\tau - 1}})} \leq \kappa_B \delta_1 \label{eq:x_tau_bound}
\end{align} and
\begin{align}
    \norm{\by_{\tau}^{\pi_\tau} - \by_{\tau}^{\pi_{\tau - 1}}} = \norm{CB(\bu_{\tau - 1}^{\pi_\tau} - \bu_{\tau - 1}^{\pi_{\tau - 1}})} \leq \kappa_B \kappa_C \delta_1 \label{eq:y_tau_bound}
\end{align} 
Again by using the fact that $\y^{\pi_\tau}_{t-i} = \y^{\pi_{\tau - 1}}_{t-i}$ for all $i \geq 1$,
\begin{align}
    \norm{\bu^{\pi_\tau}_\tau-\bu^{\pi_{\tau-1}}_\tau}
    &
    =
    \norm{\sum_{i=0}^\tau C_\pi A_\pi^i B_\pi \by_{\tau-i}^{\pi_\tau}-\sum_{i=0}^m C_\pi A_\pi^i B_\pi \by_{\tau-i}^{\pi_{\tau-1}}} \nonumber
    \\
    &
    \le
    \norm{C_\pi B_\pi\left(\by_\tau^{\pi_\tau}-\by_\tau^{\pi_{\tau-1}}\right)}+\norm{\sum_{i=1}^\tau C_\pi A_\pi^i B_\pi \by_{\tau-i}^{\pi_\tau}-\sum_{i=1}^m C_\pi A_\pi^i B_\pi \by_{\tau-i}^{\pi_{\tau-1}}} \nonumber
    \\
    &
    \le
    \kappa_B\kappa_C\kappa^2\delta_1 +\frac{\kappa^6\kappa_C W}{\gamma^2}\left(1-\gamma\right)^{m+1} \quad \quad \quad \quad \quad \quad [\mbox{choice of }h] \nonumber \\
    &
    \le
    2\kappa^2\kappa_B\kappa_C\delta_1 \quad \quad \quad \quad \quad \quad [m \geq \frac{1}{\gamma}\log\left(\frac{\kappa^4W}{\kappa_B\gamma^2\delta_1}\right)] \label{eq:u_tau_bound}
\end{align}
Now, for any \(t>\tau\), we have that
\begin{align*}
    \begin{bmatrix}
\x_t^{\pi_\tau}-\x_t^{\pi_{\tau-1}} \\[4pt]
\bs_t^{\pi_\tau}-\bs_t^{\pi_{\tau-1}}
\end{bmatrix}
=
\begin{bmatrix}
A & B_\pi C \\[4pt]
BC_\pi & A_\pi
\end{bmatrix}
\begin{bmatrix}
\x_{t-1}^{\pi_\tau}-\x_{t-1}^{\pi_{\tau-1}} \\[4pt]
\bs_{t-1}^{\pi_\tau}-\bs_{t-1}^{\pi_{\tau-1}}
\end{bmatrix}=\begin{bmatrix}
A & B_\pi C \\[4pt]
BC_\pi & A_\pi
\end{bmatrix}^{t-\tau-1}\begin{bmatrix}
\x_{\tau+1}^{\pi_\tau}-\x_{\tau+1}^{\pi_{\tau-1}} \\[4pt]
\bs_{\tau+1}^{\pi_\tau}-\bs_{\tau+1}^{\pi_{\tau-1}}
\end{bmatrix}\,.
\end{align*}
In order to compute the terms on the RHS,
\begin{align}
    \norm{\x_{\tau+1}^{\pi_\tau}-\x_{\tau+1}^{\pi_{\tau-1}}}
    &
    =
    \norm{A\left(\x_{\tau}^{\pi_\tau}-\x_{\tau}^{\pi_{\tau-1}}\right)+B\left(\bu^{\pi_\tau}_\tau-\bu^{\pi_{\tau-1}}_\tau\right)}
    \nonumber \\
    &
    \le
    \kappa^2\norm{\bx_{\tau}^{\pi_{\tau}} - \bx_{\tau}^{\pi_{\tau-1}}}+\kappa_B\norm{\bu^{\pi_\tau}_\tau-\bu^{\pi_{\tau-1}}_\tau}
    \nonumber \\
    &
    \le
    \kappa^2\kappa_B\delta_1 + 2\kappa^2\kappa_B^2\kappa_C\delta_1 \leq 3\kappa^2\kappa_B^2\kappa_C\delta_1\,. \quad \quad [\mbox{Eqns \eqref{eq:x_tau_bound} and \eqref{eq:u_tau_bound}}] \label{eq:x_tau+1_bound}
\end{align}
and thus,
\begin{align}
    \norm{\y_{\tau+1}^{\pi_\tau} - \y_{\tau+1}^{\pi_{\tau - 1}}} = \norm{C(\x_{\tau+1}^{\pi_\tau} - \x_{\tau+1}^{\pi_{\tau - 1}})} \leq 3\kappa^2\kappa_B^2\kappa_C^2 \delta_1 \label{eq:y_tau+1_bound}
\end{align}
Moreover, for $t > \tau$, the controller does behave like an LDC, however, the starting state is different and must be calculated. Observe that if $\bs_t^{\pi_\tau} := \sum_{i=0}^t A_\pi^i B_\pi \by_{t-i}^{\pi_\tau}$ for all $t > \tau$, then the control is correctly defined. Hence,
\begin{align*}
    \norm{\bs_{\tau+1}^{\pi_\tau}-\bs_{\tau+1}^{\pi_{\tau-1}}}
    &
    =
    \norm{A_\pi B_\pi\left(\by_{\tau}^{\pi_\tau}-\by_{\tau}^{\pi_{\tau-1}}\right)+B_\pi\left(\by_{\tau+1}^{\pi_\tau} - \by_{\tau+1}^{\pi_{\tau - 1}}\right)}
    \\
    &
    \le \kappa^3\kappa_B\kappa_C\delta_1 + 3\kappa^3\kappa_B^2\kappa_C^2\delta_1 \leq 4\kappa^3\kappa_B^2\kappa_C^2\delta_1\quad \quad \quad \quad [\mbox{Eqns \eqref{eq:y_tau_bound} and \eqref{eq:y_tau+1_bound}}]\,.
\end{align*}
Putting this together, we have that
\begin{align*}
    \norm{\begin{bmatrix}
\x_{\tau+1}^{\pi_\tau}-\x_{\tau+1}^{\pi_{\tau-1}} \\[4pt]
\bs_{\tau+1}^{\pi_\tau}-\bs_{\tau+1}^{\pi_{\tau-1}}
\end{bmatrix}}
&
\le
3\kappa^2\kappa_B^2\kappa_C\delta_1 + 4\kappa^3\kappa_B^2\kappa_C^2\delta_1 \leq 7\kappa^3\kappa_B^2\kappa_C^2\delta_1\,.
\end{align*}
Thus, we have
\begin{align*}
    \norm{\begin{bmatrix}
\x_t^{\pi_\tau}-\x_t^{\pi_{\tau-1}} \\[4pt]
\bs_t^{\pi_\tau}-\bs_t^{\pi_{\tau-1}}
\end{bmatrix}}
&
\le
\kappa^2\left(1-\gamma\right)^{t-\tau-1}\norm{\begin{bmatrix}
\x_{\tau+1}^{\pi_\tau}-\x_{\tau+1}^{\pi_{\tau-1}} \\[4pt]
\bs_{\tau+1}^{\pi_\tau}-\bs_{\tau+1}^{\pi_{\tau-1}}
\end{bmatrix}}
\le7\kappa^5\kappa_B^2\kappa_C^2\delta_1\,.
\end{align*}
Observe that 
\[\by_t^{\pi_\tau}-\by_t^{\pi_{\tau-1}} = \begin{bmatrix}C & 0\end{bmatrix}\begin{bmatrix}
\x_t^{\pi_\tau}-\x_t^{\pi_{\tau-1}} \\[4pt]
\bs_t^{\pi_\tau}-\bs_t^{\pi_{\tau-1}}
\end{bmatrix}\,, \quad \quad \by_t^{\pi_\tau}-\by_t^{\pi_{\tau-1}} = \begin{bmatrix}0 & C_\pi\end{bmatrix}\begin{bmatrix}
\x_t^{\pi_\tau}-\x_t^{\pi_{\tau-1}} \\[4pt]
\bs_t^{\pi_\tau}-\bs_t^{\pi_{\tau-1}}
\end{bmatrix}\,,\]
for all $t > \tau$. Thus, for all \(t > \tau\) (and also for all $t \in [T]$ using the previous bounds) we have that 
\begin{align*}
    \max\left\{\norm{\by_t^{\pi_\tau}-\by_t^{\pi_{\tau-1}}},\norm{\bu_t^{\pi_\tau}-\bu_t^{\pi_{\tau-1}}}\right\}
    \le
    7\kappa^6\kappa_B^2\kappa_C^3\delta_1\,.
\end{align*}
Now, observe that:
\begin{align*}
    \max\left\{\norm{\by_t^{\pi_{T+1}}-\by_t^{\pi_{0}}},\norm{\bu_t^{\pi_{T+1}}-\bu_t^{\pi_{0}}}\right\} & \leq \sum_{\tau = 1}^{T+1}\max\left\{\norm{\by_t^{\pi_\tau}-\by_t^{\pi_{\tau-1}}},\norm{\bu_t^{\pi_\tau}-\bu_t^{\pi_{\tau-1}}}\right\} \\
    & \leq 14\kappa^6\kappa_B^2\kappa_C^3 \delta_1 T
\end{align*}
If out choice of $\delta_1 \leq W/14\kappa^3\kappa_B^2\kappa_C^2\gamma T$, then the difference is upper bounded by $\kappa^3\kappa_CW/\gamma$ and since by lemma \ref{lem:ldc_obs_control_bound}, $\norm{\y_t^{\pi_0}}, \norm{\bu_t^{\pi_0}} \leq \kappa^3\kappa_CW/\gamma$, we have that $\norm{\y_t^{\pi_0}}, \norm{\bu_t^{\pi_0}}, \norm{\y_t^{\pi_{T+1}}}, \norm{\bu_t^{\pi_{T+1}}} \leq 2\kappa^3\kappa_CW/\gamma$. Using lipschitzness, this gives us
\begin{align*}
    \sum_{t=1}^T \left|c_t(\y_t^{\pi_{T+1}}, \bu_t^{\pi_{T+1}}) - c_t(\y_t^{\pi_0}, \bu_t^{\pi_0})\right|
    &
    \le \frac{2G\kappa^3\kappa_CW}{\gamma}\left(\sum_{t=1}^T \norm{\by_t^{\pi_{T+1}}-\by_t^{\pi_{0}}} +\norm{\bu_t^{\pi_{T+1}}-\bu_t^{\pi_{0}}}\right) \\
    & \leq \frac{2G\kappa^3\kappa_CW}{\gamma} \cdot T \cdot 28\kappa^6\kappa_B^2\kappa_C^3 \delta_1 T \\
    & \leq \frac{56G\kappa^9\kappa_B^2\kappa_C^4W}{\gamma} \cdot \delta_1T^2 = \eps T \\
\end{align*}
by picking $\delta_1 = \eps\gamma / 56G\kappa^9\kappa_B^2\kappa_C^4WT \leq W/14\kappa^3\kappa_B^2\kappa_C^2\gamma T$. This completes the proof by substituting the value of $\delta_1$ into the choosen expressions of $m$ and $h$. 

\end{proof}

\subsection{Linear controllers}\label{sec:linear}

In the previous section, we have proven that any LDC controller in $\tilde{\mathcal{S}}$ can be approximated by spectral-projection linear controller. As explained previously, this spectral-projection linear controller can be viewed as a linear controller over the lifted system defined in eqns \eqref{eq:lifted-dynamics}, \eqref{eq:lifted-A-B}, \eqref{eq:lifted-C}.

In this section, we prove that the class of linear controllers can be approximated by spectral filtering. We begin by making certain assumptions only for this section, and defining the class of linear controllers that we aim to compete against and the class of spectral controllers.

\begin{assumption}\label{assm:bounded-system-new}
The system matrices $B, C$ are bounded, i.e., $\|B\|\leq \tilde{\kappa}_B, \norm{C} \leq \tilde{\kappa}_C$. The disturbance at each time step is also bounded, i.e., $\|\bw_t\|\leq \tilde{W}$.
\end{assumption}

\begin{assumption}\label{assm:lipschitz-new}
The cost functions $c_t(\by, \bu)$ are convex. Moreover, as long as $\|\by\|, \|\bu\| \leq D$, the gradients are bounded:
\[
\|\nabla_\by c_t(\by,\bu)\|, \|\nabla_\bu c_t(\by,\bu)\| \leq \tilde{G}D\,.
\]
\end{assumption}

\begin{definition}\label{defn:diag_k_y_stable_lin_policies}
A linear policy $K$ is $(\tilde{\kappa}, \tilde{\gamma})$-diagonalizably stable if there exist matrices $L, H$ satisfying $A + BKC = H L H^{-1}$, such that the following conditions hold:
\begin{enumerate}
    \item $L$ is diagonal with nonnegative entries.
    \item The spectral norm of $L$ is strictly less than one, i.e., $\|L\| \leq 1 - \tilde{\gamma}$.
    \item The controller and the transformation matrices are bounded, i.e., $\|K\|, \|H\|, \|H^{-1}\| \leq \tilde{\kappa}$.
\end{enumerate}
We denote by
$\tilde{\mathcal{S}} = \{K : K \text{ is } (\tilde{\kappa},\tilde{\gamma})\text{-diagonalizably stable} \}$ the set of such policies, and, with slight abuse of notation, also use \(\tilde{\mathcal{S}}\) to refer to the class of linear policies \(\bu_t = S\by_t\) where \(S \in \tilde{\mathcal{S}}\). Each policy in $\tilde{\mathcal{S}}$ is fully parameterized by the matrix $K \in \reals^{n \times p}$.
\end{definition}

\begin{assumption}\label{asm:zero-stabilizable-system}
    The zero policy \(K = 0\) lies in \(\tilde{\mathcal{S}}\).
\end{assumption}

For simplicity, we assume that $\tilde{\kappa}, \tilde{\kappa}_B, \tilde{W}, \tilde{G} \geq 1$ and $\tilde{\gamma} \leq 2/3$, without loss of generality. 

We define $\by^\nat_t$ as the observation at time $t$ assuming that all inputs to the system were zero from the beginning of time, i.e.
\[
\bx^\nat_{t+1} = A\bx^\nat_t + \bw_t; \quad \quad \quad \quad \by^\nat_t = C\bx^\nat_t
\]
We define our disturbance response controller with respect to these as follows:

\begin{definition}\label{def:spectral-class}[Spectral Controller]
    The class of Spectral Controllers with \(h\) parameters, memory \(m\) and stability $\tilde{\gamma}$ is defined as:
    \begin{align*}
        \left\{\bu_t^M = M_0\by^\nat_t +  \sum_{i=1}^h \lambda_i^{1/4}M_i Y^\nat_{t-1:t-m} \boldsymbol{\varphi}_i\right\} \,,
    \end{align*}
    where $\boldsymbol{\varphi}_i \in \reals^m,\lambda_i\in\reals$ are the $i^{\sf th}$ top eigenvector and eigenvalue of $H \in \reals^{m \times m}$ such that $H_{ij} = \frac{\left(1-\tilde{\gamma}\right)^{i+j-1}}{i+j-1}$. Any policy in this class is fully parameterized by the matrices $M \in \reals^{n \times p \times h}$.
\end{definition}

The sequence $\{\by^\nat_t\}_{t \in [T]}$ can be iteratively computed using the following:
\[
\by^\nat_t = \by_t - C\sum_{i=1}^tA^{i-1}B\bu_{t-i}
\]
Alternatively, it can be carried out recursively, starting from $\bz_0 = 0$, as:
\[
\bz_{t+1} = A\bz_t + B\bu_t; \quad \quad \quad \quad \by^\nat_t = \by_t - C\bz_t
\]

\begin{lemma}
    Under our assumptions, $\norm{\by^\nat_t} \leq \frac{\tilde{\kappa}^2\tilde{\kappa}_C\tilde{W}}{\tilde{\gamma}}$
\end{lemma}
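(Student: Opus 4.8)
The plan is to reduce the claim to a routine geometric-series bound on the norm of the powers of $A$, obtained from the zero-policy-stabilizability assumption, followed by unrolling the natural dynamics.

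First I would establish a decay bound on $\|A^i\|$. By Assumption~\ref{asm:zero-stabilizable-system}, the zero policy $K = 0$ lies in $\tilde{\mathcal{S}}$, so Definition~\ref{defn:diag_k_y_stable_lin_policies} applies with $A + BKC = A$: there exist $L, H$ with $A = H L H^{-1}$, $L$ diagonal with nonnegative entries, $\|L\| \leq 1 - \tilde{\gamma}$, and $\|H\|, \|H^{-1}\| \leq \tilde{\kappa}$. Hence $A^i = H L^i H^{-1}$ and
\[
\|A^i\| \leq \|H\|\,\|L^i\|\,\|H^{-1}\| \leq \tilde{\kappa}^2 (1-\tilde{\gamma})^i \qquad \text{for all } i \geq 0.
\]

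Next I would unroll the natural state recursion. Since $\bx^\nat_0 = 0$ (by the non-stochasticity normalization $\bx_0 = 0$) and $\bx^\nat_{t+1} = A\bx^\nat_t + \bw_t$, induction gives $\bx^\nat_t = \sum_{i=1}^{t} A^{i-1} \bw_{t-i}$. Applying the triangle inequality, the bound $\|\bw_t\| \leq \tilde{W}$ from Assumption~\ref{assm:bounded-system-new}, and the decay bound above,
\[
\|\bx^\nat_t\| \leq \sum_{i=1}^{t} \|A^{i-1}\|\,\|\bw_{t-i}\| \leq \tilde{\kappa}^2 \tilde{W} \sum_{i=1}^{t} (1-\tilde{\gamma})^{i-1} \leq \tilde{\kappa}^2 \tilde{W} \sum_{j=0}^{\infty} (1-\tilde{\gamma})^{j} = \frac{\tilde{\kappa}^2 \tilde{W}}{\tilde{\gamma}}.
\]
Finally, since $\by^\nat_t = C\bx^\nat_t$ and $\|C\| \leq \tilde{\kappa}_C$, we conclude $\|\by^\nat_t\| \leq \tilde{\kappa}_C \|\bx^\nat_t\| \leq \tilde{\kappa}^2 \tilde{\kappa}_C \tilde{W} / \tilde{\gamma}$, which is the claimed bound.

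There is no real obstacle here; the only point requiring care is recognizing that the exponential-decay bound on $\|A^i\|$ is not assumed directly in this section but must be extracted from Assumption~\ref{asm:zero-stabilizable-system} together with Definition~\ref{defn:diag_k_y_stable_lin_policies}. Everything else is a one-line geometric sum using $\sum_{j\geq 0}(1-\tilde{\gamma})^j = 1/\tilde{\gamma}$.
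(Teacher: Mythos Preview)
Your proposal is correct and follows essentially the same argument as the paper: unroll the natural dynamics, apply the geometric decay $\|A^i\|\le\tilde{\kappa}^2(1-\tilde{\gamma})^i$, and sum the series. You are in fact more explicit than the paper in justifying where that decay bound comes from (Assumption~\ref{asm:zero-stabilizable-system} with $K=0$ in Definition~\ref{defn:diag_k_y_stable_lin_policies}), since Assumption~\ref{assm:bounded-system-new} in this section does not assume it directly; the paper's proof simply uses it without comment.
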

\begin{proof}
    Unrolling the recursion, we get that
    \[
    \by_t^\nat = \sum_{i=1}^t CA^{i-1}\bw_{t-i}
    \]
    Taking norm on both sides,
    \begin{align*}
        \norm{\by_t^\nat} \leq \tilde{\kappa}^2\tilde{\kappa}_C \tilde{W}\sum_{i=1}^t (1-\tilde{\gamma})^{i-1} \leq \frac{\tilde{\kappa}^2\tilde{\kappa}_C\tilde{W}}{\tilde{\gamma}}
    \end{align*}
\end{proof}

\begin{lemma}\label{lem:bound-xu-linear-diam}
    For any $K \in \tilde{\mathcal{S}}$, the corresponding states \(\x_t^K\) and control inputs \(\bu_t^K\) are bounded by
    \begin{align*}
        \norm{\by^K_t}\le\frac{2\tilde{\kappa}^5\tilde{\kappa}_B\tilde{\kappa}_C^2\tilde{W}}{\tilde{\gamma}^2}\,,\quad\norm{\bu^K_t}\le\frac{2\tilde{\kappa}^6\tilde{\kappa}_B\tilde{\kappa}_C^2\tilde{W}}{\tilde{\gamma}^2}\,.
    \end{align*}
\end{lemma}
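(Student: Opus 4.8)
The plan is to unroll the closed-loop recursion and exploit the diagonalization of the closed-loop matrix guaranteed by $K \in \tilde{\mathcal{S}}$. Under the linear policy $\bu_t^K = K\by_t^K = KC\x_t^K$, the state evolves as $\x_{t+1}^K = (A + BKC)\x_t^K + \bw_t$, and since $\x_0 = 0$ this gives the explicit representation $\x_t^K = \sum_{i=1}^{t}(A+BKC)^{i-1}\bw_{t-i}$. Everything then reduces to controlling the powers of $A + BKC$.

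First I would bound those powers. By Definition~\ref{defn:diag_k_y_stable_lin_policies} we may write $A + BKC = H L H^{-1}$ with $L$ a diagonal matrix with nonnegative entries satisfying $\|L\| \le 1 - \tilde{\gamma}$ and $\|H\|, \|H^{-1}\| \le \tilde{\kappa}$. Hence $(A+BKC)^{i} = H L^{i} H^{-1}$, and since $L$ is diagonal with nonnegative entries we have $\|L^{i}\| = \|L\|^{i} \le (1-\tilde{\gamma})^{i}$, so that $\|(A+BKC)^{i}\| \le \tilde{\kappa}^2 (1-\tilde{\gamma})^{i}$. Plugging this into the unrolled sum together with $\|\bw_{t-i}\| \le \tilde{W}$ and the geometric series $\sum_{i \ge 1}(1-\tilde{\gamma})^{i-1} = 1/\tilde{\gamma}$ yields $\|\x_t^K\| \le \tilde{\kappa}^2 \tilde{W}/\tilde{\gamma}$. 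Passing to observations and controls via $\by_t^K = C\x_t^K$ and $\bu_t^K = KC\x_t^K$ then gives $\|\by_t^K\| \le \tilde{\kappa}^2 \tilde{\kappa}_C \tilde{W}/\tilde{\gamma}$ and $\|\bu_t^K\| \le \tilde{\kappa}^3 \tilde{\kappa}_C \tilde{W}/\tilde{\gamma}$.

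To conclude I would observe that these sharper estimates immediately imply the claimed ones, since $2\tilde{\kappa}^5 \tilde{\kappa}_B \tilde{\kappa}_C^2 \tilde{W}/\tilde{\gamma}^2 \ge \tilde{\kappa}^2 \tilde{\kappa}_C \tilde{W}/\tilde{\gamma}$ and $2\tilde{\kappa}^6 \tilde{\kappa}_B \tilde{\kappa}_C^2 \tilde{W}/\tilde{\gamma}^2 \ge \tilde{\kappa}^3 \tilde{\kappa}_C \tilde{W}/\tilde{\gamma}$ under the standing normalization $\tilde{\kappa}, \tilde{\kappa}_B, \tilde{\kappa}_C, \tilde{W} \ge 1$ and $\tilde{\gamma} \le 2/3$. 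There is no real obstacle in this argument: the only points that require a word of care are that $\|L^i\| = \|L\|^i$ uses that $L$ is diagonal with nonnegative entries (so its operator norm is the largest entry and powers act entrywise), and that the diagonalizability hypothesis is imposed \emph{directly} on the closed-loop matrix $A + BKC$, so no robustness or perturbation argument relating open-loop and closed-loop stability is needed. The extra powers of $\tilde{\kappa}, \tilde{\kappa}_B, \tilde{\kappa}_C$ appearing in the statement are deliberate slack, kept so the bound matches the uniform form later used to define the constraint set $\mathcal{K}$.
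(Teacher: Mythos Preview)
Your argument is correct and in fact tighter than the paper's. Both proofs rely on the diagonalization $A+BKC=HLH^{-1}$ to get $\|(A+BKC)^i\|\le\tilde{\kappa}^2(1-\tilde{\gamma})^i$, but you apply this directly to the closed-loop state recursion $\x_t^K=\sum_{i=1}^t(A+BKC)^{i-1}\bw_{t-i}$, whereas the paper first passes through the decomposition $\by_t^K=\by_t^\nat+C\Delta_t^K$ with $\Delta_t^K=\sum_{i=1}^t(A+BKC)^{i-1}BK\by_{t-i}^\nat$ and then invokes the bound $\|\by_t^\nat\|\le\tilde{\kappa}^2\tilde{\kappa}_C\tilde{W}/\tilde{\gamma}$ from the preceding lemma. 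Your route is shorter and yields the sharper estimates $\|\by_t^K\|\le\tilde{\kappa}^2\tilde{\kappa}_C\tilde{W}/\tilde{\gamma}$, $\|\bu_t^K\|\le\tilde{\kappa}^3\tilde{\kappa}_C\tilde{W}/\tilde{\gamma}$; the paper's route is the one that naturally matches the $\by^\nat$-based expansions used throughout the rest of Section~\ref{sec:linear}, which is why the extra factors $\tilde{\kappa}_B\tilde{\kappa}_C/\tilde{\gamma}$ appear there.

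One small caveat: you invoke $\tilde{\kappa}_C\ge1$ in the final comparison step, but the standing normalization in Section~\ref{sec:linear} only lists $\tilde{\kappa},\tilde{\kappa}_B,\tilde{W},\tilde{G}\ge1$. This is harmless in context (when the lemma is instantiated on the lifted system, $\tilde{\kappa}_C=4\kappa_C\sqrt{h}\log^{1/4}(2/\gamma)\ge1$), and the paper's own last step implicitly needs the same lower bound on $\tilde{\kappa}_C$, so this is not a gap unique to your argument.
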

\begin{proof}
    As in many other parts of this paper, we first write $\Delta_t^K$ as a linear transformation of nature's observations:
    \begin{align*}
        \norm{\Delta_t^K}&=\norm{\sum_{i=1}^t\left(A+BKC\right)^{i-1}BK\by^\nat_{t-i}}
        \\
        &
        \le\tilde{\kappa}^3\tilde{\kappa}_B\sum_{i=0}^t\left(1-\tilde{\gamma}\right)^{i}\norm{\by^\nat_{t-i-1}}
        \\
        &
        \le
        \frac{\tilde{\kappa}^5\tilde{\kappa}_B\tilde{\kappa}_C\tilde{W}}{\tilde{\gamma}^2}\,.
    \end{align*}
    Then, since \(\by_t^K=\by_t^\nat + C\Delta_t^K\) and \(\bu_t^K = K\by_t^K\), we get our result.
\end{proof}

For convenience of notation, for a given policy $\pi$ we define $\Delta_t^\pi = \bx_t^\pi - \bx_t^{\nat}$. We begin by defining the class of open-loop optimal controllers as follows:
\begin{definition}[Open Loop Optimal Controller]\label{def:oloc}
    The class of Open Loop Optimal Controllers of with memory $m$ is defined as:
    \begin{align*}
        \left\{\bu_t^{K,m} = K\by^\nat_t + \sum_{i=1}^m KC\left(A+BKC\right)^{i-1}BK\by^\nat_{t-i}\right\} \,.
    \end{align*}
    Any policy in this class is fully parameterized by the matrix $K \in \reals^{d \times n}$ and the memory $m \in \mathbb{Z}$.
\end{definition}

Next, we state and prove Lemma \ref{lem:approx-oloc}, which shows that any linear policy in $\tilde{\mathcal{S}}$ can be approximated up to arbitrary accuracy with an open-loop optimal controller of suitable memory.

\begin{lemma}\label{lem:approx-oloc}
    Let a linear policy $K \in \tilde{\mathcal{S}}$. Then, for $m \geq \frac{1}{\tilde{\gamma}}\log\left(\frac{6\tilde{G}\tilde{\kappa}^{14}\tilde{\kappa}_B^3\tilde{\kappa}_C^5\tilde{W}^2}{\eps\tilde{\gamma}^5}\right)$ and $\eps \in (0,1)$, 
    \begin{align*}
        \sum_{t=1}^T \left|c_t(\by_t^{K,m}, \bu_t^{K,m}) - c_t(\by_t^K, \bu_t^K)\right| \leq \eps T\,, \quad \quad \quad \quad \norm{\by_t^{K,m}}, \norm{\bu_t^{K,m}} \leq \frac{3\tilde{\kappa}^6\tilde{\kappa}_B\tilde{\kappa}_C^2\tilde{W}}{\tilde{\gamma}^2}\,.
    \end{align*}
\end{lemma}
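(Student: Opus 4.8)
The approach is to compare the closed-loop linear policy $K$ with its truncated ``open-loop'' version $K, m$, both written as linear functionals of the natural observation sequence $\{\by^\nat_t\}$. The key identity is that the state deviation $\Delta_t^K = \bx_t^K - \bx_t^\nat$ admits the expansion $\Delta_t^K = \sum_{i=1}^t (A+BKC)^{i-1} BK \by^\nat_{t-i}$, already used in the proof of Lemma~\ref{lem:bound-xu-linear-diam}. Unrolling the open-loop controller similarly, the control $\bu_t^{K,m}$ equals $K\by^\nat_t + \sum_{i=1}^m KC(A+BKC)^{i-1}BK\by^\nat_{t-i}$, which is exactly the ``correct'' control $\bu_t^K = K\by_t^K = K\by^\nat_t + KC\Delta_t^K$ except that the infinite (length-$t$) sum defining $\Delta_t^K$ has been truncated at $m$ terms \emph{and}, crucially, $\by^\nat_{t-i}$ is used in place of the true $\by^K_{t-i}$. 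But since the open-loop controller feeds only on $\by^\nat$, there is no feedback to propagate errors: the only discrepancy is the tail $\sum_{i=m+1}^t KC(A+BKC)^{i-1}BK\by^\nat_{t-i}$.

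First I would bound this tail directly: using $\|(A+BKC)^i\| \le \tilde\kappa^2(1-\tilde\gamma)^i$ (which follows from $A+BKC = HLH^{-1}$, $\|L\|\le 1-\tilde\gamma$, $\|H\|,\|H^{-1}\|\le\tilde\kappa$), $\|BK\|\le\tilde\kappa_B\tilde\kappa$, $\|KC\|\le\tilde\kappa\tilde\kappa_C$, and $\|\by^\nat_{t-i}\|\le \tilde\kappa^2\tilde\kappa_C\tilde W/\tilde\gamma$ from the preceding lemma, the tail is at most $\frac{\tilde\kappa^3\tilde\kappa_B\cdot\tilde\kappa\tilde\kappa_C\cdot\tilde\kappa^2\tilde\kappa_C\tilde W}{\tilde\gamma}\sum_{i\ge m+1}(1-\tilde\gamma)^{i-1} \lesssim \frac{\tilde\kappa^6\tilde\kappa_B\tilde\kappa_C^2\tilde W}{\tilde\gamma^2}(1-\tilde\gamma)^m$. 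Choosing $m \ge \frac{1}{\tilde\gamma}\log\!\big(\frac{6\tilde G\tilde\kappa^{14}\tilde\kappa_B^3\tilde\kappa_C^5\tilde W^2}{\eps\tilde\gamma^5}\big)$ makes $(1-\tilde\gamma)^m \le e^{-\tilde\gamma m}$ small enough that $\|\bu_t^{K,m} - \bu_t^K\| \le \delta$ for a $\delta$ I would set to absorb all the constant factors; then $\|\by_t^{K,m}-\by_t^K\| = \|C(\bx_t^{K,m}-\bx_t^K)\|$, and since $\bx_t^{K,m}-\bx_t^K = \sum_{i=1}^{t} A^{i-1}B(\bu_{t-i}^{K,m}-\bu_{t-i}^K)$ is itself driven only by the per-step control error through the \emph{open} map $\sum A^{i-1}B$, this is bounded by $\frac{\tilde\kappa^2\tilde\kappa_B}{\tilde\gamma}\delta$ (using $\|A^i\|\le\tilde\kappa^2(1-\tilde\gamma)^i$), hence $\|\by_t^{K,m}-\by_t^K\|\le \tilde\kappa^2\tilde\kappa_B\tilde\kappa_C\delta/\tilde\gamma$. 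Summing the Lipschitz bound $|c_t(\by_t^{K,m},\bu_t^{K,m}) - c_t(\by_t^K,\bu_t^K)| \le \tilde G\cdot(\text{diam})\cdot(\|\by\text{-diff}\|+\|\bu\text{-diff}\|)$ over $t\in[T]$ and tracking that the chosen $m$ kills the geometric factor gives $\le \eps T$.

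For the norm bounds on $\by_t^{K,m}, \bu_t^{K,m}$: by the triangle inequality and Lemma~\ref{lem:bound-xu-linear-diam}, $\|\by_t^{K,m}\| \le \|\by_t^K\| + \|\by_t^{K,m}-\by_t^K\| \le \frac{2\tilde\kappa^5\tilde\kappa_B\tilde\kappa_C^2\tilde W}{\tilde\gamma^2} + (\text{small}) \le \frac{3\tilde\kappa^6\tilde\kappa_B\tilde\kappa_C^2\tilde W}{\tilde\gamma^2}$ for $\eps<1$, and similarly for $\bu_t^{K,m}$. The main obstacle — really the only subtlety — is bookkeeping: making sure the powers of $\tilde\kappa, \tilde\kappa_B, \tilde\kappa_C, \tilde W, \tilde G$ and $\tilde\gamma$ that accumulate through the chain (control-error $\to$ state-error $\to$ observation-error $\to$ Lipschitz $\to$ sum over $T$) all fit under the stated threshold for $m$, and in particular that the extra $\tilde G$ and the $\log T$-free form of the bound are consistent with feeding only $\by^\nat$ (so that, unlike in the LDC-to-spectral step, no $\log T$ factor from error propagation appears). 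I would set $\delta \asymp \frac{\eps\tilde\gamma^3}{\tilde G\tilde\kappa^8\tilde\kappa_B^2\tilde\kappa_C^3\tilde W}$ at the start and verify at the end that the stated $m$ achieves it.
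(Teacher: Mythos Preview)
Your proposal is correct and follows essentially the same argument as the paper: unroll $\bu_t^K$ in terms of $\by^\nat$ via $\Delta_t^K = \sum_{i=1}^t (A+BKC)^{i-1}BK\by^\nat_{t-i}$, observe that $\bu_t^{K,m}$ is just the $m$-term truncation so the only discrepancy is the geometric tail, propagate the control error to observations through $\sum CA^{i-1}B$, and finish with Lipschitzness and the triangle inequality for the norm bounds. The paper's constants and your sketched ones line up; your remark that no $\log T$ appears because there is no feedback through the actual $\by^K$ is exactly the point.
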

\begin{proof}
    We begin by bounding the difference in the observation and the difference in the control inputs. The difference in cost is bounded using the fact that the cost functions are lipschitz in the control and observation. We begin by unrolling the expressions for $\Delta_t^K$ in terms of $\by^\nat_t$:
    \begin{align}\label{eq:lin_input_state}
        \Delta_{t}^K = A\Delta_{t-1}^K + BK\by_{t-1}^K = (A+BKC)\Delta_{t-1}^K + BK\by_{t-1}^\nat = \sum_{i=1}^t (A+BKC)^{i-1}BK\by^\nat_{t-i}
    \end{align}
    This gives us that:
    \begin{align*}
        \bu_t^K = K\by_t^\nat + KC\Delta_t^K = K\by_t^\nat + \sum_{i=1}^t KC(A+BKC)^{i-1}BK\by^\nat_{t-i}
    \end{align*}
    Notice that for all $t \leq m$, $\bu_t^K = \bu_t^{K,m}$ and hence $\by_t^K = \by_t^{K,m}$. For $t > m$,
    \begin{align*}
        \norm{\bu_t^K - \bu_t^{K,m}} 
        & = \norm{\sum_{i=m+1}^t KC(A+BKC)^{i-1}BK\by^\nat_{t-i}} & \\
        & \leq \frac{\tilde{\kappa}^6\tilde{\kappa}_B\tilde{\kappa}_C^2\tilde{W}}{\tilde{\gamma}}\sum_{i=m+1}^t (1-\tilde{\gamma})^{i-1} & [ \Delta-\mbox{ineq., C-S}] \\
        & \leq \frac{\tilde{\kappa}^6\tilde{\kappa}_B\tilde{\kappa}_C^2\tilde{W}}{\tilde{\gamma}^2}(1-\tilde{\gamma})^m\,.
    \end{align*}
    We also note that for any policy $\pi$,
    \begin{align*}
        \by_t^\pi = \by_t^\nat + C\Delta_t^\pi = \by_t^\nat + \sum_{i=1}^t CA^{i-1}B \bu_{t-i}^\pi
    \end{align*}
    Using both the previous results and the fact that \(\bu_t^K = \bu_t^{K,m}\) for any \(t \leq m\), we similarly get:
    \begin{align*}
        \norm{\by_t^K - \by_t^{K,m}} = \norm{\sum_{i=1}^t CA^{i-1}B(\bu_{t-i}^K - \bu_{t-i}^{K,m})}=  \norm{\sum_{i=1}^{t-m} CA^{i-1}B(\bu_{t-i}^K - \bu_{t-i}^{K,m})} \,, 
        \end{align*}
        and by using Assumption \ref{asm:zero-stabilizable-system} we can write:
        \begin{align*}
        \norm{\y_t^K - \y_t^{K,m}}\leq \tilde{\kappa}_B\tilde{\kappa}_C\tilde{\kappa}^2\sum_{i=1}^{\infty} (1-\tilde{\gamma})^{i-1}\norm{\bu_{t-i}^K - \bu_{t-i}^{K,m}} \leq \frac{\tilde{\kappa}^8\tilde{\kappa}_B^2\tilde{\kappa}_C^3\tilde{W}}{\tilde{\gamma}^3}(1-\tilde{\gamma})^m\,. & 
    \end{align*}
    Using the fact that $\tilde{\kappa}_B\tilde{\kappa}_C\tilde{\kappa}^2/\tilde{\gamma} > 1$, we get the following uniform bound:
    \begin{align*}
        \max\left\{\norm{\bu_t^K - \bu_t^{K,m}}, \norm{\by_t^K - \by_t^{K,m}}\right\} \leq \frac{\tilde{\kappa}^8\tilde{\kappa}_B^2\tilde{\kappa}_C^3\tilde{W}}{\tilde{\gamma}^3}(1-\tilde{\gamma})^m\,.
    \end{align*}
    Whenever $m \geq \frac{1}{\tilde{\gamma}}\log\left(\frac{\tilde{\kappa}^2\tilde{\kappa}_B\tilde{\kappa}_C}{\tilde{\gamma}}\right)$, which is indeed true from our choice of $\eps$ and $m$, this implies that the $\norm{\by_t^K - \by_t^{K,m}}, \norm{\bu_t^K - \bu_t^{K,m}} \leq \tilde{\kappa}^6\tilde{\kappa}_B\tilde{\kappa}_C^2\tilde{W}/\tilde{\gamma}^2$. Using Lemma \ref{lem:bound-xu-linear-diam} $\norm{\by_t^K}, \norm{\bu_t^K} \leq 2\tilde{\kappa}^6\tilde{\kappa}_B\tilde{\kappa}_C^2\tilde{W}/\tilde{\gamma}^2$, by triangle inequality $\norm{\by_t^{K,m}}, \norm{\bu_t^{K,m}} \leq 3\tilde{\kappa}^6\tilde{\kappa}_B\tilde{\kappa}_C^2\tilde{W}/\tilde{\gamma}^2$. 
    Thus, the sum of costs is bounded using lipschitzness of $c_t$ as follows:
    \begin{align*}
        \sum_{t=1}^T \left|c_t(\by_t^K, \bu_t^K) - c_t(\by_t^{K,m}, \bu_t^{K,m})\right| & \leq \frac{3\tilde{G}\tilde{\kappa}^6\tilde{\kappa}_B\tilde{\kappa}_C^2\tilde{W}}{\tilde{\gamma}^2}\sum_{t=1}^T\left(\norm{\by_t^K - \by_t^{K,m}} + \norm{\bu_t^K - \bu_t^{K,m}}\right) & \\
        & \leq \frac{6\tilde{G}\tilde{\kappa}^{14}\tilde{\kappa}_B^3\tilde{\kappa}_C^5\tilde{W}^2}{\tilde{\gamma}^5}(1-\tilde{\gamma})^m \cdot T \leq \eps T\,. & [\mbox{choice of } m]
    \end{align*}
\end{proof}

To enable learning via online gradient descent, we require a bounded set of parameters: 
\begin{definition}\label{def:spec-parameters-set}
The set of bounded spectral parameters is defined as
    $$\K=\left\{M\in\reals^{h\times n\times p}\mid\norm{\by_t^M},\norm{\bu_t^M}\le\frac{4\tilde{\kappa}^6\tilde{\kappa}_B\tilde{\kappa}_C^2\tilde{W}}{\tilde{\gamma}^2}
,\norm{M_0} \leq \tilde{\kappa}, \norm{M_{i}}\le\tilde{\kappa}^4\tilde{\kappa}_B\tilde{\kappa}_C\sqrt{\frac{2}{\tilde{\gamma}}}\right\}\,.$$
\end{definition}

We shall now prove that every open-loop optimal controller can be approximated up to arbitrary accuracy with a spectral controller.

\begin{lemma}\label{lem:approx-spectral}
    For every open loop optimal controller $\pi_{K,m}^{\sf OLOC}$ such that $K \in \tilde{\mathcal{S}}$ and $\norm{\by^{K,m}}, \norm{\bu^{K,m}} \leq \frac{3\tilde{\kappa}^6\tilde{\kappa}_B\tilde{\kappa}_C^2\tilde{W}}{\tilde{\gamma}^2}$, there exists an spectral controller $\pi_{h,m,\tilde{\gamma},M}^{\sf SC}$ with 
    \(M\in\K\) such that:
    \begin{align*}
        \sum_{t=1}^T \left|c_t(\by_t^M, \bu_t^M) - c_t(\by_t^{K,m}, \bu_t^{K,m})\right| \leq \epsilon T\,.
    \end{align*}
    for any $\epsilon \in (0,1)$ and \(h \ge  2\log T\log\left(\frac{400\tilde{G}\tilde{\kappa}^{14}\tilde{\kappa}_B^3\tilde{\kappa}_C^5\tilde{W}^2\sqrt{m}d}{\eps\tilde{\gamma}^{9/2}}\log T\log^{1/4}\left(\frac{2}{\tilde{\gamma}}\right)\right)\).
\end{lemma}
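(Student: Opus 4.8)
The plan is to write down the spectral controller explicitly and show its trajectory shadows that of the open-loop optimal controller $\pi^{\sf OLOC}_{K,m}$. Using $A + BKC = HLH^{-1}$ with $L = \diag(\alpha_1,\dots,\alpha_d)$ and $\alpha_j \in [0,1-\tilde\gamma]$, we have $(A+BKC)^{i-1} = \sum_{j=1}^d \alpha_j^{i-1}\, H e_j e_j^\top H^{-1}$, so substituting into Definition~\ref{def:oloc} and collapsing the geometric sum over $i$ with $\boldsymbol{\mu}_\alpha = (1,\alpha,\dots,\alpha^{m-1})^\top \in \reals^m$, the memory part of $\bu_t^{K,m}$ equals $\sum_{j=1}^d K C H e_j e_j^\top H^{-1} B K\, Y^\nat_{t-1:t-m}\boldsymbol{\mu}_{\alpha_j}$. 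This motivates choosing $M_0 = K$ and, for $i=1,\dots,h$,
\[
M_i \;=\; \lambda_i^{-1/4}\sum_{j=1}^d (\boldsymbol{\varphi}_i^\top\boldsymbol{\mu}_{\alpha_j})\; K C H e_j e_j^\top H^{-1} B K \,,
\]
which is well defined because the Hankel moment matrix $H$ is positive definite, hence $\lambda_i > 0$.

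Next I would bound the per-step control error. Since $\{\boldsymbol{\varphi}_i\}_{i=1}^m$ is an orthonormal basis of $\reals^m$, we have $\boldsymbol{\mu}_{\alpha_j} = \sum_{i=1}^m (\boldsymbol{\varphi}_i^\top\boldsymbol{\mu}_{\alpha_j})\boldsymbol{\varphi}_i$, and by the above choice the $i \le h$ terms of $\bu_t^M$ reproduce exactly the truncation of this expansion, so
\[
\bu_t^{K,m} - \bu_t^M \;=\; \sum_{j=1}^d K C H e_j e_j^\top H^{-1} B K\; Y^\nat_{t-1:t-m}\!\left(\sum_{i=h+1}^m (\boldsymbol{\varphi}_i^\top\boldsymbol{\mu}_{\alpha_j})\boldsymbol{\varphi}_i\right).
\]
Because $\by^\nat$ is independent of the played controls, this is a genuine uniform-in-$t$ bound, not a recursion. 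I would then use $\|K C H e_j e_j^\top H^{-1} B K\| \le \tilde\kappa^4\tilde\kappa_B\tilde\kappa_C$, $\|Y^\nat_{t-1:t-m}\| \le \sqrt m\,\tilde\kappa^2\tilde\kappa_C\tilde W/\tilde\gamma$ (from the bound on $\|\by^\nat_t\|$ proved just above), and the tail-decay estimate for Hankel eigenvectors (Lemma~7.4 of~\cite{brahmbhatt2025newapproachcontrollinglinear}, as already invoked in the proof of Lemma~\ref{lem:approx-ldc-lin}), namely $\sum_{i=h+1}^m\sum_{j=1}^d |\boldsymbol{\varphi}_i^\top\boldsymbol{\mu}_{\alpha_j}| = O\!\big(d\log T\,\log^{1/4}(2/\tilde\gamma)\,\exp(-\pi^2 h/(16\log T))\big)$, to conclude $\|\bu_t^{K,m} - \bu_t^M\| \le \delta$ for a $\delta$ decaying like $e^{-\Omega(h/\log T)}$. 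Propagating to the observations through $\by_t^\pi = \by_t^\nat + C\sum_{i=1}^t A^{i-1}B\bu_{t-i}^\pi$ together with $\|A^i\|\le\tilde\kappa^2(1-\tilde\gamma)^i$ (Assumption~\ref{asm:zero-stabilizable-system}) gives $\|\by_t^{K,m}-\by_t^M\| \le \tilde\kappa^2\tilde\kappa_B\tilde\kappa_C\delta/\tilde\gamma$.

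To finish, I would choose $h$ as in the statement so that $\delta$ is small enough that: (i) both trajectories stay in the ball of radius $D = 4\tilde\kappa^6\tilde\kappa_B\tilde\kappa_C^2\tilde W/\tilde\gamma^2$, using the hypothesis $\|\by_t^{K,m}\|,\|\bu_t^{K,m}\| \le 3\tilde\kappa^6\tilde\kappa_B\tilde\kappa_C^2\tilde W/\tilde\gamma^2$ and the triangle inequality; and (ii) the $\tilde GD$-Lipschitzness of $c_t$ on that ball (Assumption~\ref{assm:lipschitz-new}) yields $\sum_{t=1}^T|c_t(\by_t^M,\bu_t^M) - c_t(\by_t^{K,m},\bu_t^{K,m})| \le \tilde GD\cdot T\cdot O(\tilde\kappa^2\tilde\kappa_B\tilde\kappa_C\delta/\tilde\gamma) \le \epsilon T$. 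Membership $M\in\K$ (Definition~\ref{def:spec-parameters-set}) then follows from $\|\by_t^M\|,\|\bu_t^M\|\le D$, from $\|M_0\| = \|K\| \le \tilde\kappa$, and from $\|M_i\| \le \tilde\kappa^4\tilde\kappa_B\tilde\kappa_C\cdot\lambda_i^{-1/4}\max_j|\boldsymbol{\varphi}_i^\top\boldsymbol{\mu}_{\alpha_j}| \le \tilde\kappa^4\tilde\kappa_B\tilde\kappa_C\sqrt{2/\tilde\gamma}$, obtained by writing $M_i = \lambda_i^{-1/4}(KCH)\,\diag(\boldsymbol{\varphi}_i^\top\boldsymbol{\mu}_{\alpha_j})\,(H^{-1}BK)$.

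The main obstacle is precisely that last displayed inequality: showing $\lambda_i^{-1/4}|\boldsymbol{\varphi}_i^\top\boldsymbol{\mu}_\alpha| \le \sqrt{2/\tilde\gamma}$ uniformly over $i\le h$ and $\alpha\in[0,1-\tilde\gamma]$ (together with the matching tail bound that $\sum_{i>h}|\boldsymbol{\varphi}_i^\top\boldsymbol{\mu}_\alpha|$ is exponentially small), which is exactly the delicate spectral property of Hankel moment matrices — this is what makes the $\lambda_i^{1/4}$ feature scaling the correct choice, and it is inherited from the spectral-filtering analyses of prior work. Everything else is routine bookkeeping with operator norms and geometric series.
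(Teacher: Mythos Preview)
Your proposal is correct and follows essentially the same approach as the paper: the same explicit choice $M_0=K$, $M_i=\lambda_i^{-1/4}\sum_j(\boldsymbol{\varphi}_i^\top\boldsymbol{\mu}_{\alpha_j})KCHe_je_j^\top H^{-1}BK$, the same tail bound on $\sum_{i>h}|\boldsymbol{\varphi}_i^\top\boldsymbol{\mu}_{\alpha_j}|$ via Lemma~7.4 of~\cite{brahmbhatt2025newapproachcontrollinglinear}, the same propagation to observations through $\|A^i\|\le\tilde\kappa^2(1-\tilde\gamma)^i$, and the same verification of $M\in\K$ using $\lambda_i^{-1/4}|\boldsymbol{\varphi}_i^\top\boldsymbol{\mu}_\alpha|\le\sqrt{2/\tilde\gamma}$. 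The only differences are cosmetic (you suppress some constants inside big-$O$), and you correctly identify that the nontrivial ingredient is precisely the Hankel spectral estimate borrowed from prior work.
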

\begin{proof}
    Since \(K\in\tilde{\mathcal{S}}\), there exists a diagonal \(L\in\reals^{n\times n}\) as in Definition \ref{defn:diag_k_y_stable_lin_policies} so that:
    \begin{align*}
        \bu_t^{K,m} & = K\by^\nat_t + \sum_{i=1}^mKC (A+BKC)^{i-1}BK\by^\nat_{t-i} = K\by^\nat_t + \sum_{i=1}^m KCHL^{i-1}H^{-1}BK\by^\nat_{t-i}\,.
        \end{align*}
        Then write \(L^{i-1}=\sum_{j=1}^d \alpha_j^{i-1} e_j e_j^\top\) and obtain
        \begin{align*}
        \bu_t^{K,m} & = K\by^\nat_t + \sum_{i=1}^m KCH\left(\sum_{j=1}^d \alpha_j^{i-1} e_j e_j^\top\right)H^{-1}BK\by^\nat_{t-i} \\
        & = K\by^\nat_t + \sum_{j=1}^d KCHe_je_j^\top H^{-1}BK \sum_{i=1}^m \alpha_j^{i-1}\by^\nat_{t-i}\,.
        \end{align*}
        Recall $Y^\nat_{t-1:t-m} = [\by^\nat_{t-1}, \dots, \by^\nat_{t-m}] \in \mathbb{R}^{d \times m}$, define \(\boldsymbol{\mu}_\alpha=[1,\alpha,\dots,\alpha^{m-1}]\in\reals^m\) and get
        \begin{align*}
        \bu_t^{K,m} & = K\by^\nat_t + \sum_{j=1}^d KCHe_je_j^\top H^{-1}BK Y^\nat_{t-1:t-m} \boldsymbol{\mu}_{\alpha_j} \\
        & = K\by^\nat_t + \sum_{j=1}^d KCHe_je_j^\top H^{-1}BK Y^\nat_{t-1:t-m}\left(\sum_{i=1}^m \boldsymbol{\varphi}_i\boldsymbol{\varphi}_i^{\top}\right) \boldsymbol{\mu}_{\alpha_j} & \left[\sum_{i=1}^m \boldsymbol{\varphi}_i\boldsymbol{\varphi}_i^{\top} = \mathbb{I}_m\right] \\
        & = K\by^\nat_t + \sum_{i=1}^m\left(\sum_{j=1}^d KCHe_je_j^\top H^{-1}BK\boldsymbol{\varphi}_i^{\top}\boldsymbol{\mu}_{\alpha_j}\right)Y^\nat_{t-1:t-m} \boldsymbol{\varphi}_i\,.
    \end{align*}
    Let $\pi_{h,m,\tilde{\gamma},M^*}^{\sf SC}$ be the spectral controller with $M_0 = K$ and $M^*_i =\lambda_i^{-1/4}KC H\left(\sum_{j=1}^d\boldsymbol{\varphi}_i^{\top}\boldsymbol{\mu}_{\alpha_j}e_je_j^\top\right)H^{-1}BK$ for all $i \in [h]$. Note that we have
\begin{align*}
    \norm{M_0} \leq \tilde{\kappa}, \quad \norm{ M^*_i } \leq \tilde{\kappa}^4\tilde{\kappa}_B\tilde{\kappa}_C \cdot \max_{\ell \in [d]} \lambda_j^{-1/4} \ang{\boldsymbol{\varphi}_j, \boldsymbol{\mu}_{\alpha_l}}\quad \forall 1 \leq j \leq m\,,
\end{align*}
and from the analysis of Lemma 7.4 of \cite{brahmbhatt2025newapproachcontrollinglinear}, we have that $\lambda_j^{-1/4} \ang{\boldsymbol{\varphi}_j, \mu(\alpha_l)}\le\sqrt{\frac{2}{\tilde{\gamma}}}$. Thus, \(\norm{M^*_{i}}\le \tilde{\kappa}^4\tilde{\kappa}_B\tilde{\kappa}_C\sqrt{\frac{2}{\tilde{\gamma}}}\). Then,
    \begin{align*}
        \norm{\bu_t^{K,m} - \bu_t^{M^*}} & = \norm{\sum_{i=h+1}^mKCH\left(\sum_{j=1}^d\boldsymbol{\varphi}_i^{\top}\boldsymbol{\mu}_{\alpha_j}e_je_j^\top\right)H^{-1}BKY^\nat_{t-1:t-m} \boldsymbol{\varphi}_i} & \\
        & \leq \frac{\tilde{\kappa}^6\tilde{\kappa}_B\tilde{\kappa}_C^2 \tilde{W}\sqrt{m}}{\tilde{\gamma}}\sum_{i=h+1}^m\sum_{j=1}^d |\boldsymbol{\varphi}_i^{\top}\boldsymbol{\mu}_{\alpha_j}| & \left[\norm{Y_{t-1:t-m}} \leq \frac{\tilde{\kappa}^2\tilde{\kappa}_C\tilde{W}}{\tilde{\gamma}}\sqrt{m}\right] \\
        &
        \le \frac{30\tilde{\kappa}^6\tilde{\kappa}_B\tilde{\kappa}_C^2 \tilde{W}\sqrt{m}}{\tilde{\gamma}^{3/2}}\log^{1/4}\left(\frac{2}{\tilde{\gamma}}\right)\sum_{i=h+1}^m\sum_{j=1}^d\exp\left(-\frac{\pi^2j}{16\log T}\right)
        &
        \left[\text{Lemma 7.4 of \cite{brahmbhatt2025newapproachcontrollinglinear}}\right]
        \\
        &
        \le
        \frac{30\tilde{\kappa}^6\tilde{\kappa}_B\tilde{\kappa}_C^2 \tilde{W}\sqrt{m} d}{\tilde{\gamma}^{3/2}}\log^{1/4}\left(\frac{2}{\tilde{\gamma}}\right) \intop_h^\infty\exp\left(-\frac{\pi^2j}{16\log T}\right)dx
        \\
        &
        \leq
         \frac{50\tilde{\kappa}^6\tilde{\kappa}_B\tilde{\kappa}_C^2 \tilde{W}\sqrt{m} d}{\tilde{\gamma}^{3/2}}\log T\log^{1/4}\left(\frac{2}{\tilde{\gamma}}\right)\exp\left({-\frac{\pi^2h}{16\log T}}\right) \,,
         \end{align*}
         \begin{align*}
        \norm{\by_t^{M^*} - \by_t^{K,m}}  & = \norm{\sum_{i=1}^t CA^{i-1}B(\bu_{t-i}^{M^*} - \bu_{t-i}^{K,m})} & \\
        & \leq \tilde{\kappa}_B\tilde{\kappa}_C\tilde{\kappa}^2\sum_{i=1}^t(1-\tilde{\gamma})^{i-1} \norm{\bu_{t-i}^M - \bu_{t-i}^{K,m}} & [\mbox{Assumption \ref{asm:zero-stabilizable-system}}] \\
        & \leq \frac{50\tilde{\kappa}^8\tilde{\kappa}_B^2\tilde{\kappa}_C^3 \tilde{W}\sqrt{m} d}{\tilde{\gamma}^{5/2}}\log T\log^{1/4}\left(\frac{2}{\tilde{\gamma}}\right)\exp\left({-\frac{\pi^2h}{16\log T}}\right) \,.  & 
    \end{align*}
    Using the fact that $\tilde{\kappa}_C\tilde{\kappa}_B\tilde{\kappa}^2/\tilde{\gamma} > 1$, we get a uniform bound:
    \begin{align*}
        \max\left\{\norm{\by_t^{K,m} - \by_t^{M^*}}, \norm{\bu_t^{K,m} - \bu_t^{M^*}}\right\} \leq \frac{50\tilde{\kappa}^8\tilde{\kappa}_B^2\tilde{\kappa}_C^3 \tilde{W}\sqrt{m} d}{\tilde{\gamma}^{5/2}}\log T\log^{1/4}\left(\frac{2}{\tilde{\gamma}}\right)\exp\left({-\frac{\pi^2h}{16\log T}}\right)\,.
    \end{align*}
    Whenever $h \geq 2\log T \log\left(\frac{50\tilde{\kappa}_B\tilde{\kappa}_C\tilde{\kappa}^2\sqrt{m}d}{\sqrt{\tilde{\gamma}}}\log T \log\left(\frac{2}{\tilde{\gamma}}\right)\right)$, which is indeed the case for our choice of $\eps$ and $h$, this implies that $\norm{\by_t^{M^*} - \by_t^{K,m}}, \norm{\bu_t^{M^*} - \bu_t^{K,m}} \leq \frac{\tilde{\kappa}^6\tilde{\kappa}_B\tilde{\kappa}_C^2\tilde{W}}{\tilde{\gamma}^2}$. Hence, by triangle inequality $\norm{\y_t^{M^*}}, \norm{\bu_t^{M^*}} \leq \frac{4\tilde{\kappa}^6\tilde{\kappa}_B\tilde{\kappa}_C^2\tilde{W}}{\tilde{\gamma}^2}$. Thus, the sum of costs is bounded as follows:
    \begin{align*}
        \sum_{t=1}^T \left|c_t(\by_t^{M^*}, \bu_t^{M^*}) - c_t(\by_t^{K,m}, \bu_t^{K,m})\right| & \leq \frac{4\tilde{G}\tilde{\kappa}^6\tilde{\kappa}_B\tilde{\kappa}_C^2\tilde{W}}{\tilde{\gamma}^2}\sum_{t=1}^T\left(\norm{\by_t^{M^*} - \by_t^{K,m}} + \norm{\bu_t^{M^*} - \bu_t^{K,m}}\right) & \\
        & \leq \frac{400\tilde{G}\tilde{\kappa}^{14}\tilde{\kappa}_B^3\tilde{\kappa}_C^5\tilde{W}^2\sqrt{m}d}{\tilde{\gamma}^{9/2}}\log T\log^{1/4}\left(\frac{2}{\tilde{\gamma}}\right)\exp\left({-\frac{\pi^2h}{16\log T}}\right)\cdot T & \\
        & \leq \eps T\,. \quad \quad \quad \quad \quad \quad \quad \quad \quad \quad [\mbox{choice of }h]
    \end{align*}

\end{proof}

Using these lemmas, we complete the proof of Lemma \ref{lem:approx-lin-spec} as follows:
\begin{proof}[Proof of Lemma \ref{lem:approx-lin-spec}]
    The proof follows directly from Lemmas \ref{lem:approx-oloc} and \ref{lem:approx-spectral}.
\end{proof}
\section{Learning Results}\label{app:learning}

\subsection{Convexity of loss function and feasibility set}

To conclude the analysis, we first show that the feasibility set $\K$ is convex and the loss functions are convex with respect to the variables $M$. This follows since the states and the controls are linear transformations of the variables.

\begin{lemma}\label{lem:convex-set}
    The set
    $\K$ from Definition \ref{def:Kset}
is convex.
\end{lemma}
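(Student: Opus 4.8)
The plan is to exhibit $\K$ as an intersection of sublevel sets of convex functions of $M$. Unpacking Definition~\ref{def:Kset}, the set $\K$ is cut out by three families of constraints: the norm bound $\norm{M}\le\mathcal{R}_M$, the observation bounds $\norm{\by_t^M}\le\mathcal{R}$ over $t\in[T]$, and the control bounds $\norm{\bu_t^M}\le\mathcal{R}$ over $t\in[T]$. Since an arbitrary intersection of convex sets is convex, it suffices to prove that each of these constraint sets is individually convex.

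The set $\{M:\norm{M}\le\mathcal{R}_M\}$ is a norm ball and hence trivially convex, so the content of the lemma is in the observation and control constraints. The key fact — and this is the only step requiring any argument — is that $M\mapsto\bu_t(M)$ and $M\mapsto\by_t^M$ are \emph{affine} maps of $M$. For the controls, Definition~\ref{def:double-spectral-class} gives $\bu_t(M)=M_0\tilde{\y}^\nat_t+\sum_{i=1}^{\tilde h}\lambda_i^{1/4}M_i\tilde Y^\nat_{t:t-\tilde m}\boldsymbol{\varphi}_i$, and the lifted natural observations $\tilde{\y}^\nat_t$ and $\tilde Y^\nat_{t:t-\tilde m}$ are constructed solely from the natural observation sequence $\by^\nat$ of~\eqref{y_nat}, which by design is independent of the learner's parameters; hence $\bu_t(M)$ is a fixed linear function of the block entries of $M$. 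For the observations, unrolling the system dynamics from $\bx_0=0$ yields $\by_t^M=\by_t^\nat+\sum_{i=1}^{t}CA^{i-1}B\,\bu_{t-i}(M)$, so $\by_t^M$ is the sum of the $M$-independent term $\by_t^\nat$ and a linear combination of the already-established linear functions $\bu_{t-i}(M)$, hence affine in $M$.

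Given these linearity facts, the conclusion is immediate: a norm composed with an affine map is convex, so $M\mapsto\norm{\by_t^M}$ and $M\mapsto\norm{\bu_t^M}$ are convex functions of $M$, and therefore their sublevel sets $\{M:\norm{\by_t^M}\le\mathcal{R}\}$ and $\{M:\norm{\bu_t^M}\le\mathcal{R}\}$ are convex. Intersecting these over all $t\in[T]$ together with the ball $\{M:\norm{M}\le\mathcal{R}_M\}$ shows that $\K$ is convex.

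I do not anticipate any genuine obstacle; the only point demanding care is making explicit that $\by^\nat_t$ — and hence the lifted feature maps built from it — is genuinely independent of $M$, which is precisely what makes $\bu_t(M)$ linear (rather than some nonlinear function) in $M$, and that the state recursion is unrolled correctly under the standing convention $\bx_0=0$.
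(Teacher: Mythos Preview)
Your proposal is correct and follows exactly the approach of the paper: the paper's one-line proof simply asserts that $\bx_t^M$ (hence $\by_t^M$) and $\bu_t^M$ are linear in $M$, then invokes convexity of the norm, convexity of sublevel sets, and closure under intersection. You have supplied the details the paper omits (in particular, why $\by_t^\nat$ is $M$-independent and the explicit unrolling $\by_t^M=\by_t^\nat+\sum_{i=1}^{t}CA^{i-1}B\,\bu_{t-i}(M)$), but the argument is the same.
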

\begin{proof}
    Since \(\x_t^M,\bu_t^M\) are linear in \(M\), from the convexity of the norm, the fact that the sublevel sets of a convex function is convex and that the intersection of convex sets is convex, we are done.
\end{proof}
\begin{lemma}\label{lem:convex-functions}
    The loss $\ell_t(M)$ is convex in $M$.
\end{lemma}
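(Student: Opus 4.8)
The plan is to write $\ell_t$ as the composition of the convex cost $c_t$ (Assumption~\ref{assm:lipschitz}) with an \emph{affine} map $M \mapsto (\by_t(M),\bu_t(M))$, and then invoke the elementary fact that convexity is preserved under precomposition with an affine map. So the whole lemma reduces to checking that $\by_t(M)$ and $\bu_t(M)$ depend affinely on the block variable $M = (M_0,\dots,M_{\tilde{h}})$.

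First I would observe that $\bu_t(M) = M_0 \tilde{\y}^\nat_t + \sum_{i=1}^{\tilde{h}}\lambda_i^{1/4} M_i \tilde{Y}^\nat_{t:t-\tilde{m}}\boldsymbol{\varphi}_i$ (Definition~\ref{def:double-spectral-class}) is \emph{linear} in $M$: the lifted natural observations $\tilde{\y}^\nat_s$ and the stacked matrix $\tilde{Y}^\nat_{t:t-\tilde{m}}$ are built (via Line~\ref{line:spectral_lifting} / Definition~\ref{def:double-spectral-class}) only from the natural observation sequence $\{\by^\nat_s\}$ and the precomputed Hankel eigenpairs $(\sigma_j,\boldsymbol{\phi}_j)$, $(\lambda_i,\boldsymbol{\varphi}_i)$, all of which are fixed quantities not depending on $M$; indeed by construction~\eqref{y_nat} and Definition~\ref{def:memoryless-loss} the natural observations are determined by the disturbances alone. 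Hence $\bu_s(M)$ is a linear function of $M$ for every $s \le t$. Next I would unroll the dynamics: since $\bx_0 = 0$, replaying $\bu_0(M),\dots,\bu_{t-1}(M)$ yields $\bx_t(M) = \sum_{i=1}^{t} A^{i-1}\bigl(B\bu_{t-i}(M) + \bw_{t-i}\bigr)$, so
\[
\by_t(M) = C\bx_t(M) = \by_t^\nat + C\sum_{i=1}^{t} A^{i-1} B\,\bu_{t-i}(M),
\qquad \by_t^\nat := C\sum_{i=1}^{t} A^{i-1}\bw_{t-i}.
\]
The term $\by_t^\nat$ is independent of $M$, and each $\bu_{t-i}(M)$ is linear in $M$ by the previous step, so $\by_t(M)$ is affine in $M$. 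Consequently $M \mapsto (\by_t(M),\bu_t(M))$ is an affine map from $\reals^{(\tilde{h}+1)\times n\times(h+2)p}$ into $\reals^{p}\times\reals^{n}$.

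Finally I would conclude: $c_t$ is convex by Assumption~\ref{assm:lipschitz}, and the composition of a convex function with an affine map is convex, hence $\ell_t(M) = c_t(\by_t(M),\bu_t(M))$ is convex in $M$. There is no real obstacle here; the only point that deserves explicit care is verifying that every object entering the definitions of $\bu_s(M)$ and $\by_t(M)$ — the natural observations $\by^\nat_s$, the lifted features $\tilde{\y}^\nat_s$ and $\tilde{Y}^\nat_{t:t-\tilde{m}}$, the Hankel eigenpairs, and the disturbances $\bw_s$ — is a constant independent of $M$, so that no quadratic or higher-order terms in $M$ appear and the dependence is exactly affine (this is exactly the reason the natural observation sequence, rather than the raw observations, is used). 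This is the same one-line argument sketched for the convexity of the feasibility set in Lemma~\ref{lem:convex-set}.
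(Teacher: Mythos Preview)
Your proposal is correct and follows essentially the same approach as the paper: both arguments show that $\bu_t(M)$ is linear in $M$ (since the lifted natural observations $\tilde{\y}^\nat_s$ are built from quantities independent of $M$), then unroll the dynamics to write $\by_t(M) = \by_t^\nat + \sum_{q=1}^t CA^{q-1}B\,\bu_{t-q}(M)$, and conclude by composing the convex cost $c_t$ with this affine map. If anything, your version is slightly more careful in calling the map \emph{affine} rather than linear (owing to the constant term $\by_t^\nat$), but the substance is identical.
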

\begin{proof}
The loss function $\ell_t$ is given by
\(
\ell_t(M) = c_t(\bx_t(M), \bu_t(M)).
\) Since the cost $c_t$ is a convex function with respect to its arguments, we simply need to show that $\bx_t^M$ and $\bu_t^M$ depend linearly on $M$. The control is given by

\begin{align}\label{eqn:u_t^M_expr}
    \bu_t^M =
    &\Bar{M}_0\by^\nat_t+\sum_{i=1}^{\Tilde{h}}\sum_{j=1}^{\Tilde{m}}\lambda_i^{1/4}\left[\varphi_i\right]_j\Bar{M}_i\by_{t-j}^\nat
    +
    \sum_{l=0}^h\sum_{k=0}^m\sigma_l^{1/4}\left[\phi_l\right]_kM_{0l}\by_{t-k}^\nat \nonumber
    \\
    &
    \quad+
    \sum_{i=1}^{\Tilde{h}}\sum_{j=1}^{\Tilde{m}}\sum_{l=0}^h\sum_{k=0}^m
    \left(\sigma_l\lambda_i\right)^{1/4}\left[\phi_l\right]_k\left[\varphi_i\right]_jM_{il}\by_{t-j-k}^\nat \,,
\end{align}
which is a linear function of the variables. Similarly, the observation $\y_t^M$ is given by
\[
\by_t^M = \by_t^\nat + \sum_{q=1}^t CA^{q-1}B \bu_{t-q}^M \,.
\]

Thus, we have shown that $\by_t(M)$ and $\bu_t(M)$ are linear transformations of $M$. A composition of convex and linear functions is convex, which concludes our Lemma.
\end{proof}

\subsection{Lipschitzness of $\ell_t(\cdot)$}
The following lemma states and proves the explicit lipschitz constant of $\ell_t(\cdot)$.
\begin{lemma}\label{lem:lipschitz-memory}
    For any $M,M'\in\mathcal{K}$ it holds that,
    \begin{align*}
        \left|\ell_t(M) - \ell_t(M')\right| \leq \frac{32G\mathcal{R}h\Tilde{h}\sqrt{m\Tilde{m}}\kappa^4\kappa_B\kappa_C^2W}{\gamma^2}\log^{1/2}\left(\frac{2}{\gamma}\right)\norm{M-M^\prime}\,.
    \end{align*}
\end{lemma}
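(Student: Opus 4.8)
The plan is to reduce the whole bound to the \emph{exact linearity} of the maps $M\mapsto\bu_t(M)$ and $M\mapsto\by_t(M)$ and then to propagate elementary norm estimates. First I would use that $M,M'\in\K$: then $\by_t(M),\bu_t(M),\by_t(M'),\bu_t(M')$ all have norm at most $\mathcal{R}$, and every point on the segment joining $(\by_t(M),\bu_t(M))$ to $(\by_t(M'),\bu_t(M'))$ also lies in the ball of radius $\mathcal{R}$, where Assumption~\ref{assm:lipschitz} bounds the gradients of $c_t$ by $G\mathcal{R}$. Integrating the gradient along that segment gives
\[
|\ell_t(M)-\ell_t(M')|\;\le\;G\mathcal{R}\big(\|\by_t(M)-\by_t(M')\|+\|\bu_t(M)-\bu_t(M')\|\big).
\]
The crucial observation is that the natural-observation sequence $\{\by^\nat_s\}$ of~\eqref{y_nat} does not depend on the policy parameters, so the expression for $\bu_t(\cdot)$ in Definition~\ref{def:double-spectral-class} and the relation $\by_t(\cdot)=\by^\nat_t+\sum_{q\ge1}CA^{q-1}B\,\bu_{t-q}(\cdot)$ are both \emph{linear} in $M$. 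Hence, writing $\Delta=M-M'$, it suffices to bound $\|\bu_t(\Delta)\|$ and $\|\by_t(\Delta)\|$ by a constant multiple of $\|\Delta\|$, where $\|\cdot\|$ is the Frobenius norm on the parameter tensor.

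Next I would assemble the norm estimates on the doubly lifted natural signal. Unrolling~\eqref{y_nat} gives $\by^\nat_t=\sum_{i\ge1}CA^{i-1}\bw_{t-i}$, so $\|\by^\nat_t\|\le\kappa\kappa_CW/\gamma$ by Assumption~\ref{assm:bounded-system}. The Hankel eigenvalues obey $\sigma_j,\lambda_i\le\log(2/\gamma)$ (the same bound used for $\|\tilde{C}\|$ in Appendix~\ref{appendix:approx}), and the filters $\boldsymbol{\phi}_j,\boldsymbol{\varphi}_i$ are unit vectors, so $\|Y^\nat_{t:t-m}\boldsymbol{\phi}_j\|\le\sqrt{m+1}\,\max_k\|\by^\nat_{t-k}\|$ and analogously at the second level. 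Counting the $h+2$ blocks of $\tilde{\by}^\nat_t$ gives $\|\tilde{\by}^\nat_t\|\le 2\sqrt{h}\,\sqrt{m+1}\,(\kappa\kappa_CW/\gamma)\log^{1/4}(2/\gamma)$, and then $\|\tilde{Y}^\nat_{t:t-\tilde{m}}\boldsymbol{\varphi}_i\|\le\sqrt{\tilde{m}+1}\,\max_j\|\tilde{\by}^\nat_{t-j}\|$. Substituting these into $\bu_t(\Delta)=\Delta_0\tilde{\by}^\nat_t+\sum_{i=1}^{\tilde{h}}\lambda_i^{1/4}\Delta_i\tilde{Y}^\nat_{t:t-\tilde{m}}\boldsymbol{\varphi}_i$ and using Cauchy--Schwarz, $\sum_{i=0}^{\tilde{h}}\|\Delta_i\|\le\sqrt{\tilde{h}+1}\,\|\Delta\|$, bounds $\|\bu_t(\Delta)\|$ by a constant times $\sqrt{h\tilde{h}\,m\tilde{m}}\,(\kappa\kappa_CW/\gamma)\log^{1/2}(2/\gamma)\cdot\|\Delta\|$.

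Finally I would propagate through the plant: $\by_t(\Delta)=\sum_{q=1}^t CA^{q-1}B\,\bu_{t-q}(\Delta)$ together with $\|A^i\|\le\kappa(1-\gamma)^i$ gives $\|\by_t(\Delta)\|\le(\kappa\kappa_B\kappa_C/\gamma)\max_q\|\bu_q(\Delta)\|$, so $|\ell_t(M)-\ell_t(M')|\le G\mathcal{R}\,(1+\kappa\kappa_B\kappa_C/\gamma)\,\max_q\|\bu_q(\Delta)\|$. Plugging in the estimate from the previous paragraph and using $\kappa,h,\tilde{h}\ge1$ (to replace $\sqrt{h\tilde{h}}$ by $h\tilde{h}$ and to absorb the spare powers of $\kappa$) gives the stated constant $\frac{32G\mathcal{R}h\tilde{h}\sqrt{m\tilde{m}}\kappa^4\kappa_B\kappa_C^2W}{\gamma^2}\log^{1/2}(2/\gamma)$. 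I expect the only real difficulty to be bookkeeping --- keeping track of the various $\sqrt{\cdot}$ factors from the filter dimensions and of the numerical constant, so that the final expression is genuinely dominated by the claimed one (the stated bound is deliberately loose). The one conceptual ingredient is that policy-independence of $\by^\nat$ makes both maps exactly linear in $M$ rather than merely Lipschitz, which is exactly what converts $|\ell_t(M)-\ell_t(M')|$ into a bound homogeneous in $\|M-M'\|$.
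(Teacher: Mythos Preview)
Your proposal is correct and follows essentially the same approach as the paper: exploit the linearity of $M\mapsto\bu_t(M)$ and $M\mapsto\by_t(M)$ (which hinges on the policy-independence of $\by^\nat$), bound $\|\bu_t(M)-\bu_t(M')\|$ via norm estimates on the doubly filtered natural observations, propagate through $\by_t(\cdot)=\by_t^\nat+\sum_{q}CA^{q-1}B\,\bu_{t-q}(\cdot)$, and finish with the gradient bound from Assumption~\ref{assm:lipschitz}. The only cosmetic difference is that the paper first expands $\bu_t(M)$ into the quadruple sum~\eqref{eqn:u_t^M_expr} and bounds it term-by-term (yielding the $h\tilde{h}$ factor directly), whereas you bound the intermediate lifted signals $\tilde{\by}^\nat_t$ and $\tilde{Y}^\nat_{t:t-\tilde{m}}\boldsymbol{\varphi}_i$ first and then apply Cauchy--Schwarz over the $\Delta_i$'s, obtaining a slightly tighter $\sqrt{h\tilde{h}}$ that you then relax; both routes give the stated constant.
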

\begin{proof}
    Taking the difference in controls defined in eqn. \eqref{eqn:u_t^M_expr}:
    \begin{align*}
        \norm{\bu_t(M) - \bu_t(M')} & \le \frac{16h\Tilde{h}\sqrt{m\Tilde{m}}\kappa^2\kappa_CW}{\gamma}\log^{1/2}\left(\frac{2}{\gamma}\right)\norm{M-M^\prime}
    \end{align*}
    Taking the difference in the observations, 
    \begin{align*}
        \norm{\by_t(M) - \by_t(M')} & = \norm{\sum_{q=1}^t CA^{q-1}B (\bu_{t-q}(M) - \bu_{t-q}(M'))} \\
        & \leq \frac{\kappa_B\kappa_C\kappa^2}{\gamma} \max_{q \in [t]} \left\{\norm{\bu_{t-q}(M) - \bu_{t-q}(M')}\right\} \\
        & \leq \frac{16h\Tilde{h}\sqrt{m\Tilde{m}}\kappa^4\kappa_B\kappa_C^2W}{\gamma^2}\log^{1/2}\left(\frac{2}{\gamma}\right)\norm{M-M^\prime}
    \end{align*}
    Using the fact that $\kappa_B\kappa_C\kappa^2/\gamma > 1$, we get a uniform bound:
    \begin{align*}
        \max\left\{\norm{\by_t(M) - \by_t(M')}, \norm{\bu_t(M) - \bu_t(M')}\right\} \leq \frac{16h\Tilde{h}\sqrt{m\Tilde{m}}\kappa^4\kappa_B\kappa_C^2W}{\gamma^2}\log^{1/2}\left(\frac{2}{\gamma}\right)\norm{M-M^\prime}\,.
    \end{align*}
    Using the lipschizness of the cost function from Assumption \ref{assm:lipschitz}, the definition of $\K$, we have
    \begin{align*}
        \left|\ell_t(M) - \ell_t(M')\right| & = \left|c_t(\by_t(M), \bu_t(M)) - c_t(\by_t(M'), \bu_t(M'))\right| \\
        & \leq G\mathcal{R}\left(\norm{\bx_t(M) - \bx_t(M')} + \norm{\bu_t(M) - \bu_t(M')}\right) \\
        & \leq \frac{32G\mathcal{R}h\Tilde{h}\sqrt{m\Tilde{m}}\kappa^4\kappa_B\kappa_C^2W}{\gamma^2}\log^{1/2}\left(\frac{2}{\gamma}\right)\norm{M-M^\prime}\,.
    \end{align*}
    
\end{proof}

\subsection{Loss functions with memory}\label{appendix:memory}

The actual loss $c_t$ at time $t$ is not calculated on $\bx_t(M^t)$, but rather on the true state $\bx_t$, which in turn depends on different parameters $M^i$ for various historical times $i < t$. Nevertheless, $c_t(\bx_t, \bu_t)$ is well approximated by $\ell_t(M^t)$, as stated in Lemma \ref{lem:memoryless-enough} and proven next.

\noindent \textit{Proof of Lemma \ref{lem:memoryless-enough}}:
By the choice of step size $\eta$, and by the computation of the lipschitz constant of $\ell_t$ w.r.t $M$ in Lemma \ref{lem:lipschitz-memory}, we have: 
\[
\eta = \frac{\mathcal{R}_M}{L\sqrt{T}}\,,
\]
where $L$ is the lipschitz constant of $\ell_t$ w.r.t $M$, computed in Lemma \ref{lem:lipschitz-memory}. Thus, for each $j \in [h]$,
\[
\|M^t - M^{t-i}\| \leq \sum_{s=t-i+1}^{t} \|M^s - M^{s-1}\| \leq i \eta L = \frac{i\mathcal{R}_M}{\sqrt{T}}\,.
\]
Observe that $\bu_t = \bu_t(M^t)$. Observe that $\y_t$ and $\by_t(M^t)$ can be written as
\[
\y_{t}(M^t) = \sum_{q=1}^t CA^{q-1}B \bu_{t-q}(M^t)\,, \quad \quad \y_{t} = \sum_{q=1}^t CA^{q-1}B \bu_{t-q}(M^{t-q})\,.
\]
Evaluating the difference,
\begin{align*}
\|\by_{t} - \by_t(M^t)\| & \leq \norm{\sum_{q=1}^t CA^{q-1}B (\bu_{t-q}(M^t) - \bu_{t-q}(M^{t-q}))} \\
& \leq \kappa_B\kappa_C\kappa^2\sum_{q=1}^t(1-\gamma)^{q-1} \norm{\bu_{t-q}(M^t) - \bu_{t-q}(M^{t-q})} \\
& \leq \frac{16h\Tilde{h}\sqrt{m\Tilde{m}}\kappa^4\kappa_B\kappa_C^2W}{\gamma}\log^{1/2}\left(\frac{2}{\gamma}\right)\sum_{q=1}^t(1-\gamma)^{q-1}\norm{M^t - M^{t-q}} \\
& \leq \frac{16\mathcal{R}_Mh\Tilde{h}\sqrt{m\Tilde{m}}\kappa^4\kappa_B\kappa_C^2W}{\gamma\sqrt{T}}\log^{1/2}\left(\frac{2}{\gamma}\right)\sum_{q=1}^tq(1-\gamma)^{q-1} \\
& \leq \frac{16\mathcal{R}_Mh\Tilde{h}\sqrt{m\Tilde{m}}\kappa^4\kappa_B\kappa_C^2W}{\gamma^3\sqrt{T}}\log^{1/2}\left(\frac{2}{\gamma}\right)\,.
\end{align*}
By definition, $\ell_t(M^t) = c_t(\by_t(M^t), \bu_t(M^t))$, and by the definition of \(\K\) and the projection used in Algorithm \ref{alg:mainA} we have by Assumption \ref{assm:lipschitz}:
\begin{align*}
    \left|\ell_t(M^t) - c_t(\by_t, \bu_t)\right| & = \left|c_t(\by_t(M^t), \bu_t(M^t)) - c_t(\by_t, \bu_t)\right|  \\
    & \leq G\mathcal{R}\|\by_t(M^t) - \by_t\| \\
    & \leq \frac{16G\mathcal{R}\mathcal{R}_Mh\Tilde{h}\sqrt{m\Tilde{m}}\kappa^4\kappa_B\kappa_C^2W}{\gamma^3\sqrt{T}}\log^{1/2}\left(\frac{2}{\gamma}\right)\,.
\end{align*}
\hfill $\square$
\section{Experiments}
\label{app:experiments}

We present a series of synthetic experiments designed to evaluate the performance of the DSC, as specified in Algorithm \ref{alg:mainA}. We compare DSC against the gradient response controller (GRC)~\cite{simchowitz2020improperlearningnonstochasticcontrol} and the linear quadratic gaussian controller (LQG)~\cite{boyd2009lecture10}. We analyze the performance of these three controllers under the following system settings: a linear signal, where the initial state $\x_0$ is randomly sampled from the Gaussian distribution; and a signal with the \texttt{ReLU} state transition. For each signal type, we evaluate the performance of controllers under two types of perturbations: (i) Gaussian noise, and (ii) sinusoidal disturbances; and provide 95\% confidence intervals for each setting.
  
For each experiment, we consider an LDS with a hidden state dimension of $d=10$, observation dimension of $p=3$, and a control dimension of $n=2$. The system matrix $A \in \mathbb{R}^{d \times d}$ is diagonalizable, with the largest eigenvalue being $0.8$, ensuring marginal stability. The control matrix $B \in \mathbb{R}^{d \times n}$ consists of Normally distributed entries. For the described settings, we use $h = \Tilde{h} = 5$ filters and $m = \Tilde{m} = 10 $ memory for both controllers.


Each figure reports the average quadratic loss computed over a sliding window, with its size set to 10\% of the sequence length. Hyperparameters are selected to reflect representative and stable performance, though the experimental setup generalizes to higher-dimensional systems and alternative perturbation models.

The empirical results presented in Figures \ref{fig:gauss_lin_ci} demonstrate that the DSC controller outperforms the GRC controller, while LQG remains optimal for the setting with Gaussian perturbations. Furthermore, under sinusoidal perturbations, DSC maintains a performance advantage over both LQG and GRC, as illustrated in Figure \ref{fig:sinusoid_lin_ci}. Similar conclusions on the performance of the DSC controller hold for the experiments with \texttt{ReLU} state transition as shown in Figures \ref{fig:gauss_relu_ci} and \ref{fig:sin_relu_ci}. The confidence intervals further substantiate that the performance gains of DSC over GRC are consistent and statistically robust across random system initializations.

\begin{figure}[H]
    \centering
    \begin{subfigure}[b]{0.45\textwidth}
        \centering
        \includegraphics[width=\linewidth]{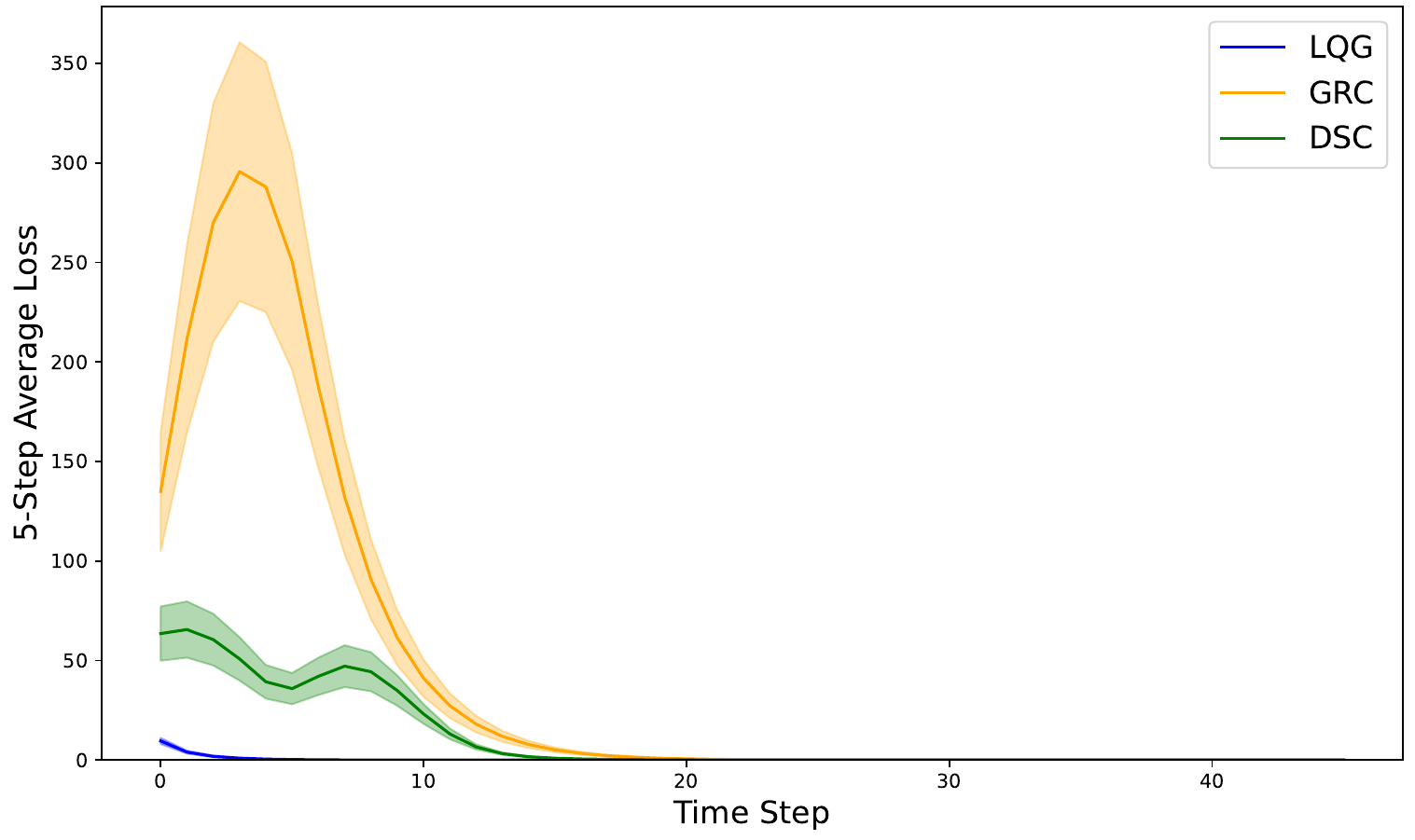}
        \caption{Linear Signal with Gaussian Noise}
        \label{fig:gauss_lin_ci}
    \end{subfigure}
    \hfill
    \begin{subfigure}[b]{0.45\textwidth}
        \centering
        \includegraphics[width=\linewidth]{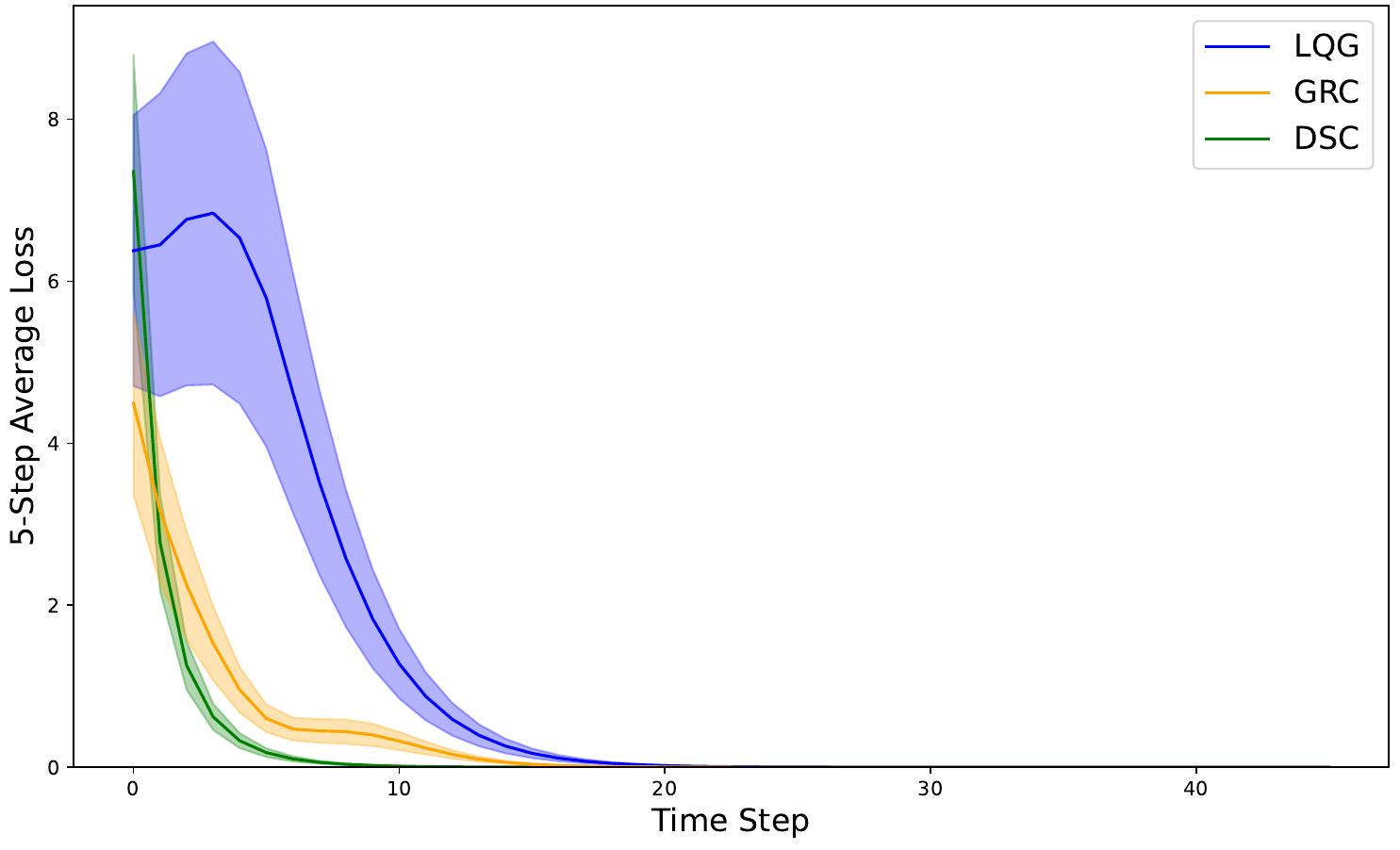}
        \caption{Linear Signal with Sinusoidal Noise}
        \label{fig:sinusoid_lin_ci}
    \end{subfigure}
    \hfill
    \begin{subfigure}[b]{0.45\textwidth}
        \centering
        \includegraphics[width=\linewidth]{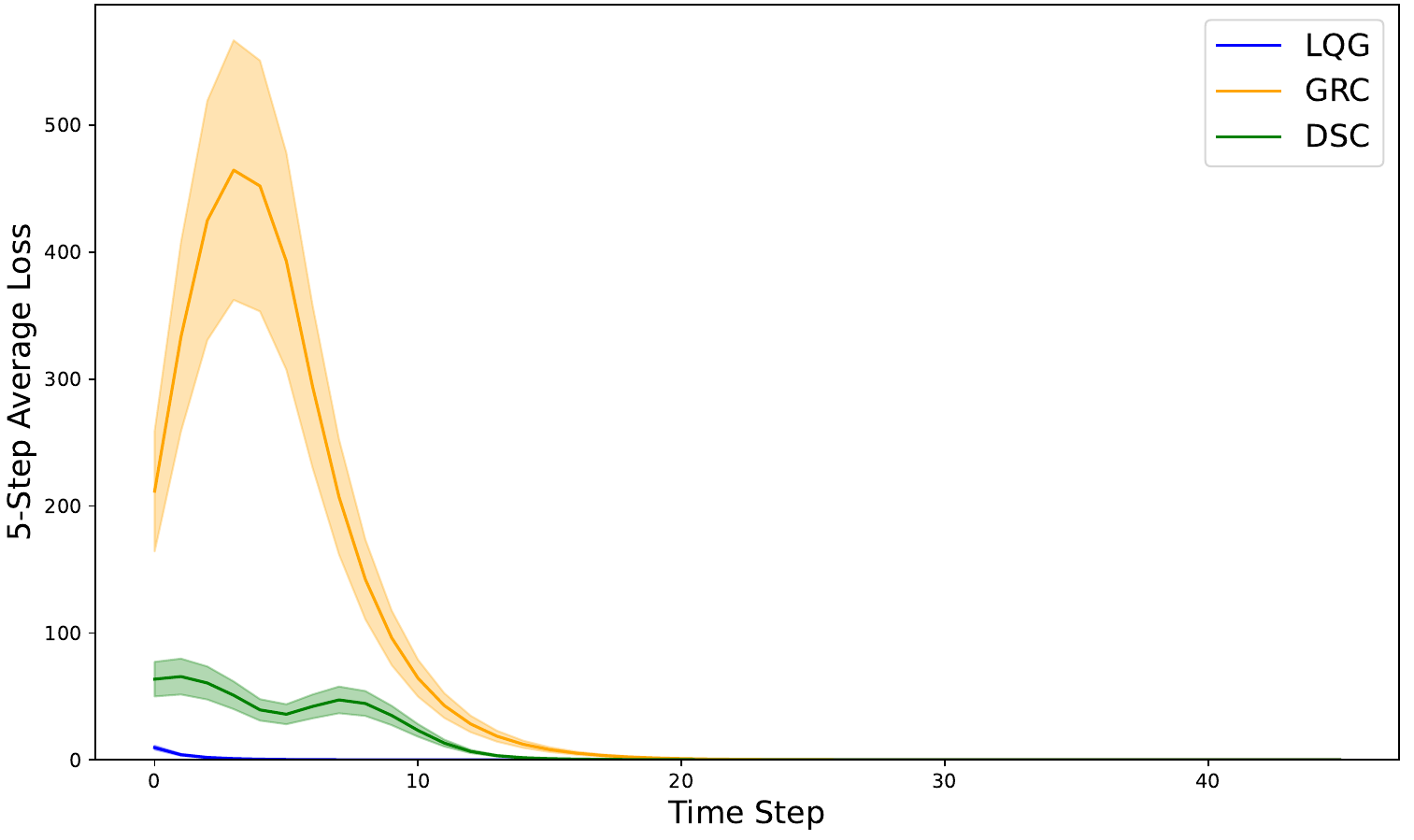}
        \caption{ReLU State Transition with Gaussian Noise}
        \label{fig:gauss_relu_ci}
    \end{subfigure}
     \hfill
    \begin{subfigure}[b]{0.45\textwidth}
        \centering
        \includegraphics[width=\linewidth]{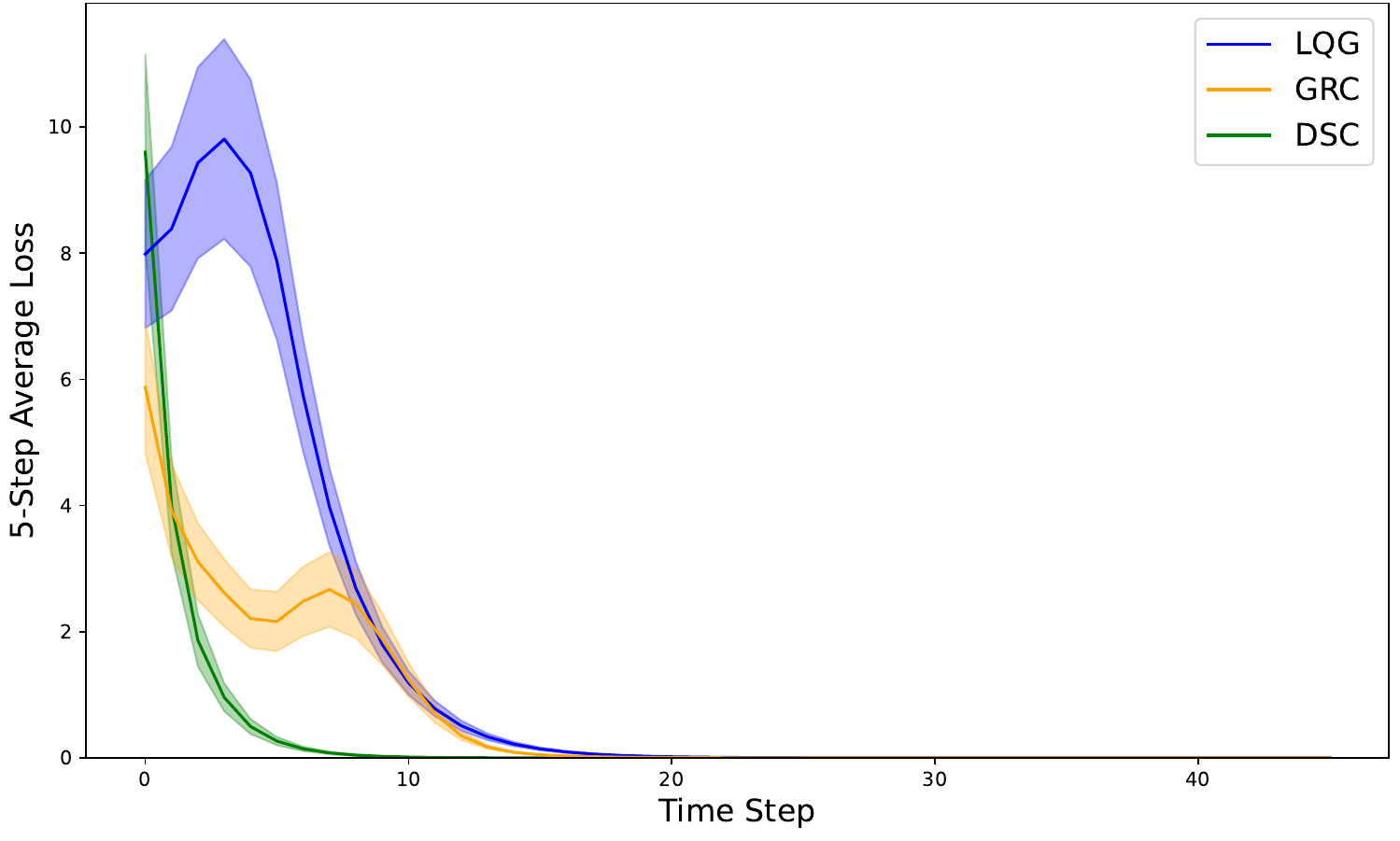}
        \caption{ReLU State Transition with Sinusoidal Noise}
        \label{fig:sin_relu_ci}
    \end{subfigure}
    \caption{Comparison of Controllers: LQG, GRC, and DSC with 95\% Confidence Intervals over 100 trials under Different Input Signal and Perturbation Settings.}
    \label{fig:main_ci}
\end{figure}

\end{document}